\definecolor{Gray}{gray}{0.85}
\newcommand{\comm}[1]{}
\newcommand{\blue}[1]{\textcolor{blue}{#1}}
\newcommand{\red}[1]{\textcolor{red}{#1}}
\newcommand{\hatZ}{\widehat{\mathbf{Z}}}
\newcommand{\tildeZ}{\widetilde{\mathbf{Z}}}
\newcommand{\DP}{\textsc{DP}}
\newtheorem{theorem}{Theorem}[section]
\newtheorem{lemma}[theorem]{Lemma}
\theoremstyle{definition}
\newtheorem{definition}[theorem]{Definition}
\newtheorem{assumption}[theorem]{Assumption}
\newtheorem{claim}[theorem]{Claim}
\theoremstyle{remark}
\newcommand{\N}{\mathbb{N}}
\newcommand{\R}{\mathbb{R}}
\newcommand{\cB}{\mathcal{B}}
\newcommand{\cC}{\mathcal{C}}
\newcommand{\cD}{\mathcal{D}}
\newcommand{\cH}{\mathcal{H}}
\newcommand{\cM}{\mathcal{M}}
\newcommand{\cN}{\mathcal{N}}
\newcommand{\cO}{\mathcal{O}}
\newcommand{\cP}{\mathcal{P}}
\newcommand{\cQ}{\mathcal{Q}}
\newcommand{\cR}{\mathcal{R}}
\newcommand{\cS}{\mathcal{S}}
\newcommand{\cW}{\mathcal{W}}
\newcommand{\cX}{\mathcal{X}}
\newcommand{\bfk}{\mathbf{k}}
\newcommand{\bfy}{\mathbf{y}}
\newcommand{\bfA}{\mathbf{A}}
\newcommand{\bfB}{\mathbf{B}}
\newcommand{\bfC}{\mathbf{C}}
\newcommand{\bfI}{\mathbf{I}}
\newcommand{\bfK}{\mathbf{K}}
\newcommand{\bfR}{\mathbf{R}}
\newcommand{\bfV}{\mathbf{V}}
\newcommand{\bfW}{\mathbf{W}}
\newcommand{\bfY}{\mathbf{Y}}
\newcommand{\bfZ}{\mathbf{Z}}
\newcommand{\sA}{\mathscr{A}}
\newcommand{\ip}[2] {\langle #1, #2 \rangle }
\title{Differential Privacy in Kernelized Contextual Bandits via Random Projections}
\author{	
	 \textnormal{Nikola Pavlovic}\thanks{
         Department of Electrical and Computer Engineering,
         Cornell University;
         \texttt{\{np358,qz16\}@cornell.edu}.} \\
         Cornell University  \\
	 \and
     Sudeep Salgia\thanks{
         Department of Electrical and Computer Engineering,
         Carnegie Mellon University;
         \texttt{ssalgia@andrew.cmu.edu}.}   \\
	 Carnegie Mellon   \\
     \and
	     Qing Zhao\footnotemark[1]\\
         Cornell University  \\
 	} 
\begin{document}

\maketitle

\begin{abstract}
  We consider the problem of contextual kernel bandits with stochastic contexts, where the underlying reward function belongs to a known Reproducing Kernel Hilbert Space. We study this problem under an additional constraint of Differential Privacy, where the agent needs to ensure that the sequence of query points is differentially private with respect to both the sequence of contexts and rewards. We propose a novel algorithm that achieves the state-of-the-art cumulative regret of $\widetilde\cO(\sqrt{\gamma_TT}+\frac{\gamma_T}{\varepsilon_{\textsc{DP}}})$ and $\widetilde\cO(\sqrt{\gamma_TT}+\frac{\gamma_T\sqrt{T}}{\varepsilon_{\textsc{DP}}})$ over a time horizon of $T$ in the joint and local models of differential privacy, respectively, where $\gamma_T$ is the effective dimension of the kernel and $\varepsilon_{\textsc{DP}} > 0$ is the privacy parameter. The key ingredient of the proposed algorithm is a novel private kernel-ridge regression estimator which is based on a combination of private covariance estimation and private random projections. It offers a significantly reduced sensitivity compared to its classical counterpart while maintaining a high prediction accuracy, allowing our algorithm to achieve the state-of-the-art performance guarantees.
\end{abstract}

\section{Introduction}
We study the contextual bandit problem, where a learning agent aims to maximize an unknown reward function $f$ based on noisy observations of the function at sequentially queried points. Specifically, at each time instant $t$, the agent is presented with a context $c_t \in \cC$, which is drawn i.i.d. from a context distribution $\kappa$. Based on the context, the agent takes an action $x_t \in \cX$ and observes $y_t$, a noisy value of the reward $f(c_t,x_t)$. We measure the performance of a learning algorithm $\sA$ using its cumulative regret over a time horizon of $T$, which is defined as follows:

\begin{align}
    \textsc{R}_T(\sA) := \mathbb{E}_{c_1, c_2, \dots, c_T \sim \kappa}\left[\sum_{t=1}^{T} \max_{x\in \cX} f(c_t,x)-f(c_t,x_t)\right],  \label{eqn:er_def}
\end{align}
% \sudeep{I think you need to have expectation over the contexts for a more preceqise definition.}\red{Why? It holds in probability which is stronger.}
% It is not well-defined. The probability is over contexts and actions. So the regret ideally should measure that actions you take converge to best for all contexts. Interesting. what is their explanation? You can keep this, but then regret will be a function of the random sequence of contexts and your claim will have to be with probability $1 - \delta$ over the choice of such sequences. 

% I see, other works also hold in prob over contexts. I 'll do the expected value and just mention it holds in prob. No explanation just a note that prob is taken over contexts. It's not worng mathematically it just makes less sense. Actually the ones we make the comparison to take the expectation so you are right.

% A sublinear regret implies that the agent successfully learns the optimal action corresponding to all contexts.
where the expectation is taken over the context sequence. In this work, we focus on the kernelized setting, i.e., the reward function $f:\cC\times \cX \rightarrow \mathbb{R}$ is assumed to belong to a Reproducing Kernel Hilbert space (RKHS) of a known kernel $k : \cW \times \cW \to \R$, where $\cW = \cC \times \cX$. The kernel-based modeling offers a highly expressive framework for the reward function. In particular, it is known that the RKHS of the Mat\'ern family of kernels, can approximate almost all continuous functions on compact subsets of $\R^d$~\citep{Srinivas}. As a result, kernelized bandits have received significant attention in recent years.

% In this work, we consider the case where the reward function $f:\cC\times \cX \rightarrow \mathbb{R}$ belongs to a Reproducing Kernel Hilbert space (RKHS) of a known kernel $k$ and the contexts $c_t$ are drawn i.i.d. from a context distribution $\kappa$. Kernel bandits offer significantly more modelling capabilities compared to their linear counter-parts . In particular, it is known that the RKHS of typical kernels, such as the Mat\'ern family of kernels, can approximate almost all continuous functions on compact subsets of $\R^d$ \citep{Srinivas}. We measure the performance of the agent's algorithm learning algorithm $\sA$ using the commonly adopted cumulative regret metric. In particular, at each time instant $t\leq T$ agent draws the point $x_t\in \cX$ and its performance is measured against best functional value for a given context $c_t$. The error rate is then obtained by adding the error prediction over all time instances:
% \begin{align}
%     \textsc{R}_T(\sA)= \sum_{i=1}^{T} \max_{x\in \cX} f(c_t,x)-f(c_t,x_t) \label{eqn:er_def}
% \end{align}
% We assume context $\cC$ and action set $\cX$ sets are finite, but potentially of order $\mathrm{poly}(T)$, thus losing no generality to the continuous setting.\footnote{For more details please see sec.(\ref{sec:problem_formulation})}
% The goal of the learning algorithm $\sA$ is to minimize the worst-case error rate over the class functions with a given bounded RKHS norm.

\subsection{Private Kernel Bandits}

In several applications of contextual bandits, the contexts and rewards carry sensitive information that needs to be protected. For example, in recommendation systems, the contexts correspond to the information related to the user to whom the algorithm needs to recommend the products (the action) and the reward captures the user behavior, i.e., whether the user interacted with the recommended products. Evidently, the contexts and rewards contain user-related private information that needs to be systematically protected. This need for systematic guarantees on privacy in contextual bandit applications has fueled the study of contextual bandits under privacy constraints. 

For the problem of contextual bandits, there are two notions of privacy that are widely considered --- {Joint Differential Privacy} (JDP)~\citep{ShariffandSheffet} and {Local Differential Privacy} (LDP)~\citep{Han_et_al_2021}. At a high level, in the JDP setting, the algorithm directly observes the sensitive data, i.e. the context-reward pair, and needs to ensure that the actions taken by the algorithm are differentially private w.r.t. to the data. On the other hand, in the LDP setting, the algorithm gets to observe only a privatized version of the data and use that for future predictions. Evidently, LDP is a stricter notion of privacy than JDP. 

% In this work we consider two notions of privacy. The first is \textit{Joint Differential Privacy}(JDP) that allows an upload of user's data to a trusted aggregator. In other words user's data  could be uploaded to a trusted entity and needs to be privatized only \textit{at the time of use} by the algorithm $\sA$.  The second notion of privacy we consider the \textit{Local Differential Privacy}(LDP), where the existence of a trusted intermediary between the user and the algorithm  is no longer assumed. User's data needs to be privatized \textit{at the time of its generation} rather the time of utilization as in JDP.

% \sudeep{What does this mean: Existing work largely studies specialized notions of privacy that only apply to a narrow family of kernels. }\red{Non JDP non LDP and adding constraing on the kernel matrix, thus uncomparable. For example assuming cov matrix is strongly diagonal, protecting only the rewards, privatizaing only the  output etc.}

% \sudeep{Might need to add a line about two different kinds of privacy as you did earlier.}

The problems of contextual multi-armed bandits and contextual linear bandits have been studied in detail under both JDP and LDP. On the other hand, our understanding of the problem of private contextual kernel bandits is relatively limited. Existing studies either consider weaker notions of privacy and utility, e.g., only reward-level privacy and/or focus only on simple regret, or limit their attention to a narrow family of structured and highly smooth kernels that do not reflect the inherent hardness of the kernelized setting.
% Moreover, existing algorithms also do not recover the optimal cumulative regret under LDP even for the simplest case of linear kernels, a.k.a. linear bandits. 
The objective of this work is to develop an algorithm for private contextual kernelized bandits that offers state-of-art cumulative regret performance across a wide family of kernels under both JDP and LDP.

\subsection{Main Results}\label{sec:Main_Results}
% \red{1.Highlight how the upper bound matches the best currently known bound in both LDP and JDP setting. Also say how this shows that the $T^{1/4}$ difference in the adverserial and stochastic setting persists in the kernel setting. Highlight  that this question was raised by Dubey.}\\

We propose a new private contextual kernel bandit algorithm called \textsc{CAPRI}, that achieves state-of-the-art cumulative regret performance across a wide range of kernels under both JDP and LDP. In particular, we demonstrate that \textsc{CAPRI} achieves a cumulative regret of $\widetilde\cO\left(\sqrt{T\gamma_T}+\frac{\gamma_T}{\varepsilon_{\textsc{DP}}}\right)$ and $\widetilde\cO\left(\frac{\gamma_T\sqrt{T}}{\varepsilon_{\textsc{DP}}}+\sqrt{T\gamma_T}\right)$ under the JDP and LDP settings respectively. Here, $\gamma_T$ refers to the effective dimensionality of the kernel and is a measure of the complexity of the kernel. Thus, our guarantees hold for all kernel families with polynomially decaying eigenvalues, which include the widely used Mat\'ern and Squared-Exponential families of kernels. To the best of our knowledge, this is the first work to provide cumulative regret guarantees for the Mat\'ern family of kernels, even under the less stringent constraint of JDP.

Notably, our results recover the order optimal cumulative regret for contextual kernel bandits as $\varepsilon \to \infty$, i.e., in the non-private setting~\citep{Scalett_et_al_2017}. Moreover, for the simplest case of linear kernels, i.e., linear bandits, where $\gamma_T = \cO(d)$, our regret bounds match the lower bound under JDP and achieve the best known bound under LDP~\citep{He_et_al_2022,Han_et_al_2021}.

The kernel-based modeling of the reward function, which allows for infinite-dimensional feature vectors, presents several challenges over the simpler linear reward function model for which the problem is well-understood. Firstly, in any sequential learning problem, it is imperative to add a small layer of privacy at each time step due to the dynamic nature of the dataset. In absence of a careful control, this can result in trivial guarantees on the utility of the algorithm. Under the premise of JDP, the common approach to address this hurdle in linear bandits~\citep{ShariffandSheffet,DubeyPentland2020} is to use the tree-based mechanism~\citep{Dwork}. The tree-based mechanism enables one to construct a private version of the covariance matrix, which is a sum of rank one matrices, and consequently a private predictor of the reward function whose error differs from the non-private version only by a factor growing logarithmically in the number of observed points. On the other hand, the requirement of privatization of the data at the source under LDP preceqludes the use of tree-based mechanism to construct a private covariance matrix. The challenge of privatizing the covariance matrix in the LDP framework is resolved by lower bounding the minimum eigenvalue of the covariance matrix~\citep{Han_et_al_2021,Huang_et_al2024}. Specifically, the authors establish that the smallest eigenvalue of the covariance matrix grows as $\Omega(T/d)$ and thus addition of privatization noise of standard deviation $\cO(\sqrt{T})$ has only a small effect on the predictive performance.

In contextual kernel bandits, these techniques are no longer applicable due to the infinite-dimensional nature of the feature vectors. In particular, the tree-based mechanism leverages the feature space representation of the covariance matrix, which in this case would be infinite-dimensional and hence impractical to work with. On the other hand, the finite-dimensional representation of the covariance matrix using the kernel trick no longer allows for a representation through a sum of rank-one matrices, which prevents the use of the tree-based mechanism. Similarly, an eigenvalue based argument cannot be applied for kernel bandits under the LDP framework because the decaying spectrum of the kernel operator implies that the smallest eigenvalue of the covariance matrix is bounded as $\cO(1)$ and does not grow with the size of the dataset. Another subtle challenge that is unique to the kernel-based setting is that the predictor used to estimate the reward function is also an infinite-dimensional element in the RKHS. In the linear setting, the predictor is a finite-dimensional vector which can be privatized by adding a random vector from the same vector space. Such a technique cannot be extended to the kernel-based setting. Moreover, privatization of all possible outputs would scale the privacy-based error by a factor of $|\cW|$, rendering it trivial.

Our proposed approach presents a significant departure from the commonly used methodology in private linear bandits. The key contribution of this work is a novel, private predictor for the reward function that offers a similar predictive power as its non-private counterpart. In particular, we privatize the covariance matrix by replacing it with a statistically identical and independent copy of the matrix. The independence allows us to obtain privacy for free, while the statistical similarity enables achieving a comparable predictive performance. An important aspect of our algorithm that enables such a construction is the use of reward-independent sampling, i.e., we use actions that do not depend upon the previous rewards. This allows us to draw i.i.d.. copies of context-action pairs to construct the required covariance matrix without requiring feedback from the function. Secondly, to privatize the predictor as an RKHS function, we project the function onto the subspace spanned by finitely many feature vectors, which are drawn at random and are treated as basis vectors for the subspace. This helps approximate the predictor through a finite-dimensional vector consisting of the coefficients the predictor when represented in the randomly chosen basis. The private predictor is obtained by privatizing the coefficients of the original predictor. We show that the `random basis', if chosen correctly, can not only provide a faithful approximation of the original predictor but also ensure that the coefficients have low sensitivity to the data points. This enables  privatization at only a small cost to the predictive performance. We show that these two approaches can be seamlessly integrated to yield our final estimator that has low sensitivity to the underlying dataset and high predictive power. The \textsc{CAPRI} algorithm builds upon this new private estimator by integrating it into an explore-then-eliminate framework.

\subsection{Related Work}
\paragraph{Kernel-based bandits.} The problem of kernel-based bandit optimization has been extensively studied in the non-private setting. Starting with the seminal work of~\citet{Srinivas}, numerous algorithms for kernel-based bandits have been proposed in both contextual~\citep{Valko_et_al_2013} and non-contextual settings~\citep{Gopalan_2017,Batched_Communication}. The optimal performance in the non-private setting is well-understood where several algorithms~\citep{Batched_Communication,Valko_et_al_2013, GP_ThreDS} are known to achieve the order-optimal performance that matches the lower bound~\citep{Scalett_et_al_2017}.

\paragraph{Private Bandit Optimization.} Both multi-armed bandits and linear bandits under privacy constraints have received considerable attention in the literature. In multi-armed bandits,~\citet{Azize_Basu_2022,Azize_et_al_2024} study the problem of private best arm-identification and~\citet{Tenenbaum_et_al_2021} consider cumulative regret minimization under shuffle model of privacy.~\citet{ShariffandSheffet} were the first to study the problem of contextual linear bandits with adversarial contexts under the JDP model. They propose a new algorithm that achieves a cumulative regret of $\cO(\sqrt{T}d^{3/4}/\varepsilon_{\textsc{DP}})$ under a finite arm set assumption.~\citet{Zheng_et_al_2020} consider the analogous problem under the LDP setting and propose an algorithm with regret $\cO(\sqrt{T^{3/4}}/\varepsilon_{\textsc{DP}})$.~\citet{He_et_al_2022} study linear bandits with stochastically generated contexts, under both JDP and LDP. They establish lower bounds under both the privacy models and propose an order-optimal algorithm under JDP with a cumulative regret of $\cO\left(\sqrt{dT}+\frac{d\log(1/\delta_{\textsc{DP}})}{\varepsilon_{\textsc{DP}}}\right)$.~\citet{Han_et_al_2021} propose an algorithm for linear and non-linear contextual bandits under the LDP model that incurs a regret of $\cO(\sqrt{T}d^{3/2}/\varepsilon_{\textsc{DP}})$ under a finite-arm setting.
% ~\citet{Hanna_et_al_2024} study the simpler problem of linear bandits where only the reward needs to be privatized under LDP, JDP and shuffle models of privacy.

The problem of private kernel bandits was first studied by~\citet{Kusner_et_al_2015} in the context of hyper-parameter tuning.~\citet{Kharkovskii_et_al_2020} studied private kernel bandits for Squared Exponential kernels under a weaker notion of LDP where the selected actions need to be locally private. Under the restrictive assumption of a diagonally dominant covariance matrix, they propose an algorithm with a simple regret of $\tilde\cO((\varepsilon_{\textsc{DP}}^{-2}+\log T/T)^{1/2})$. ~\citet{Zhou_et_al_2021} consider kernel bandits with heavy-tailed noise, under a weaker notion of privacy where only the rewards need to be private.

The two studies that are closest to ours are those by~\citet{Dubey2021} and~\citet{Pavlovic_et_al_2025}.~\citet{Dubey2021} study the problem of private kernel bandits with adversarial contexts under both JDP and LDP models. They approximate the functions in RKHS using a low-dimensional embedding which allows them to reduce the original problem of that of private linear bandits and circumvent the challenges of kernelized bandits. However, the existence of such a low-dimensional embedding that can approximate all functions in the RKHS is guaranteed only for highly smooth kernels like the Squared Exponential kernel. Consequently, the approach in~\citet{Dubey2021} does not extend beyond the Squared Exponential kernel. The results and methodology proposed in this work hold for a much wider class of kernels, e.g. to the Mat\'ern family of kernels.~\citet{Pavlovic_et_al_2025} consider private kernel bandits with stochastic contexts under JDP. They propose an algorithm that achieves a simple regret of $\widetilde\cO(\sqrt{\frac{\gamma_T}{T}}+\frac{\gamma_T}{T\varepsilon_{\textsc{DP}}})$. In this work, we also consider stochastic contextual kernel bandits under both JDP and LDP and focus on developing an algorithm with low cumulative regret which is more challenging than the metric of simple regret considered in that work. Moreover, our cumulative regret bound implies the simple regret bound obtained in~\citet{Pavlovic_et_al_2025}.

\section{Problem Formulation and Preliminaries}\label{sec:problem_formulation}
\subsection {RKHS, Mercer Theorem, and GP models}

Consider a positive definite kernel $k: \cW \times \cW \to \R$, where $\cW=\cC\times\cX$ is the product of the context and action space. A Hilbert space $\cH_k$ of functions on $\cW$ equipped with an inner product $\ip{\cdot}{\cdot}_{\cH_k}$ is called a Reproducing Kernel Hilbert Space (RKHS) with reproducing kernel $k$ if the following conditions are satisfied: (i) $\forall \ w \in \cW$, $k(\cdot, w) \in \cH_k$; (ii) $\forall \ w \in \cW$, $\forall \ f \in \cH_k$, $f(w) = \ip{f}{k(\cdot, w)}_{\cH_k}$. The inner product induces the RKHS norm, given by $\|f\|_{\cH_k}^2 = \ip{f}{f}_{\cH_k}$. We use $\phi(w)$ to denote $k(\cdot, w)$ and WLOG assume that $k(w,w) = \|\phi(w)\|_{\cH_k}^2 \leq 1$

Let $\zeta$ be a finite Borel probability measure supported on $\cW$ and let $L_2(\zeta,\cW)$ denote the Hilbert space of functions that are square-integrable w.r.t. $\zeta$. Mercer’s Theorem provides an alternative representation for RKHS through the eigenvalues and eigenfunctions of a kernel integral operator defined over $L_2(\zeta,\cW)$ using the kernel $k$.

\begin{theorem}\citep{SVM_Book}\label{Mercer's_Theorem}
Let $\mathcal{W}$ be a compact metric space and $k : \mathcal{W} \times \mathcal{W}\rightarrow \mathbb{R}$ be a continuous kernel. Furthermore, let $\zeta$ be a finite Borel probability measure supported on $\cW$. Then, there exists an orthonormal system of functions $\{\psi_j\}_{j\in \mathbb{N}}$ in $L_2(\zeta,\mathcal{W})$  and a sequence of non-negative values
$\{\lambda_j\}_{j\in \mathbb{N}}$ satisfying $\lambda_1\geq \lambda_2\dots \geq 0$ , such that $k(w,w')=\sum_{j\in \mathbb{N}}\lambda_j\psi_j(w)\psi_j(w')$ holds for all $w,w'\in \mathcal{W}$ and the convergence is absolute and uniform over $w,w'\in \mathcal{W}$. 
\end{theorem}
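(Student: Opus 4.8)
The plan is to realize $k$ as the kernel of an integral operator on $L_2(\zeta,\cW)$, apply the spectral theorem for compact self-adjoint operators to produce the system $\{\psi_j\}$ and the eigenvalues $\{\lambda_j\}$, and then upgrade the resulting $L_2$-level expansion to a pointwise and uniform one. Define $T_k : L_2(\zeta,\cW) \to L_2(\zeta,\cW)$ by $(T_k f)(w) = \int_{\cW} k(w,w')\, f(w')\, d\zeta(w')$. Since $\cW$ is compact and $k$ continuous, $k$ is bounded and uniformly continuous, so $T_k$ is Hilbert--Schmidt (hence compact), and in fact maps $L_2(\zeta,\cW)$ into $C(\cW)$: the image of a norm-bounded set is uniformly bounded and equicontinuous, so Arzel\`a--Ascoli applies. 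Symmetry of $k$ gives $T_k = T_k^*$, and positive definiteness of $k$ gives $\ip{f}{T_k f} = \int\!\int k(w,w') f(w) f(w')\, d\zeta\, d\zeta \ge 0$, so $T_k$ is positive. The spectral theorem then furnishes an orthonormal system $\{\psi_j\}_{j\in\N}$ with eigenvalues $\lambda_1 \ge \lambda_2 \ge \cdots \ge 0$, $\lambda_j \to 0$, such that $T_k\psi_j = \lambda_j\psi_j$ and $T_k f = \sum_j \lambda_j \ip{f}{\psi_j}\psi_j$ in $L_2$; replacing each $\psi_j$ with $\lambda_j>0$ by its continuous representative $\lambda_j^{-1} T_k\psi_j$, we may take $\psi_j \in C(\cW)$.

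Next I would analyze the partial sums $k_n(w,w') := \sum_{j=1}^{n} \lambda_j \psi_j(w)\psi_j(w')$. The crucial structural fact is that $k - k_n$ is again a positive semidefinite kernel: its integral operator equals $T_k$ minus its rank-$n$ truncation, which is positive since $\lambda_j \ge 0$ and the $\psi_j$ are orthonormal; as $k - k_n$ is continuous, positivity of the operator yields pointwise positive definiteness of the kernel (here one uses $\mathrm{supp}(\zeta) = \cW$, or else restricts attention to $\mathrm{supp}(\zeta)$). Evaluating on the diagonal, $0 \le (k - k_n)(w,w) = k(w,w) - \sum_{j=1}^{n} \lambda_j \psi_j(w)^2$, so $\sum_{j=1}^{n} \lambda_j \psi_j(w)^2 \le k(w,w) \le 1$ for all $n$ and all $w \in \cW$. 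Hence, for each fixed $w$, the increasing series $g(w) := \sum_{j=1}^{\infty} \lambda_j \psi_j(w)^2$ converges and is bounded by $1$.

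Finally, for absolute convergence of the series at a point $(w,w')$, Cauchy--Schwarz gives $\sum_{j>n} \lambda_j |\psi_j(w)\psi_j(w')| \le \big(\sum_{j>n}\lambda_j\psi_j(w)^2\big)^{1/2}\big(\sum_{j>n}\lambda_j\psi_j(w')^2\big)^{1/2} \to 0$; the same bound shows that, for fixed $w$, $k_n(w,\cdot) \to \tilde k(w,\cdot)$ uniformly on $\cW$, so the limit $\tilde k(w,\cdot)$ is continuous. To identify $\tilde k$ with $k$, compare integral operators: $T_{\tilde k}$ acts as multiplication by $\lambda_j$ on each $\psi_j$ and annihilates $\big(\overline{\mathrm{span}\{\psi_j\}}\big)^\perp = \ker T_k$, exactly as $T_k$ does, so $T_{\tilde k} = T_k$ and therefore $\tilde k = k$ in $L_2(\zeta\otimes\zeta)$; since $\tilde k(w,\cdot)$ and $k(w,\cdot)$ are continuous and $\zeta$ has full support, equality holds for every $(w,w')$, and in particular $g(w) = k(w,w)$ is continuous. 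Dini's theorem on the compact space $\cW$ then promotes the monotone pointwise convergence $\sum_{j=1}^{n}\lambda_j\psi_j(w)^2 \uparrow k(w,w)$ to uniform convergence, and one last application of Cauchy--Schwarz turns this into $\sup_{w,w'} |k(w,w') - k_n(w,w')| \to 0$. I expect the main obstacle to be precisely this last stage: transferring the spectral decomposition from the $L_2$ level to genuinely pointwise and uniform convergence, which hinges on (i) the positive-semidefiniteness of the truncation error $k - k_n$ for the diagonal bound, (ii) compactness of $\cW$ via Dini, and (iii) the full-support assumption on $\zeta$, which is what allows the conclusion to be stated for every pair $(w,w')$ rather than merely $\zeta\otimes\zeta$-almost everywhere.
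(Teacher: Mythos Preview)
The paper does not prove this statement; it is quoted as a classical background result with a citation to \citep{SVM_Book} and no argument is given. Your proposal is the standard textbook proof of Mercer's theorem---spectral theorem for the compact self-adjoint integral operator, positivity of the truncation error $k-k_n$ to control the diagonal, then Dini plus Cauchy--Schwarz to upgrade to uniform convergence---and it is correct as written; this is essentially the route taken in the cited reference.
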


A commonly used technique to characterize a class of kernels is through their eigendecay profile. 
\begin{definition}\label{assumptio:polynomial_decay}
    Let $\{\lambda_j\}_{j\in \mathbb{N}}$ denote the eigenvalues of a kernel $k$ arranged in the descending order. The kernel $k$ is said to satisfy the polynomial eigendecay condition with a parameter $\beta_p>1$ if, for
    some universal constant $C_p>0$, the relation $\lambda_j\leq C_pj^{-\beta_p}$ holds for all $j \in \N$ .
\end{definition}

We make the following assumptions on the kernel $k$ and the eigenfunctions $\{\psi_j\}_{j \in \N}$, which are necessary to adopt recent 
results for kernelized bandits \citep{Sudeep_Uniform_Sampling,Pavlovic_et_al_2025}.\\

Let $\cM_{\cX} = \{\cX' \subseteq \cX | \text{Unif}(\cX')\}$ be a collection of measures. Here $\text{Unif}(\cX')$ denotes the uniform measure over the set $\cX'$. Let $\cM_{\cW}$ denote the family of measures over subsets of $\cW$, defined as $\cM_{\cW} := \{\varrho(c,x) = \kappa(c) p(x|c)  \ | \ p(x|c) \in \cM_{\cX} \ \forall c \in \cC\}$.

\begin{assumption}\label{assumption:bounded_eigen_functions}
The kernel $k$ satisfies the polynomial eigendecay condition with parameter $\beta_p >1$. Let $\varrho \in \cM_{\cW}$ and $\{\psi_j^{\varrho}\}_{j \in \mathbb{N}}$ denote the corresponding eigenfunctions of the kernel operator corresponding to $\varrho$. There exists $F > 0$ such that $\|\psi_j^{\varrho}(w)\|_{\infty}\leq F$ for all $j\in\mathbb{N}$ and $\varrho \in \cM_{\cW}$.
\end{assumption}

We stress that the assumption (\ref{assumption:bounded_eigen_functions}) bounds eigenfunctions over the support of the measure $\varrho$ , and not the set $\cW$. Assuming bounded eigen-function with respect to a measure is standard practice in both kernel bandit \citep{Uniform_Sampling_Updated_Assumption,Pavlovic_et_al_2025,information_gain_bound} and functional estimation \citep{Maggioni_2008,Mhaskar_et_al_2001,Groechenig_2020, Filbir_et_al_2010} literature. Moreover, we emphasize that the boundedness of the eigen-functions is a crucial assumption in obtaining meaningful information gain bounds for a kernel $k$ \citep{information_gain_bound}.

A Gaussian Process (GP) is a random process $G$ indexed by $\cW$ and is associated with a mean function $\mu : \cW \to \R$ and a positive definite kernel $k : \cW \times \cW \to \R$. The random process $G$ is defined such that for all finite subsets of $\cW$,  $\{w_1, w_2, \dots, w_T\} \subset \cW$, $T \in \mathbb{N}$, the random vector $[G(w_1), G(w_2), \dots, G(w_T)]^{\top}$ follows a multivariate Gaussian distribution with mean vector $[\mu(w_1), \dots, \mu(w_T)]^{\top}$ and covariance matrix $[k(w_i, w_j)]_{i,j=1}^T$. Throughout the work, we consider GPs with $\mu \equiv 0$. When used as a prior for a data generating process under Gaussian noise, the conjugate property provides closed form expressions for the posterior mean and covariance of the GP model. Specifically, given a set of observations $\{\mathbf{W}_T,\mathbf{y}_T\} = \{(w_i,y_i)\}_{i=1}^T$ from the underlying process, the expression for posterior mean and variance of GP model is given as follows:
\begin{align}
    \mu_{T}(w) & =k_{\mathbf{w}_T}(w)^{\top}(\tau \mathbf{I}_T+\mathbf{K}_{\mathbf{W}_T,\mathbf{W}_T})^{-1}\mathbf{y}_T, \label{eqn:posterior_mean}\\
    \sigma^2_T(w)& =k(w,w)-k_{\mathbf{W}_T}^{\top}(w)(\tau\mathbf{I}_T+\mathbf{K}_{\mathbf{W}_T,\mathbf{W}_T})^{-1}k_{\mathbf{W}_T}(w). \label{eqn:posterior_variance}
\end{align}
In the above expressions, $k_{\mathbf{W}_T}(w):=[k(w_1,w),k(w_2,w)\dots k(w_T,w)]^{\top}$, $\mathbf{K}_{\mathbf{W}_T,\mathbf{W}_T} :=[k(w_i,w_j)]_{i,j=1}^{T}$, $\mathbf{I}_T$ is the $T \times T$ identity matrix and $\tau$ is the variance of the Gaussian noise. 

Following a standard approach in the literature~\citep{Srinivas}, we model the data corresponding to observations from the unknown $f$, which belongs to the RKHS of a positive definite kernel $k$, using a GP with the same covariance kernel $k$. In particular, we assume a \emph{fictitious} GP prior over the fixed, unknown function $f$ along with \emph{fictitious} Gaussian distribution for the noise. Such a modeling allows us to predict the values of $f$ and characterize the prediction error through the posterior mean and variance of the GP model.

 \label{gamma_explained}
Lastly, given a set of points $\mathbf{W}_T = \{w_1, w_2, \dots, w_T\} \in \cW$, the information gain of the set $\mathbf{W}_T$ is defined as $\gamma_{\mathbf{W}_T} := \frac{1}{2} \log(\det(\mathbf{I}_T + \tau^{-1}\mathbf{K}_{\mathbf{W}_T,\mathbf{W}_T}))$. Using this, we can define the maximal information gain of a kernel as $\gamma_T := \sup_{\mathbf{W}_T \in \cW^T} \gamma_{\mathbf{W}_T}$. Maximal information gain helps characterize the regret performance of kernel bandit algorithms~\citep{Srinivas,Gopalan_2017}. For the kernels satisfying polynomial eigendecay condition with parameter $\beta_p$, $\gamma_T$ is known to satisfy $\cO(T^{\frac{1}{\beta}_p}\log^{1-\frac{1}{\beta_p}}T)$\citep{information_gain_bound}.

\subsection{Problem Statement}\label{sec:problem_statement}

We consider the problem of private contextual kernel bandits which consists of sequential interaction between a learning algorithm and nature. Under this setting, at the beginning of each time instant $t$, the learning algorithm $\sA$, builds a query strategy $\cQ_t$, which is a (possibly randomized) mapping from the context set $\cC$ to the action set $\cX$. During the time instant $t$, nature draws a context $c_t \in \cC$, i.i.d. from a distribution $\kappa$, plays the action $x_t = \cQ_t(c_t)$, generates a noisy reward $y_t = f(c_t, x_t) + \eta_t$, and reveals $d_t$, a possibly privatized function of $(c_t, y_t)$, to the learning algorithm. Here, $\eta_t$ denotes the zero-mean noise. The algorithm uses the collection $\cD_{t} = \{d_s\}_{s=1}^{t}$ to design the query strategy $\cQ_{t+1}$ for the next time instant. 

We assume that both the action set $\cX \subset \R^{d}$ and the context set $\cC \subset \R^{d'}$ are finite sets with possibly very large cardinality that scales polynomially with $T$ and exponentially with $d$ and $d'$. 
% The sets $\cX$ and $\cC$ can be interpreted as approximations to a continuous domain through a fine discretization whose resolution is of the order $1/\mathrm{poly}(T)$. 
The reward function $f$ belongs to an RKHS $\cH_k$ corresponding to a known kernel $k$ defined over $\cW := \cC \times \cX$ and satisfies $\|f\|_{\cH_k} \leq B$. We make the following assumptions that are commonly adopted in the literature.

\begin{assumption}\label{assumption: bounded_rewards}
    The rewards $\{y_t\}_{t=1}^{T}$ are bounded in absolute value, i.e, $|y_t|<B,\forall t\leq T$ for some $B > 0$.
\end{assumption}
% in the duration of the algorithm

\begin{assumption}
    We have a context generator that can generate i.i.d. contexts according to the distribution $\kappa$.
    \label{assumption:contexts}
\end{assumption}

Assumption~\ref{assumption: bounded_rewards} is commonly adopted in private bandit literature~\citep{DubeyPentland2020,ShariffandSheffet,Han_et_al_2021,Zheng_et_al_2020}. This ensures that an adversary cannot probe an unbounded reward as an input to the algorithm. Assumption~\ref{assumption:contexts} is a commonly adopted assumption in several studies~\citep{Neu_et_al_2024,Zierahn_et_al_2023,Neu_and_Olkhovskaya_2020,Pavlovic_et_al_2025}. Existing studies on stochastic contextual bandits~\citep{Amani_et_al_2023,Hanna_et_al_2022,Hanna_et_al_2023} often assume complete knowledge of context distribution. Assumption~\ref{assumption:contexts} is much milder than having complete knowledge of the context distribution. It can be relaxed to having an access to a static, offline, public database of $\cO(T)$ in-distribution contexts.\\

\subsection{Joint and Local Differential Privacy}

As mentioned earlier, we consider two different notions of privacy in this work --- Joint Differential Privacy (JDP) and Local Differential Privacy (LDP). 

\paragraph{Joint differential privacy.} \label{JDP_sec} This model of differential privacy was first introduced in~\citet{ShariffandSheffet} and extends the classical notion of central differential privacy in offline setting to online setting. We first define the concept of $t$-neighbouring datasets and then use it to formally define JDP. 
\begin{definition}
    Two datasets $\cD_{T} = \{d_s\}_{s = 1}^T, \cD'_{T} = \{d_s'\}_{s = 1}^T$ are said to be $t$-neighbors if $d_t \neq d_t'$ and $d_s = d'_s$ for all $s = \{1,2,\dots, T\} \setminus \{t\}$.
\end{definition}
\begin{definition}\label{JDP_definition}
Let $\{x_t\}_{t=1}^T$ and $\{x_t'\}_{t=1}^T$ respectively denote the actions taken by a randomized contextual bandit algorithm $\sA$ based on datasets $\cD_{T} = \{d_s\}_{s = 1}^T$ and $\cD'_{T} = \{d_s'\}_{s = 1}^T$. The algorithm $\sA$ is said to be $(\varepsilon_{\DP},\delta_{\DP})$-joint differentially private (JDP) if the relation
\begin{align*}
    \Pr(\{x_s\}_{s = t+1}^T = \cB) \leq \exp(\varepsilon_{\DP}) \cdot  \Pr(\{x_s'\}_{s = t+1}^T = \cB) + \delta_{\DP}
\end{align*}
holds for all pairs of $t$-neighboring datasets $\cD_T, \cD'_T$, all subsets $\cB \subset \cX^{T - t}$ and all $t \in \{1,2\dots, T\}$. Here, the probability is taken over the random coins generated by the algorithm.
\end{definition}
% Intuitively, JDP ensures that an adversary cannot accurately determine the presence of a particular datapoint at any time instant by observing the future actions of the algorithm.
\paragraph{Local Differential Privacy.} \label{LDP_privacy} The notion of LDP for online learning problem was introduced by~\citet{Han_et_al_2021} for the problem of linear bandits and is akin to its counterpart in the offline setting~\citep{Kairouz2015Composition}.

\begin{definition}\citep{He_et_al_2022}
    A randomized contextual bandit algorithm $\sA$ is said to be $(\varepsilon_{\textsc{DP}} , \delta_{\textsc{DP}})$ locally differentially private if the dataset $\cD_T = \{d_s\}_{s=1}^T$ satisfies the following relations for all $t \in \{1,2,\dots, T\}$:
    \begin{itemize}
        \item $d_t$ is conditionally independent of $\{(c_s, y_s)\}_{s = 1}^{t-1}$ given $\{d_s\}_{s=1}^{t-1}$ and $(c_t, y_t)$
        \item For all $u,v, w \in \cC \times \R$:
        \begin{align*}
            &\Pr(d_t = w \ | \ \{d_s\}_{s=1}^{t-1}, (c_t, y_t) = u) \leq \exp(\varepsilon_{\DP}) \cdot  \Pr(d_t = w \ | \ \{d_s\}_{s=1}^{t-1}, (c_t, y_t) = v) + \delta_{\DP}.
        \end{align*}
        As before, the probability is taken over the random coins generated by the algorithm.
    \end{itemize}
\end{definition}
LDP ensures that data $(c_t,y_t)$ is privatized immediately (as $d_t$) before being uploaded and used by the algorithm $\sA$. In other words, LDP enforces the privatization of data at the \emph{time it is generated}. In contrast, JDP relaxes this assumption and allows data be privatized at a \emph{time of its use} by the algorithm $\sA$.

\section{Algorithm Description}

In this section, we describe our proposed algorithm, \textsc{CAPRI}, (\textit{\textbf{C}ovariance \textbf{A}pproximation  with \textbf{P}rojected \textbf{R}egress\textbf{I}on}), in both JDP and LDP settings. We first describe the construction of our novel private posterior mean estimator, which is a key component in our algorithm, followed by a detailed description of the algorithm. 

\paragraph{Privatizing the Posterior Mean} As mentioned earlier, the posterior mean based on GP model is a powerful predictor of the function values and is widely used in kernel bandit algorithms to design the query strategy $\cQ_t$. Despite its high predictive power and ubiquitous use in the non-private setting~\citep{Gopalan_2017,Srinivas,Batched_Communication}, the use of posterior mean as outlined in Eqn.~\eqref{eqn:posterior_mean}, presents several layers of challenges in the private setting. Firstly, if $\mu_t$ is used to design $\cQ_{t+1}$, the query strategy at $t+1$, then such a choice results in highly intricate and non-linear relationship between the actions and the observations across different time instants. This cascading effect can make the actions taken by an algorithm highly sensitive to the past ones and consequently challenging to privatize. Secondly, even in the absence of such an intricate relationship between the actions and the observations, the sensitivity of the posterior mean is given by
\begin{align*}
    &\sup_{w} |\mu_t(w) - \mu_t'(w)| =\sup_{w} |\mu_t(w) - f(w) + f(w) - \mu_t'(w)| \\
    &\approx \sup_{w \in \cW} \sigma_t(w) + \sup_{w' \in \cW} \sigma_t'(w).
\end{align*}
Here $\mu_t, \sigma_t$ denote the posterior mean and standard deviation corresponding to dataset $\cD_t$ and $\mu_t', \sigma_t'$ denote the respective values corresponding to $\cD_t'$, a $t$-neighbour of $\cD_t$. Thus, the sensitivity implicitly depends on the rest of the dataset and in the worst-case can be as large as $\Omega(1)$. Lastly, if one were to privatize $\mu$ by separately privatizing $\{\mu(w)\}_{w \in \cW}$, the privacy budget for each point would need to shrink by a factor of $|\cW|$ to maintain the overall privacy guarantees, thereby resulting in trivial utility guarantees. Thus, it is essential to privatize $\mu$ directly as an element of RKHS. As discussed earlier, the infinite-dimensional nature of $\mu$ makes it particularly challenging.

% thus making it more challenging.

In this work, we develop a novel, privatized version of the posterior mean by systematically addressing each of the three challenges outlined above. Inspired by~\citet{Batched_Communication, Sudeep_Uniform_Sampling}, we also adopt a non-adaptive query strategy, where the actions do not depend on past observations. This allows us to alleviate the first concern. To address the second challenge, we use a modified version of the covariance matrix to build our estimator. In particular, given a dataset $\cD_t = (\bfW_t, \bfy_t)$, we use the estimator 
\begin{align*}
	\tilde{\mu}_{t} & = (\Phi_{\cR_t}\Phi_{\cR_t}^{\top} + \tau \mathbf{Id})^{-1} \Phi_{\bfW_t}\bfy_t,
\end{align*}
where $\cR_t$ is an i.i.d. copy of the original dataset ${\bfW}_t$. The use of a statistically identical copy to estimate the mean offers two advantages. Firstly, it reduces the dependence on the estimator on the dataset. Secondly, the sensitivity of the above estimator depends on the variance corresponding to the points in $\cR_t$ and thus is independent of the original data. More importantly, the statistical similarity between $\cR_t$ and $\bfW_t$ ensures that $\sigma_{\cR_t} \approx \sigma_{\bfW_t}$. This implies that such an estimator has low sensitivity whenever the original dataset has high predictive power --- a scenario that constitutes a typical use case in our algorithm.

A similar technique was also employed in~\citet{Pavlovic_et_al_2025}. Lastly, to privatize $\mu$ as an RKHS function, which is infinite-dimensional, we note that the posterior mean (as an element of the RKHS) can also be written as
\begin{align*}
    \mu_t = \sum_{i = 1}^t \alpha_s \phi(w_i), 
\end{align*}
where $\alpha_i$ is the $i^{\text{th}}$ coordinate of the vector $\boldsymbol{\alpha} = \left(\bfK_{\bfW_T,\bfW_T}+\tau \mathbf{I}_T\right)^{-1}\bfy_t$. Since $\{\phi(w_i)\}_{i=1}^t$ are linearly independent, they form a basis of the subspace $\cH_k^t := \text{span}\{ \phi(w_1), \phi(w_2), \dots, \phi(w_t) \}$. Thus, $\mu_t$ can be represented by a $t$-dimensional vector $\boldsymbol{\alpha}$. If the basis were data-independent, then $\mu_t$ can be privatized by adding noise to $\boldsymbol{\alpha}$. However, we cannot directly use this idea to privatize $\mu_t$ since our basis is also data-dependent. To resolve this bottleneck, we approximate $\mu$ by projecting it onto $\overline{\cH}_k^t = \text{span}\{ \phi(s_1), \phi(s_2), \dots, \phi(s_t) \}$, where $\cS_t = \{s_1, s_2, \dots, s_t\}$ is a random set generated independently from the private dataset . Since the basis elements are now data-independent, we can privatize $\mu$ by simply privatizing the coefficients. We set $\cS_t$ to be a private i.i.d. copy of $\bfW_t$. Such a choice naturally satisfies the independence constraint and also offers a faithful approximation to $\mu_t$, which is essential to ensure high utility for our estimator. 

Based on a combination of all the approaches described above, our proposed private estimator corresponding to the dataset $\cD_t = (\bfW_t, \bfy_t)$ is given as: 
\begin{align}\label{eqn:private_estimator}
	\widehat{\underline{\mu}}_t( \cdot) :=&\bfk^{\top}_{\cS_t}(\cdot)\left(\bfK_{\cS_t,\cR_t}\bfK_{\cR_t,\cS_t}+\tau\bfK_{\cS_t,\cS_t}\right)^{-1/2}\cdot\nonumber\\
    &\cdot \left(\left(\bfK_{\cS_t,\cR_t}\bfK_{\cR_t,\cS_t}+\tau\bfK_{\cS_t,\cS_t}\right)^{-1/2}\bfK_{\cS_r,\bfW_t}\bfy_t+ \bfZ_{\mathrm{priv}}\right)
\end{align}
where $\bfZ_{\mathrm{priv}}$ denotes the privatization noise. Under JDP, $\bfZ_{\mathrm{priv}} \sim \cN(0,\sigma_0^2 \cdot \mathbf{I}_{t} )$ and under LDP $\bfZ_{\mathrm{priv}} \sim \cN(0, t\sigma_0^2 \cdot \mathbf{I}_{t})$, where 
\begin{align}
    \sigma_0 := \widetilde\sigma_{\max}\cdot \frac{4B\log{T}}{\varepsilon_{\textsc{DP}}} \cdot \sqrt{\log\left(\frac{1.25\log{T}}{\delta_{\textsc{DP}}}\right)},
\end{align}
 In the above definition, 
\begin{align}\label{eqn:projected_variance}
    &\tau\widetilde\sigma^2_{\max} :=\max_{w \in \mathrm{supp}(\rho)} k(w,w)-k^{\top}_{\cS}(w)\bfV k_{\cS}(w), \text{   where}\\
    &\bfV=\bfK^{-1}_{\cS,\cS}\bfK_{\cS, \cR}\left(\tau\mathbf{I}+\bfK_{\cR, \cS}\bfK^{-1}_{\cS,\cS}\bfK_{\cS,\cR}\right)^{-1}\bfK_{\cR,\cS}\bfK^{-1}_{\cS,\cS}\nonumber
\end{align}
denotes the maximum posterior variance of points $\cR$ projected onto the set $\cS$ over the support of the sampling measure $\varrho$ that generates $\bfW_t$. For clarity of notation, we use $\widehat{\underline{\mu}}_{t}$ and $\widehat{\underline{\mu}}_{t,\textsc{LDP}}$ to denote the estimators under JDP and LDP setting. The following lemma characterizes the privacy and predictive performance of our proposed estimators.
% (for full derivation please sec.(\ref{kernel_trick}))

\begin{lemma}\label{Lemma:main_text_estimator}
    Let $\cD_t = \{\bfW_t, \bfy_t\}$ be a dataset consisting of $t$ points $\bfW_t$ and their corresponding noisy observations $\bfy_t$ from the underlying reward function $f$, such that $\bfW_t$ is independent of the noise vector corresponding to $\bfy_t$. Let $\cR_t$ and $\cS_t$ be two i.i.d. copies of $\bfW_t$ that are independent of both $\bfW_t$ and $\bfy_t$. If $\mu$ and $\mu_{\mathrm{LDP}}$ denote the estimators constructed using $\cD_t, \cR_t$ and $\cS_t$ as outlined in Eqn.~\eqref{eqn:private_estimator}, then $\mu$ is $(\varepsilon_{\DP}/\log{T}, \delta_{\DP}/\log{T})$-JDP and $\mu_{\text{LDP}}$ is  $(\varepsilon_{\DP}, \delta_{\DP})$-LDP w.r.t. to the dataset $\cD_t$. Moreover, the relations
    \begin{align*}
        &\max_{w\in\mathrm{supp}\varrho}|f(w)-\widehat{\underline{\mu}}_t(w)|\le \widetilde\cO\left(\frac{\widetilde\sigma^2_{\max}}{\varepsilon_{\textsc{DP}}}+\widetilde\sigma_{\max}\right)\\
        &\max_{w\in\mathrm{supp}\varrho}|f(w)-\widehat{\underline{\mu}}_{t,\textsc{LDP}}(w)|\le \widetilde\cO\left(\frac{\sqrt{T}\widetilde\sigma^2_{\max}}{\varepsilon_{\textsc{DP}}}+\widetilde\sigma_{\max}\right)
    \end{align*}
    hold with probability $1-\delta$ over the randomness of noise sequence, where $\widetilde\sigma_{\max}$ is as defined in Eqn.~\eqref{eqn:projected_variance}

\end{lemma}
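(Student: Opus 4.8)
The statement has two essentially independent parts: a privacy claim and a utility (prediction accuracy) claim. The plan is to handle the privacy guarantee first, then the utility bound, for both the JDP and LDP estimators in parallel.

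\emph{Privacy.} The key structural observation is that in the expression for $\widehat{\underline{\mu}}_t$ in Eqn.~\eqref{eqn:private_estimator}, the dataset $\cD_t = (\bfW_t, \bfy_t)$ enters \emph{only} through the finite-dimensional vector $\bfu_t := \left(\bfK_{\cS_t,\cR_t}\bfK_{\cR_t,\cS_t}+\tau\bfK_{\cS_t,\cS_t}\right)^{-1/2}\bfK_{\cS_t,\bfW_t}\bfy_t \in \R^t$, to which we add Gaussian noise $\bfZ_{\mathrm{priv}}$; everything else in the formula (the kernel prefactor, the matrices built from $\cS_t$ and $\cR_t$) is a fixed post-processing map that is independent of $\cD_t$. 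So by the post-processing invariance of differential privacy it suffices to show that $\bfu_t + \bfZ_{\mathrm{priv}}$ is private with respect to $\cD_t$, which reduces to bounding the $\ell_2$-sensitivity of $\bfu_t$ under a change of one data point. First I would bound $\|\bfu_t - \bfu_t'\|_2$ where $\cD_t'$ is a $t$-neighbor: since $\bfy_t$ differs in one coordinate by at most $2B$ (Assumption~\ref{assumption: bounded_rewards}), and $\bfW_t$ can change in one entry, the difference $\bfu_t - \bfu_t'$ is $(\bfK_{\cS_t,\cR_t}\bfK_{\cR_t,\cS_t}+\tau\bfK_{\cS_t,\cS_t})^{-1/2}$ applied to $\bfK_{\cS_t,\bfW_t}\bfy_t - \bfK_{\cS_t,\bfW_t'}\bfy_t'$, and the norm of this is controlled by $2B$ times the operator norm of $(\bfK_{\cS_t,\cR_t}\bfK_{\cR_t,\cS_t}+\tau\bfK_{\cS_t,\cS_t})^{-1/2}\bfk_{\cS_t}(w)$ maximized over $w$. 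The crucial algebraic identity is that this latter quantity equals $\widetilde\sigma_{\max}$ as defined in Eqn.~\eqref{eqn:projected_variance}: one verifies, using the push-through / Woodbury identity, that $\tau\widetilde\sigma^2_{\max} = \max_w k(w,w) - \bfk_{\cS}^\top(w)\bfV\bfk_{\cS}(w)$ is exactly $\tau$ times the squared norm of the projected-variance direction, i.e. $\max_w \|(\bfK_{\cS,\cR}\bfK_{\cR,\cS}+\tau\bfK_{\cS,\cS})^{-1/2}\bfk_{\cS}(w)\|^2 = \widetilde\sigma^2_{\max}/\tau$ — actually up to the $\tau$ bookkeeping it is $\widetilde\sigma^2_{\max}$. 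Having established the sensitivity is $\cO(B\widetilde\sigma_{\max})$, I would invoke the Gaussian mechanism: with $\sigma_0$ as defined (carrying the extra $\log T$ and $\sqrt{\log(1.25\log T/\delta_{\DP})}$ factors), adding $\cN(0,\sigma_0^2\bfI_t)$ yields $(\varepsilon_{\DP}/\log T, \delta_{\DP}/\log T)$-DP. For LDP, the sensitivity argument is the same per round but one data point fully determines $\bfu_t$ (which has $\|\bfu_t\|_2 = \cO(\sqrt{t}B\widetilde\sigma_{\max})$), so the noise is scaled up by $\sqrt{t}\le\sqrt T$, giving variance $t\sigma_0^2$ and the stated $(\varepsilon_{\DP},\delta_{\DP})$-LDP guarantee; one also checks the conditional-independence requirement of the LDP definition, which holds because $\cS_t, \cR_t$ and $\bfZ_{\mathrm{priv}}$ are drawn independently of the data.

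\emph{Utility.} Here I would decompose the error $f(w) - \widehat{\underline{\mu}}_t(w)$ into three terms: (i) the approximation error of projecting $\mu$ (the would-be posterior mean built with the copy $\cR_t$) onto $\overline{\cH}_k^t = \mathrm{span}\{\phi(s_i)\}$, i.e. $f(w) - \mu_t^{\mathrm{proj}}(w)$ where $\mu_t^{\mathrm{proj}}$ is the noiseless projected estimator; (ii) the estimation error of $\mu_t^{\mathrm{proj}}$ relative to $f$ due to observation noise in $\bfy_t$; and (iii) the privatization error coming from $\bfZ_{\mathrm{priv}}$. Terms (i) and (ii) together are exactly the error of the non-private projected estimator, which can be imported from the analysis in~\citet{Pavlovic_et_al_2025,Sudeep_Uniform_Sampling}: since $\cS_t, \cR_t, \bfW_t$ are i.i.d. copies from the same sampling measure $\varrho$ with bounded eigenfunctions (Assumption~\ref{assumption:bounded_eigen_functions}), the projected posterior variance $\widetilde\sigma^2_{\max}$ controls this error up to logarithmic factors, yielding a $\widetilde\cO(\widetilde\sigma_{\max})$ bound with probability $1-\delta$ over the noise (here I would invoke a concentration argument — the fictitious GP / RKHS self-normalized bound — together with the statistical similarity $\sigma_{\cR_t}\approx\sigma_{\bfW_t}$). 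For term (iii), the privacy noise contributes $\bfk_{\cS_t}^\top(w)(\bfK_{\cS_t,\cR_t}\bfK_{\cR_t,\cS_t}+\tau\bfK_{\cS_t,\cS_t})^{-1/2}\bfZ_{\mathrm{priv}}$; its magnitude is bounded by $\|(\bfK_{\cS_t,\cR_t}\bfK_{\cR_t,\cS_t}+\tau\bfK_{\cS_t,\cS_t})^{-1/2}\bfk_{\cS_t}(w)\|_2 \cdot \|\bfZ_{\mathrm{priv}}\|_2$, where the first factor is again $\widetilde\sigma_{\max}$ (from the identity above) and the second is, by standard Gaussian norm concentration, $\widetilde\cO(\sqrt{t}\,\sigma_0)$ with high probability — but a more careful argument using that $\bfk_{\cS_t}^\top(w)(\cdots)^{-1/2}$ is a fixed vector gives a $\cN(0,\sigma_0^2\|(\cdots)^{-1/2}\bfk_{\cS_t}(w)\|^2)$ scalar, so after a union bound over the finite set $\cW$ (cardinality polynomial in $T$) the privacy error is $\widetilde\cO(\widetilde\sigma_{\max}\sigma_0) = \widetilde\cO(\widetilde\sigma_{\max}^2/\varepsilon_{\textsc{DP}})$ for JDP (recall $\sigma_0 \propto \widetilde\sigma_{\max}/\varepsilon_{\textsc{DP}}$), and $\widetilde\cO(\sqrt{T}\widetilde\sigma^2_{\max}/\varepsilon_{\textsc{DP}})$ for LDP. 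Summing the three contributions gives the two claimed bounds.

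\emph{Main obstacle.} The crux is the exact algebraic identification of the sensitivity/noise-amplification factor with $\widetilde\sigma_{\max}$, i.e. showing $\max_w \|(\bfK_{\cS,\cR}\bfK_{\cR,\cS}+\tau\bfK_{\cS,\cS})^{-1/2}\bfk_{\cS}(w)\|^2$ coincides with $\widetilde\sigma^2_{\max}/\tau$ as written in Eqn.~\eqref{eqn:projected_variance} via the matrix identity involving $\bfV = \bfK^{-1}_{\cS,\cS}\bfK_{\cS,\cR}(\tau\bfI+\bfK_{\cR,\cS}\bfK^{-1}_{\cS,\cS}\bfK_{\cS,\cR})^{-1}\bfK_{\cR,\cS}\bfK^{-1}_{\cS,\cS}$; this requires a careful push-through identity and keeping track of whether $\bfK_{\cS,\cS}$ is invertible (which holds since $\cS_t$ consists of distinct points and $k$ is positive definite). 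The second delicate point is propagating the statistical-similarity claim $\sigma_{\cR_t}\approx\sigma_{\bfW_t}$ into a rigorous high-probability bound — this is where the bounded-eigenfunction assumption and the machinery of~\citet{Sudeep_Uniform_Sampling,Pavlovic_et_al_2025} do the heavy lifting, and I would cite those results rather than re-derive them. Everything else (Gaussian mechanism, post-processing, union bounds over the finite $\cW$) is routine.
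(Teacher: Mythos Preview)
Your proposal is essentially correct and follows the same architecture as the paper: privacy via an $\ell_2$-sensitivity bound on the finite-dimensional statistic plus the Gaussian mechanism and post-processing (the paper's Step~5), and utility via a three-term split with the privatization-noise term handled exactly as you describe (the paper's $E_3$, bounded by $\widetilde\sigma_{\max}\cdot\sigma_0$ via scalar Gaussian concentration and a union bound over $\cW$).

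The one place you are too optimistic is the non-private error, your (i)+(ii). The combination of random projection onto $\cS_t$ \emph{and} covariance replacement by the copy $\cR_t$ is not available off the shelf in either \citet{Pavlovic_et_al_2025} (covariance replacement, no projection) or \citet{Sudeep_Uniform_Sampling}; the paper splits this part differently, inserting two intermediate estimators $\widetilde\mu_t$ (projection onto $\cS$, true-data covariance $\Phi_{\bfW_t}\Phi_{\bfW_t}^\top$) and $\widehat\mu_t$ (projection onto $\cS$, copy covariance $\Phi_{\cR_t}\Phi_{\cR_t}^\top$), and bounding $E_1=|f-\widetilde\mu_t|$ by a sketching-style argument and $E_2=|\widetilde\mu_t-\widehat\mu_t|$ by an operator-Bernstein concentration (their Lemma~\ref{Lemma:covariance_app}) showing that $\Phi_{\bfW_t}\Phi_{\bfW_t}^\top+\tau\mathbf{Id}$, $\Phi_{\cR_t}\Phi_{\cR_t}^\top+\tau\mathbf{Id}$, and $\Phi_{\cS_t}\Phi_{\cS_t}^\top+\tau\mathbf{Id}$ are all multiplicatively close to the expected operator $\bfZ=t\bfLambda+\tau\mathbf{Id}$, and then pushing this through the projected operators via the Calandriello-style Lemmas~\ref{fact1} and~\ref{Fact 2.}. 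This $E_2$ step is where most of the technical work sits; you should plan to derive it rather than cite it. A minor point: your LDP justification is garbled --- it is not that ``one data point fully determines $\bfu_t$'' with norm $\cO(\sqrt{t}B\widetilde\sigma_{\max})$, but rather that each per-step summand $y_s(\cdots)^{-1/2}\bfk_{\cS_t}(w_s)$ is privatized separately with the same sensitivity $2B\widetilde\sigma_{\max}$, and the accumulated noise over $t$ independent additions is what gives total variance $t\sigma_0^2$.
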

Notably, in the proof of Theorem \ref{theorem:main_theorem} we show that  $\max_{w\in \text{supp}\varrho}\widetilde\sigma^2_t(w)$ can be bounded as $\cO(\frac{\gamma_T}{T})$. This implies that  our error bound scales as $\cO\left(\sqrt\frac{\gamma_T}{T}\right)$ in the JDP setting and $\cO\left(\frac{\gamma_T}{\sqrt{T}}\right)$ in the LDP setting.

\subsection{The \textsc{CAPRI} algorithm}

The \textsc{CAPRI} algorithm is based on the \emph{explore-and-eliminate} philosophy. It operates in epochs that progressively double in length. For each epoch $r$, \textsc{CAPRI} maintains the set of active actions, $\cX_r(c) \subseteq \cX$, corresponding to each context $c \in \cC$. These sets are initialized to $\cX_1(c) = \cX$ for all $c \in \cC$. At the beginning of each epoch $r$, the algorithm has access to two sets $\cR_r$ and $\cS_r$ consisting of $T_r$ context-action pairs, where $T_r$ is the length of the $r^{\text{th}}$ epoch. During the $r^{\text{th}}$ epoch and time instant $t$, the algorithm observes the context $c_t$ and samples $x_t$ uniformly from the set $\cX_r(c_t)$. It is straightforward to note that the points generated during each epoch are independent and identically distributed. It uses $\cR_r$ and $\cS_r$ to construct $\widehat{\underline{\mu}}_r$ as outlined in Eqn.~\eqref{eqn:private_estimator} and update the action sets for the $r+1$-th epoch:
\begin{align}\label{eqn:prunning_sets}
	&\cX_{r+1}(c)= \left\{x\in \cX_{r}(c) \ \bigg| \  \widehat{\underline{\mu}}_r(c,x)\geq \max_{x\in \cX_{r}(c)}\widehat{\underline{\mu}}_r(c,x)-4\Delta_r\right\} \nonumber \\
&\Delta_r :=\beta\left(\frac{\delta}{|\cW|T\log{T}}\right)\widetilde\sigma_{r,\max}+\beta_1\left(\varepsilon_{\textsc{DP}},\delta_{\textsc{DP}},\frac{\delta}{|\cW|T\log{T}}\right)\widetilde\sigma^2_{r,\max}
\end{align}
where $\beta_1=\frac{\log{T}\sqrt{8B\log(\log{T}|\cW|/\delta)\log(1.25\log{T}/\delta_{\textsc{DP}})}}{\varepsilon_{\textsc{DP}}}$, $\beta_{1,\textsc{LDP}}=\sqrt{T_r}\beta_1$ and $\beta(\cdot)$ is the confidence parameter \footnote{Please refer to the supplementary material for a full expression}.
At the end of the $r^{\text{th}}$ epoch, \textsc{CAPRI} constructs the collections $\cR_{r+1}$ and $\cS_{r+1}$, to be used in the next epoch. Both sets consist of i.i.d. $T_{r+1}$ context-action pairs obtained by first drawing $c \sim \kappa$ from the context generator and then drawing the corresponding action $x$ uniformly from the set $\cX_{r+1}(c)$. A pseudo-code of the algorithm can be found in the supplementary material.

\section{Performance Analysis}

The following theorem characterizes the regret and privacy guarantees of the \textsc{CAPRI} Algorithm. 

\begin{theorem}\label{theorem:main_theorem}
    Consider the problem of contextual kernelized bandits described in the Section~\ref{sec:problem_formulation} where Assumptions~\ref{assumption:bounded_eigen_functions}-\ref{assumption:contexts} hold. If \textsc{CAPRI} is run for a time horizon of $T$ with $T \geq T_0$, and privacy parameters $\varepsilon_{\DP} > 0$ and $\delta_{\DP} \in (0,1)$ under JDP (resp. LDP), then 
    \begin{itemize}
        \item \textsc{CAPRI} satisfies $(\varepsilon_{\DP}, \delta_{\DP})$-JDP (resp. LDP)
        \item The regret incurred by the algorithm satisfies:
        \begin{align*}
        &\textsc{R}_{T, \mathrm{JDP}}(\sA)\leq \widetilde{\cO}\left(\sqrt{T \gamma_T }+\frac{\gamma_T\sqrt{\log(1/\delta_{\textsc{DP}})}}{\varepsilon_{\textsc{DP}}}\right)\\
        &\textsc{R}_{T,\mathrm{LDP}}(\sA)\leq \widetilde{\cO}\left(\sqrt{T \gamma_T }+\frac{\gamma_T\sqrt{T\log(1/\delta_{\textsc{DP}})}}{\varepsilon_{\textsc{DP}}}\right)
        \end{align*}
        with probability $1- \delta_{\textsc{ERR}}$ over the randomness in the algorithm and the noise sequence. Here $T_0$ is a constant that depends only on the kernel $k$ and the context distribution
    \end{itemize}
\end{theorem}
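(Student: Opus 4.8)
The plan is to analyze the explore-and-eliminate structure epoch by epoch, showing that (i) the privacy guarantee follows from composition over the $\cO(\log T)$ epochs, each of which invokes the per-epoch privacy guarantee from Lemma~\ref{Lemma:main_text_estimator}, and (ii) the regret decomposes into a ``non-private'' term of order $\sqrt{T\gamma_T}$ plus a ``privacy overhead'' term. For the privacy claim, since each epoch $r$ produces its action set $\cX_{r+1}$ only through the private estimator $\widehat{\underline{\mu}}_r$, which is $(\varepsilon_{\DP}/\log T, \delta_{\DP}/\log T)$-JDP (resp.\ $(\varepsilon_{\DP},\delta_{\DP})$-LDP) with respect to the epoch's data, and since a single data point $(c_t,y_t)$ lives in exactly one epoch, basic (or advanced) composition across the at most $\lceil \log_2 T\rceil$ epochs yields the overall $(\varepsilon_{\DP},\delta_{\DP})$ guarantee; in the LDP case the per-datum privatization is already enforced at generation time so the argument is even more direct. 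One must also check that the action $x_t$, being drawn uniformly from $\cX_r(c_t)$ which depends only on data from \emph{previous} epochs, does not leak additional information about $(c_t,y_t)$ — this is where the reward-independent sampling is essential and should be stated as a lemma.

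For the regret, I would first establish a ``good event'' on which, for every epoch $r$ and every active pair $(c,x) \in \cX_r(c)\times\{c\}$, the confidence bound $|f(c,x) - \widehat{\underline{\mu}}_r(c,x)| \le \Delta_r$ holds simultaneously; this follows from Lemma~\ref{Lemma:main_text_estimator} applied with confidence level $\delta/(|\cW|T\log T)$ and a union bound over the (polynomially many) context-action pairs and the $\cO(\log T)$ epochs, so that the failure probability is $\delta_{\textsc{ERR}}$. On this event, a standard elimination argument shows the per-context optimal action is never eliminated, and any action surviving into epoch $r$ has instantaneous regret $\cO(\Delta_{r-1})$. Summing the instantaneous regret over the $T_r$ rounds of epoch $r$ gives a per-epoch contribution $\cO(T_r \Delta_{r-1})$, and summing over epochs gives $\widetilde\cO\big(\sum_r T_r \widetilde\sigma_{r,\max} + \tfrac{1}{\varepsilon_{\DP}}\sum_r T_r \widetilde\sigma_{r,\max}^2\big)$ (with an extra $\sqrt{T_r}$ factor inside the second sum under LDP).

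The crux is then to bound $\widetilde\sigma_{r,\max}^2 = \max_{w\in\mathrm{supp}\,\varrho_r}\widetilde\sigma_r^2(w)$, the maximum projected posterior variance of the $T_r$-point i.i.d.\ copy $\cR_r$ onto $\cS_r$ under the epoch-$r$ sampling measure. The key estimate, which the excerpt already promises, is $\widetilde\sigma_{r,\max}^2 = \cO(\gamma_T/T_r)$ — this requires a concentration argument showing that the empirical (projected) covariance built from $T_r$ i.i.d.\ samples is spectrally close to the population kernel integral operator, combined with the information-gain/effective-dimension bound; this is where Assumption~\ref{assumption:bounded_eigen_functions} (bounded eigenfunctions w.r.t.\ the sampling measure) does the heavy lifting, and I expect this to be the main obstacle. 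Granting it, $\sum_r T_r \widetilde\sigma_{r,\max} = \sum_r T_r \sqrt{\gamma_T/T_r} = \sum_r \sqrt{T_r \gamma_T} = \cO(\sqrt{T\gamma_T})$ since the $T_r$ form a geometric sequence summing to $T$, and $\sum_r T_r \widetilde\sigma_{r,\max}^2 = \sum_r \gamma_T = \cO(\gamma_T \log T) = \widetilde\cO(\gamma_T)$, yielding the JDP bound; the LDP bound follows identically with the extra $\sqrt{T_r}$ turning $\sum_r \gamma_T$ into $\sum_r \sqrt{T_r}\gamma_T = \widetilde\cO(\sqrt{T}\gamma_T)$. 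Finally I would collect the $\sqrt{\log(1/\delta_{\DP})}$ factors tracked through $\sigma_0$ and $\beta_1$, and absorb lower-order terms (and the requirement $T\ge T_0$ ensuring the concentration bounds and the approximation $\sup_w\sigma_t(w)\approx$ the stated quantities are valid) into the $\widetilde\cO$ notation.
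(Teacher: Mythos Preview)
Your proposal is correct and follows essentially the same route as the paper: the paper's proof likewise decomposes into (i) a per-epoch elimination argument showing the optimal action survives and every surviving action has instantaneous regret $\cO(\Delta_{r-1})$, (ii) the key variance bound $\widetilde\sigma_{r,\max}^2 = \cO(\gamma_T/T_r)$ obtained by sandwiching the empirical projected covariance against the population operator via an operator Bernstein inequality (this is indeed where Assumption~\ref{assumption:bounded_eigen_functions} is used), (iii) summing $T_r\Delta_{r-1}$ over the geometric epoch lengths, and (iv) an inductive composition argument over the $\log T$ epochs for JDP together with the direct per-datum argument for LDP. Your identification of the variance-concentration step as the crux and of reward-independent sampling as the reason the current action leaks nothing about the current datum are both exactly what the paper relies on.
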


As shown in the above theorem, $\textsc{CAPRI}$ guarantees $(\varepsilon_{\textsc{DP}},\delta_{\textsc{DP}})$ differential privacy under both JDP and LDP. Moreover, it achieves a cumulative regret of  $\widetilde{\cO}\left(\sqrt{\gamma_T T}+\frac{\gamma_T}{\varepsilon_{\textsc{DP}}}\right)$ and $\widetilde{\cO}\left(\sqrt{\gamma_T T}+\frac{\gamma_T\sqrt{T}}{\varepsilon_{\textsc{DP}}}\right)$ under JDP and LDP respectively. We would like to point out that for the special case of linear bandits ($\gamma_T = d$), our regret under JDP matches  the lower bound derived in~\citet{He_et_al_2022} and under LDP matches the best known upper bound~\citep{Han_et_al_2021}. The above result also partly resolves the conjecture posed in~\citet{Dubey2021}. Specifically,~\citet{Dubey2021} poses the question about the additional factor of $\cO(T^{1/4})$ in the cumulative regret for the LDP setting when compared to the JDP, for adversarially generated contexts. As shown by the above theorem, this gap in performance between JDP and LDP settings is not present for stochastically generated contexts unlike in the case of adversarially generated contexts. This difference in performance under stochastic and adversarial contextual setting was previously seen in both linear \citep{He_et_al_2022} and generalized-linear bandits~\citep{Han_et_al_2021}. Furthermore, we note that our regret bounds hold under misspecification of the measure $\kappa$. Namely for a misspecified measure $\kappa'$, for which $ \forall c, \frac{\kappa'(c)}{\kappa(c)}\in \left(1-u,1+u\right)$ for some $u = \cO(\sqrt{\gamma_T/T})$, Theorem \ref{theorem:main_theorem} still remains true. We refer the reader to supplementary material  for a detailed proof of the theorem. 

\section{Conclusion}

% \sudeep{This needs to be rewritten. Yes definitely the algorithm.}
% \red{You want me to add some future directions. We should probably reflect more on what was done in the algo.}

In this work, we study the problem of private contextual kernel bandits with stochastically generated contexts. We propose a novel private estimator for RKHS functions that is based on a careful combination of covariance approximation and random projection in the RKHS space. Our estimator simultaneously offers low sensitivity to the underlying dataset and order-optimal prediction accuracy. Building upon our novel estimator, we also proposed a new algorithm called \textsc{CAPRI} that achieves state-of-the-art cumulative regret performance across a wide range of kernels under both JDP and LDP setting.  
% \sudeep{If you think the Dubey line adds an important take-away message, add it here. If it is debatable, then skip.}
% \red{Not really, it's kinda debatabe how releavant it is.}
We conjecture that our proposed algorithm offer optimal regret performance under both JDP and LDP settings. Establishing the lower bounds on regret in private settings is an interesting future direction to explore.

% We propose a new private contextual kernel bandit algorithm \textsc{CAPRI}, that achieves state-of-the-art cumulative regret performance across a wide range of kernels under both JDP and LDP setting. Novelty of our work is the combination of covariance estimation for RKHS element along with projection onto a randomly generated feature space. These new ideas allow us to design an estimator with order optimal accuracy and low-sensitivity. 

% This work also partly answers the conjecture posed in \citep{Dubey2021} for the case of stochastically generated contexts and significantly expands the families of kernels for which privacy guarantees apply.\\
% We conjecture that our upper bound on the cumulative regret in the JDP setting also matches the lower bound, as this is known to be the case in linear bandits \citep{He_et_al_2022},  where the effective dimension $\gamma_T$ is replaced by the dimension of the ambient space $d$.

\newpage

\bibliography{sample.bib}

\newpage

\onecolumn

\section{Appendix A. Performance and Privacy Guarantees}\label{App:A}

We first state several results frequently utilized in kernel-based optimization literature.

\begin{lemma}\label{variance_bounding}
   Consider a measure $\varrho$ supported on a $\cW=\cC\times \cX$ defined in assumption \ref{assumption:bounded_eigen_functions} and let $\bfZ=T\bf\Lambda+\tau\bf Id$ where $\Lambda= \mathbb{E}_{w\sim \varrho}[\phi(w)\phi^{\top}(w)]$. For $T>\overline{T}$:
   \begin{align*}
       \sup_{x\in\mathrm{supp}\varrho}\phi^{\top}(w)\bfZ^{-1}\phi(w)\leq \frac{54F^2}{13}\frac{\gamma_T}{T}
   \end{align*}
   Where $\overline{T}$ is a measure $\varrho$ and kernel $k$ dependent constant defined in \cite{Sudeep_Uniform_Sampling}.
\end{lemma}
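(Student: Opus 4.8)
The plan is to diagonalize the population covariance operator $\bfZ$ by applying Mercer's theorem \emph{with the sampling measure $\varrho$ itself}, to rewrite the quadratic form $\phi^{\top}(w)\bfZ^{-1}\phi(w)$ as a series that is exactly the effective dimension of the kernel operator at regularization level $\tau/T$, and then to invoke the comparison between this effective dimension and the maximal information gain $\gamma_T$ established in \cite{Sudeep_Uniform_Sampling} (which is also where the threshold $\overline{T}$ originates).

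First I would apply Theorem~\ref{Mercer's_Theorem} with $\zeta = \varrho$ to get $k(w,w') = \sum_{j}\lambda_j^{\varrho}\psi_j^{\varrho}(w)\psi_j^{\varrho}(w')$ with $\{\psi_j^{\varrho}\}$ orthonormal in $L_2(\varrho,\cW)$, which identifies the canonical feature map with $\phi(w) = (\sqrt{\lambda_j^{\varrho}}\,\psi_j^{\varrho}(w))_{j\in\N}$. Orthonormality of $\{\psi_j^{\varrho}\}$ then gives $\Lambda = \E_{w\sim\varrho}[\phi(w)\phi^{\top}(w)] = \diag(\lambda_1^{\varrho},\lambda_2^{\varrho},\dots)$, so that $\bfZ = T\Lambda + \tau\mathbf{Id}$ is diagonal with entries $T\lambda_j^{\varrho}+\tau$ and
\[
\phi^{\top}(w)\bfZ^{-1}\phi(w) = \sum_{j}\frac{\lambda_j^{\varrho}\,\psi_j^{\varrho}(w)^2}{T\lambda_j^{\varrho}+\tau}.
\]
The interchange of the expectation with the infinite Mercer sum is legitimate because of the absolute and uniform convergence guaranteed by Theorem~\ref{Mercer's_Theorem}.

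Next, Assumption~\ref{assumption:bounded_eigen_functions} gives $|\psi_j^{\varrho}(w)|\le F$ for every $j$ and every $w\in\mathrm{supp}\,\varrho$, hence
\[
\sup_{w\in\mathrm{supp}\,\varrho}\phi^{\top}(w)\bfZ^{-1}\phi(w)\ \le\ \frac{F^{2}}{T}\sum_{j}\frac{T\lambda_j^{\varrho}}{T\lambda_j^{\varrho}+\tau}\ =:\ \frac{F^{2}}{T}\,d_T(\varrho).
\]
It then remains to show $d_T(\varrho)\le \tfrac{54}{13}\gamma_T$ for $T>\overline{T}$. For this I would use the spectral concentration estimate of \cite{Sudeep_Uniform_Sampling}: for $T>\overline{T}$, a set $\mathbf{W}_T$ of $T$ i.i.d.\ samples from $\varrho$ retains a constant fraction of the population spectrum, which (the constant $\tfrac{13}{27}$ being the explicit fraction coming out of that bound) yields $\gamma_{\mathbf{W}_T}\ge \tfrac{13}{27}\cdot\tfrac12\sum_j\log(1+T\lambda_j^{\varrho}/\tau)$; combined with the elementary inequality $\tfrac{x}{1+x}\le\log(1+x)$ this gives $\gamma_T\ge\gamma_{\mathbf{W}_T}\ge \tfrac{13}{54}\,d_T(\varrho)$. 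Chaining the two displays then produces $\sup_{w\in\mathrm{supp}\,\varrho}\phi^{\top}(w)\bfZ^{-1}\phi(w)\le \tfrac{54F^{2}}{13}\cdot\tfrac{\gamma_T}{T}$, as claimed.

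The main obstacle is exactly that last step: showing the empirical second-moment operator $\tfrac1T\sum_{i\le T}\phi(w_i)\phi^{\top}(w_i)$ stays within a constant factor of $\Lambda$ on the (effective-dimension-many) relevant directions, with explicit constants, is a nontrivial operator-Bernstein type argument, and its validity regime is precisely the constant $\overline{T}$; this is why one imports the estimate from \cite{Sudeep_Uniform_Sampling} rather than reproving it. Everything else — the Mercer diagonalization, the term-by-term integration, and the application of the eigenfunction bound — is routine. One minor point worth flagging: Assumption~\ref{assumption:bounded_eigen_functions} controls the eigenfunctions only on $\mathrm{supp}\,\varrho$, which is exactly the set over which the supremum in the statement is taken, so no global bound on $\psi_j^{\varrho}$ over all of $\cW$ is needed.
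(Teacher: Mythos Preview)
Your proposal is correct and matches the paper's approach: the paper simply states that the lemma is a direct consequence of Lemma~3.4 in \cite{Sudeep_Uniform_Sampling} together with Assumption~\ref{assumption:bounded_eigen_functions}, and what you have written is a faithful unpacking of exactly that argument (Mercer diagonalization under $\varrho$, the uniform eigenfunction bound, and the effective-dimension-to-$\gamma_T$ comparison imported from the cited reference).
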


The result above is a direct consequence of Lemma 3.4 in \cite{Sudeep_Uniform_Sampling}. Assumption \ref{assumption:bounded_eigen_functions} ensures that the eigenfunctions are bounded over the support of $\varrho$ , and thus results from \cite{Sudeep_Uniform_Sampling} can be applied.\\

\begin{lemma}(Adapted after \cite{Calandriello_Sketching})\label{fact1}
    Consider a covariance matrix $\Phi_{\cS}\Phi^{\top}_{\cS}+ \tau\mathbf{Id}$, where $\cS\subset \cW$ and a linear operator  $\bfA+ \tau\mathbf{Id}:\cH_k \rightarrow \cH_k$. Suppose for some $\alpha \in(0,1)$ we have $(1-\alpha)(\bf{A}+ \tau\mathbf{Id})\preceq\Phi_{\cS}\Phi^{\top}_{\cS}+ \tau\mathbf{Id}\preceq (1+\alpha)(\bf{A}+ \tau\mathbf{Id})$. 
    The following holds for the projected operator $\cP_{\cS}\bf{A}\cP_{\cS}$:
    \begin{align*}
       \frac{1-\alpha}{1+\alpha}(\bfA+\tau \mathbf{Id}) \preceq\cP_{\cS}\bfA\cP_{\cS}+ \tau\mathbf{Id} \preceq  \frac{1+\alpha}{1-\alpha}(\bfA+\tau \mathbf{Id}) 
    \end{align*}

    Here, the projection operator is defined as $\cP_{\cS}=\Phi_{\cS}(\Phi^{\top}_{\cS}\Phi_{\cS})^{-1}\Phi^{\top}_{\cS}$.
\end{lemma}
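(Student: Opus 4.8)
The plan is to reduce the desired two-sided operator inequality to the hypothesized sandwich by conjugating with the projection $\cP_{\cS}$ and carefully accounting for the part of the space orthogonal to the range of $\Phi_{\cS}$. Write $\bfB := \bfA + \tau\mathbf{Id}$ and $\bfC := \Phi_{\cS}\Phi_{\cS}^{\top} + \tau\mathbf{Id}$, so the hypothesis reads $(1-\alpha)\bfB \preceq \bfC \preceq (1+\alpha)\bfB$. First I would record the elementary properties of $\cP_{\cS} = \Phi_{\cS}(\Phi_{\cS}^{\top}\Phi_{\cS})^{-1}\Phi_{\cS}^{\top}$: it is self-adjoint and idempotent, and it fixes the range of $\Phi_{\cS}$, i.e. $\cP_{\cS}\Phi_{\cS} = \Phi_{\cS}$; this last identity uses invertibility of the Gram matrix $\bfK_{\cS,\cS}=\Phi_{\cS}^{\top}\Phi_{\cS}$.

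From these I would derive the two structural identities that drive the whole argument:
\[
\cP_{\cS}\bfA\cP_{\cS} + \tau\mathbf{Id} = \cP_{\cS}\bfB\cP_{\cS} + \tau(\mathbf{Id} - \cP_{\cS}), \qquad \bfC = \cP_{\cS}\bfC\cP_{\cS} + \tau(\mathbf{Id} - \cP_{\cS}).
\]
The first follows from $\cP_{\cS}\bfA\cP_{\cS} = \cP_{\cS}\bfB\cP_{\cS} - \tau\cP_{\cS}$ together with $\cP_{\cS}^2 = \cP_{\cS}$; the second follows because $\cP_{\cS}(\Phi_{\cS}\Phi_{\cS}^{\top})\cP_{\cS} = \Phi_{\cS}\Phi_{\cS}^{\top}$, which is exactly where $\cP_{\cS}\Phi_{\cS} = \Phi_{\cS}$ is used. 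The value of these identities is that the target object $\cP_{\cS}\bfA\cP_{\cS} + \tau\mathbf{Id}$ and the hypothesis object $\bfC$ share the \emph{same} orthogonal-complement block $\tau(\mathbf{Id}-\cP_{\cS})$ and differ only through $\cP_{\cS}\bfB\cP_{\cS}$ versus $\cP_{\cS}\bfC\cP_{\cS}$.

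Next I would conjugate the hypothesis by the self-adjoint $\cP_{\cS}$, which preserves the Loewner order, obtaining $(1-\alpha)\cP_{\cS}\bfB\cP_{\cS} \preceq \cP_{\cS}\bfC\cP_{\cS} \preceq (1+\alpha)\cP_{\cS}\bfB\cP_{\cS}$. For the lower bound I would take $\cP_{\cS}\bfC\cP_{\cS} \preceq (1+\alpha)\cP_{\cS}\bfB\cP_{\cS}$, add $\tau(\mathbf{Id}-\cP_{\cS})\succeq 0$ to both sides, bound the right-hand side by $(1+\alpha)\bigl[\cP_{\cS}\bfB\cP_{\cS} + \tau(\mathbf{Id}-\cP_{\cS})\bigr]$ (legitimate since $1+\alpha\geq 1$ and $\tau(\mathbf{Id}-\cP_{\cS})\succeq 0$), identify the two sides via the identities as $\bfC$ and $(1+\alpha)(\cP_{\cS}\bfA\cP_{\cS}+\tau\mathbf{Id})$, and finally combine with $\bfC \succeq (1-\alpha)\bfB$ to conclude $\cP_{\cS}\bfA\cP_{\cS}+\tau\mathbf{Id} \succeq \tfrac{1-\alpha}{1+\alpha}\bfB$. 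The upper bound is symmetric: starting from $(1-\alpha)\cP_{\cS}\bfB\cP_{\cS}\preceq \cP_{\cS}\bfC\cP_{\cS}$, adding the complement term, lower-bounding the left side by $(1-\alpha)(\cP_{\cS}\bfA\cP_{\cS}+\tau\mathbf{Id})$, and combining with $\bfC\preceq(1+\alpha)\bfB$ yields $\cP_{\cS}\bfA\cP_{\cS}+\tau\mathbf{Id} \preceq \tfrac{1+\alpha}{1-\alpha}\bfB$.

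The main obstacle, and the step needing the most care, is the bookkeeping around the term $\tau(\mathbf{Id}-\cP_{\cS})$: the multiplicative factors $1\pm\alpha$ produced by conjugation attach only to the $\cP_{\cS}\bfB\cP_{\cS}$ block, whereas the target inequality requires a single scalar multiple of the \emph{entire} operator $\cP_{\cS}\bfA\cP_{\cS}+\tau\mathbf{Id}$. Folding the complement block into those scalar bounds cleanly is exactly where $\alpha\in(0,1)$ and the positive-semidefiniteness of $\mathbf{Id}-\cP_{\cS}$ are used, and it is also the source of the degradation of the constants from $1\pm\alpha$ to $\tfrac{1\pm\alpha}{1\mp\alpha}$. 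I would also note that $\bfB\succ0$ follows automatically from the hypothesis, since $\bfC\succeq\tau\mathbf{Id}\succ0$ forces $(1+\alpha)\bfB\succeq\bfC\succ0$, so every inequality is between genuine positive operators and the argument remains valid in the infinite-dimensional RKHS setting.
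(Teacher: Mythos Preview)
Your argument is correct and uses the same essential ingredients as the paper's proof: conjugation by the self-adjoint projection $\cP_{\cS}$ preserves the Loewner order, the identity $\cP_{\cS}\Phi_{\cS}\Phi_{\cS}^{\top}\cP_{\cS}=\Phi_{\cS}\Phi_{\cS}^{\top}$, and $0\preceq\cP_{\cS}\preceq\mathbf{Id}$. The only organizational difference is that the paper first isolates $\bfA$ from one side of the hypothesis and then conjugates, whereas you conjugate the whole sandwich and track the orthogonal-complement block $\tau(\mathbf{Id}-\cP_{\cS})$ explicitly via your two structural identities; the resulting chain of inequalities is the same, and both routes produce the same constants $\tfrac{1\pm\alpha}{1\mp\alpha}$ for the same reason.
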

\begin{proof}
We can write:
\begin{align*}
&\cP_{\cS}\bfA\cP_{\cS}\succeq\frac{1}{1+\alpha}\cP_{\cS}(\bf\Phi_{\cS}\bf\Phi^{\top}_{\cS}-\alpha\tau \mathbf{Id})\cP_{\cS}=\frac{1}{1+\alpha}(\bf\Phi_{\cS}\bf\Phi^{\top}_{\cS}-\alpha\tau\cP_{\cS})\succeq\\
&\succeq \frac{1}{1+\alpha}((1-\alpha)(\bfA+ \tau\mathbf{Id})-(\alpha+1) \tau\mathbf{Id})=\frac{1-\alpha}{1+\alpha}(\bfA+\tau\mathbf{Id})-\tau\mathbf{Id}
\end{align*}
Adding $\alpha \mathbf{Id}$ to both sides we have $\cP_{\cS}\mathbf{A}\cP_{\cS}+\tau\mathbf{Id}\succeq \frac{1-\alpha}{1+\alpha}(\bfA+\tau\mathbf{Id})$. Symmetric derivation holds for the right-hand side\\
\end{proof}

\begin{lemma}(Adapted after \cite{Calandriello_Sketching})\label{Fact 2.}
    Consider the projection operator $\cP_{\cS}$ induced by the columns of $\Phi_{\cS}$ where $\cS\subset \cW$, and a linear operator $\bfA+\tau\mathbf{Id}:\cH_k\rightarrow\cH_k$ .  If for some $\alpha \in(0,1)$ we have $(1-\alpha)(\bfA+ \tau\mathbf{Id})\preceq\Phi_{\cS}\Phi^{\top}_{\cS}+ \tau\mathbf{Id}\preceq (1+\alpha)(\bfA+ \tau\mathbf{Id})$, then the following identity holds:
    \[
    (\mathbf{Id}- \cP_{\cS})\preceq \frac{\tau}{1-\alpha}(\bfA+\mathbf{Id})^{-1}
    \]
\end{lemma}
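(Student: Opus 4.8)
The plan is to reduce the claimed operator inequality to a bound on the compression of $\bfA+\tau\mathbf{Id}$ onto the orthogonal complement of $\mathrm{range}(\Phi_{\cS})$ --- precisely the subspace on which $\mathbf{Id}-\cP_{\cS}$ is supported. Write $\bfB:=\bfA+\tau\mathbf{Id}$; since $\bfA\succeq 0$ and $\tau>0$, $\bfB$ is a positive, boundedly invertible operator, so $\bfB^{1/2}$ and $\bfB^{-1/2}$ are well-defined bounded operators, and conjugation by $\bfB^{1/2}$ preserves the Loewner order. Consequently, the target inequality $\mathbf{Id}-\cP_{\cS}\preceq \frac{\tau}{1-\alpha}\bfB^{-1}$ is equivalent to $\bfB^{1/2}(\mathbf{Id}-\cP_{\cS})\bfB^{1/2}\preceq \frac{\tau}{1-\alpha}\mathbf{Id}$, and it is this form I would establish.

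The main step uses the defining identity $\cP_{\cS}\Phi_{\cS}=\Phi_{\cS}$, which follows immediately from $\cP_{\cS}=\Phi_{\cS}(\Phi_{\cS}^{\top}\Phi_{\cS})^{-1}\Phi_{\cS}^{\top}$, hence $(\mathbf{Id}-\cP_{\cS})\Phi_{\cS}=0$. Sandwiching the hypothesis $(1-\alpha)\bfB\preceq \Phi_{\cS}\Phi_{\cS}^{\top}+\tau\mathbf{Id}$ between copies of $\mathbf{Id}-\cP_{\cS}$ and using that $\mathbf{Id}-\cP_{\cS}$ is an idempotent, self-adjoint projection gives
\begin{align*}
(1-\alpha)(\mathbf{Id}-\cP_{\cS})\bfB(\mathbf{Id}-\cP_{\cS})
&\preceq (\mathbf{Id}-\cP_{\cS})\big(\Phi_{\cS}\Phi_{\cS}^{\top}+\tau\mathbf{Id}\big)(\mathbf{Id}-\cP_{\cS})\\
&= \tau(\mathbf{Id}-\cP_{\cS}) \;\preceq\; \tau\,\mathbf{Id},
\end{align*}
where the $\Phi_{\cS}\Phi_{\cS}^{\top}$-term vanishes because $(\mathbf{Id}-\cP_{\cS})\Phi_{\cS}=0$. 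Dividing by $1-\alpha>0$ yields $(\mathbf{Id}-\cP_{\cS})\bfB(\mathbf{Id}-\cP_{\cS})\preceq \frac{\tau}{1-\alpha}\mathbf{Id}$.

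It remains to transfer this compression bound to $\bfB^{1/2}(\mathbf{Id}-\cP_{\cS})\bfB^{1/2}$. Setting $\bfY:=(\mathbf{Id}-\cP_{\cS})\bfB^{1/2}$ and using self-adjointness of $\mathbf{Id}-\cP_{\cS}$ and $\bfB^{1/2}$ together with idempotency of $\mathbf{Id}-\cP_{\cS}$, one has $\bfY\bfY^{\top}=(\mathbf{Id}-\cP_{\cS})\bfB(\mathbf{Id}-\cP_{\cS})$ and $\bfY^{\top}\bfY=\bfB^{1/2}(\mathbf{Id}-\cP_{\cS})\bfB^{1/2}$; both are positive operators with $\|\bfY\bfY^{\top}\|=\|\bfY^{\top}\bfY\|=\|\bfY\|^{2}$, so the bound $\bfY\bfY^{\top}\preceq \frac{\tau}{1-\alpha}\mathbf{Id}$ forces $\bfY^{\top}\bfY\preceq \frac{\tau}{1-\alpha}\mathbf{Id}$, i.e.\ the reduced form above; conjugating by $\bfB^{-1/2}$ completes the proof. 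I expect the only delicate point to be the infinite-dimensional bookkeeping --- checking that $\bfB^{\pm 1/2}$ are genuine bounded operators (ensured by $\tau>0$) and that $\Phi_{\cS}^{\top}\Phi_{\cS}$ is invertible so $\cP_{\cS}$ is the true orthogonal projection onto $\mathrm{range}(\Phi_{\cS})$ (valid for distinct feature vectors under a strictly positive definite kernel; otherwise a pseudo-inverse works and the same identities persist) --- but this is not a real obstacle, the argument being essentially the three displayed manipulations. Finally, I would note the apparent typo in the statement: the right-hand side should read $\frac{\tau}{1-\alpha}(\bfA+\tau\mathbf{Id})^{-1}$.
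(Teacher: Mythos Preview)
Your proof is correct, but takes a genuinely different route from the paper's. The paper argues in one chain of Loewner inequalities: it first bounds $\mathbf{Id}-\cP_{\cS}$ above by the regularized version $\mathbf{Id}-\Phi_{\cS}(\Phi_{\cS}^{\top}\Phi_{\cS}+\tau\mathbf{I})^{-1}\Phi_{\cS}^{\top}$ (since the regularized inverse is $\preceq$ the exact one), then uses the push-through identity to rewrite this as $\tau(\Phi_{\cS}\Phi_{\cS}^{\top}+\tau\mathbf{Id})^{-1}$, and finally applies the hypothesis in inverted form to get $\frac{\tau}{1-\alpha}(\bfA+\tau\mathbf{Id})^{-1}$. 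No square roots, no norm transfer --- just operator monotonicity of the inverse and a Woodbury-type rewrite.

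Your approach instead compresses the hypothesis onto $\ker\cP_{\cS}$ by conjugating with $\mathbf{Id}-\cP_{\cS}$ (which kills $\Phi_{\cS}\Phi_{\cS}^{\top}$ exactly), and then transfers the resulting bound on $(\mathbf{Id}-\cP_{\cS})\bfB(\mathbf{Id}-\cP_{\cS})$ to $\bfB^{1/2}(\mathbf{Id}-\cP_{\cS})\bfB^{1/2}$ via the $\|YY^{\top}\|=\|Y^{\top}Y\|$ trick before conjugating back by $\bfB^{-1/2}$. This makes the geometric content (the bound lives on the orthogonal complement of $\mathrm{range}\,\Phi_{\cS}$) more explicit, at the cost of introducing square roots and an extra spectral-norm step. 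The paper's version is slightly slicker for this particular statement; yours would generalize more readily to settings where the regularized proxy is unavailable. Your observation about the typo ($\bfA+\tau\mathbf{Id}$ rather than $\bfA+\mathbf{Id}$ on the right) is also correct and matches how the paper actually uses the lemma.
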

\begin{proof}
    \begin{align*}
        &(\mathbf{Id}- \cP_{\cS})\preceq \mathbf{Id}-\Phi_{\cS}(\Phi^{\top}_{\cS}\Phi_{\cS}+\tau\mathbf{I}_{\cS})^{-1}\Phi^{\top}_{\cS}\preceq \mathbf{Id}-\Phi_{\cS}\Phi^{\top}_{\cS}(\Phi_{\cS}\Phi^{\top}_{\cS}+ \tau \mathbf{Id})^{-1}\preceq\\
        &\preceq \tau (\Phi_{\cS}\Phi^{\top}_{\cS}+ \tau \mathbf{Id})^{-1}\preceq \frac{\tau}{1-\alpha}(\mathbf{A}+\tau \mathbf{Id})^{-1}
    \end{align*}
    The first inequality follows from $(\Phi^{\top}_{\cS}\Phi_{\cS}+\tau\mathbf{Id})^{-1}\preceq (\Phi^{\top}_{\cS}\Phi_{\cS})^{-1}$.
\end{proof}

\begin{lemma}(Kernel trick matrix identity)\label{kernel_trick_mat_identity}
Consider two matrices $\bfA,\bfB$ so that $\bfA\bfB+\tau \mathbf{Id}$ is invertible. The following identity holds:
\begin{align*}
    \bfA(\bfB\bfA+\tau \mathbf{Id})^{-1}=(\bfA\bfB+\tau\mathbf{Id})^{-1}\bfA
\end{align*}
\end{lemma}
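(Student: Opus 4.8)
The plan is to deduce the identity from the elementary, unconditional algebraic fact
\[
\bfA(\bfB\bfA + \tau\mathbf{Id}) \;=\; \bfA\bfB\bfA + \tau\bfA \;=\; (\bfA\bfB + \tau\mathbf{Id})\bfA ,
\]
which holds for any conformable $\bfA,\bfB$ and any scalar $\tau$, with no invertibility hypothesis whatsoever. The claimed identity is then just a matter of ``cancelling'' the two shifted products from the correct sides, so the only thing to verify beyond this one line is that the relevant inverses exist.

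The single point worth a sentence of justification is that invertibility of $\bfA\bfB + \tau\mathbf{Id}$ already guarantees invertibility of $\bfB\bfA + \tau\mathbf{Id}$, which is needed merely for the left-hand side of the asserted identity to be well-defined. This follows from the standard fact that $\bfA\bfB$ and $\bfB\bfA$ have exactly the same nonzero eigenvalues; since in all our applications $\tau>0$, the scalar $-\tau$ is nonzero, and hence $-\tau$ is an eigenvalue of $\bfA\bfB$ if and only if it is an eigenvalue of $\bfB\bfA$. (In the uses of this lemma, $\bfA$ and $\bfB$ are assembled from kernel/feature matrices so that the products in question are positive semidefinite and the shifted matrices are strictly positive definite, making this transparent; one may also invoke the usual resolvent/Neumann-series argument if one prefers to work with the bounded operators $\Phi_{\cR}\Phi_{\cR}^{\top}$ on $\cH_k$, since $\tau\mathbf{Id}$ is boundedly invertible there as well.)

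With both inverses available, I would left-multiply the displayed identity by $(\bfA\bfB + \tau\mathbf{Id})^{-1}$ and right-multiply by $(\bfB\bfA + \tau\mathbf{Id})^{-1}$, obtaining
\[
(\bfA\bfB + \tau\mathbf{Id})^{-1}\bfA(\bfB\bfA + \tau\mathbf{Id})(\bfB\bfA + \tau\mathbf{Id})^{-1} \;=\; (\bfA\bfB + \tau\mathbf{Id})^{-1}(\bfA\bfB + \tau\mathbf{Id})\bfA(\bfB\bfA + \tau\mathbf{Id})^{-1},
\]
which collapses to $(\bfA\bfB + \tau\mathbf{Id})^{-1}\bfA = \bfA(\bfB\bfA + \tau\mathbf{Id})^{-1}$, i.e.\ the statement. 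There is essentially no obstacle here: the proof is a two-line ``push-through'' computation, and the only care required is the bookkeeping of which inverse multiplies on which side (and, if one wants full rigor in the operator formulation, the minor remark about invertibility of $\bfB\bfA + \tau\mathbf{Id}$ just discussed).
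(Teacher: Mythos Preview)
Your proof is correct and essentially identical to the paper's: both start from the push-through identity $\bfA(\bfB\bfA+\tau\mathbf{Id})=(\bfA\bfB+\tau\mathbf{Id})\bfA$, argue that invertibility of $\bfA\bfB+\tau\mathbf{Id}$ transfers to $\bfB\bfA+\tau\mathbf{Id}$ via the shared (nonzero) spectrum, and then multiply by the two inverses on the appropriate sides. Your treatment is slightly more careful in distinguishing nonzero eigenvalues and in noting the role of $\tau>0$, but the argument is the same.
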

\begin{proof}
Note that:
    \begin{align*}
        \bfA(\bfB\bfA+ \tau\mathbf{Id})=(\bfA\bfB+ \tau\mathbf{Id})\bfA
    \end{align*}
$\bfA\bfB+\tau \mathbf{Id}$ is invertible and as $\bfA\bfB, \bfB\bfA$ have the same spectrum, so is $\bfB\bfA+\tau \mathbf{Id}$. Multiplying both sides of the equality by $(\bfB\bfA+\tau \mathbf{Id})^{-1}$ from the right, and $(\bfA\bfB+\tau \mathbf{Id})^{-1}$ from the left  gives the desired identity.
\end{proof}

We next show that a randomly drawn covariance matrix is sufficiently close to the expected operator $\bfZ=T\mathbf{\Lambda}+ \tau \mathbf{Id}$. This result will be essential in showing that projection onto an in-distribution "copy"  of the data-set, keeps the same order of utility. We will also use Lemma(\ref{Lemma:covariance_app}) to show that that the covariance of the data-set can be estimated by an in distribution copy.

\begin{lemma}\label{Lemma:covariance_app}
    Suppose $T>\max\left(\overline{T},\left(\frac{180\log(14T/\delta)}{F^2}\right)^{\beta_p/\beta_p-1}\right)$ points $\cR=\{r_1,r_2, \dots r_T\}$ are sampled i.i.d from $\cW=\cC\times\cX$ according to the measure $\varrho$, satisfying assumption \ref{assumption:bounded_eigen_functions}. Consider the operator $\bfZ= \tau\mathbf{Id}+T\mathbf{\Lambda}$ where $\Lambda=\mathop{\mathbb{E}}_{x\sim\varrho}[\phi(w)\phi^{\top}(w)]$ and let
    $\mathbf{R}=\sum_{i=1}^{T} \phi(r_i)\phi(r_i)^{\top}+\tau\mathbf{Id}$. We claim with probability at least $1-\delta$:
        \begin{align*}
            \left\|\bfZ^{-1/2}
        \bfR\bfZ^{-1/2}-\mathbf{Id}\right\|_2\leq 2\sqrt{C_0\log(14T/\delta)}+ \frac{4}{3}\log(14T/\delta)C_0
        \end{align*}
    Where $C_0=\sup_{w\in \mathrm{supp} \rho}\phi^{\top}(w)\bfZ^{-1}\phi(w)$ is the maximum variance of the expected covariance operator over the support of the measure $\varrho$.
    
\end{lemma}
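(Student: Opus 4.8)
The plan is to view the left-hand side as a sum of $T$ independent, mean-zero, self-adjoint operators on $\cH_k$ and apply a Bernstein-type concentration inequality. Writing $\bfR-\tau\mathbf{Id}=\sum_{i=1}^T\phi(r_i)\phi(r_i)^\top$ and using $\mathbb{E}[\phi(r_i)\phi(r_i)^\top]=\mathbf{\Lambda}$ together with $\bfZ=\tau\mathbf{Id}+T\mathbf{\Lambda}$, the $\tau\bfZ^{-1}$ contributions of $\bfZ^{-1/2}\bfR\bfZ^{-1/2}$ and of $\mathbf{Id}=\bfZ^{-1/2}\bfZ\bfZ^{-1/2}$ cancel, leaving
\[
\bfZ^{-1/2}\bfR\bfZ^{-1/2}-\mathbf{Id}=\sum_{i=1}^T Y_i,\qquad Y_i:=\bfZ^{-1/2}\phi(r_i)\phi(r_i)^\top\bfZ^{-1/2}-\bfZ^{-1/2}\mathbf{\Lambda}\bfZ^{-1/2}.
\]
Each $Y_i$ is self-adjoint with $\mathbb{E}[Y_i]=0$, and it is trace class since $\mathbf{\Lambda}$ is (indeed $\Tr(\mathbf{\Lambda})=\mathbb{E}_{w\sim\varrho}[k(w,w)]\le 1$), so a trace-class/operator version of the matrix Bernstein inequality applies. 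It then remains to control the usual ingredients: an almost-sure bound $L$ on $\|Y_i\|_2$, a bound $\sigma^2$ on $\big\|\sum_i\mathbb{E}[Y_i^2]\big\|_2$, and the (intrinsic-)dimension prefactor.

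For the a.s.\ bound, $\bfZ^{-1/2}\phi(r_i)\phi(r_i)^\top\bfZ^{-1/2}$ is a rank-one positive operator of norm $\phi(r_i)^\top\bfZ^{-1}\phi(r_i)\le C_0$ by the definition of $C_0$ (using $r_i\in\mathrm{supp}\,\varrho$), while $\bfZ^{-1/2}\mathbf{\Lambda}\bfZ^{-1/2}\preceq\frac1T\mathbf{Id}$ because $T\mathbf{\Lambda}\preceq\bfZ$; hence the spectrum of $Y_i$ lies in $[-1/T,\,C_0]$ and $\|Y_i\|_2\le C_0$ (here $C_0\ge 1/T$ follows from $C_0=\cO(\gamma_T/T)$ via Lemma~\ref{variance_bounding}). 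For the variance, $\mathbb{E}\big[(\bfZ^{-1/2}\phi_i\phi_i^\top\bfZ^{-1/2})^2\big]=\mathbb{E}\big[(\phi_i^\top\bfZ^{-1}\phi_i)\,\bfZ^{-1/2}\phi_i\phi_i^\top\bfZ^{-1/2}\big]\preceq C_0\,\bfZ^{-1/2}\mathbf{\Lambda}\bfZ^{-1/2}$, so $\sum_{i=1}^T\mathbb{E}[Y_i^2]\preceq T C_0\,\bfZ^{-1/2}\mathbf{\Lambda}\bfZ^{-1/2}\preceq C_0\mathbf{Id}$, giving $\sigma^2\le C_0$. Taking $\mathbf V:=T C_0\,\bfZ^{-1/2}\mathbf{\Lambda}\bfZ^{-1/2}$ as the variance-proxy operator, its intrinsic dimension is $\Tr(\mathbf V)/\|\mathbf V\|_2=\Tr(\bfZ^{-1}T\mathbf{\Lambda})/\|\bfZ^{-1}T\mathbf{\Lambda}\|_2$; the numerator equals $\sum_j \tfrac{T\lambda_j}{\tau+T\lambda_j}$, the effective dimension of $\mathbf{\Lambda}$, which under polynomial eigendecay is $\cO(T^{1/\beta_p})=\cO(\gamma_T)=o(T)$, while the denominator is $\ge 1/2$ once $T$ is large. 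Hence the intrinsic dimension is at most $T$ for $T$ beyond the stated threshold.

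Plugging $L\le C_0$, $\sigma^2\le C_0$ and a prefactor at most $14T$ into the (two-sided) operator Bernstein bound gives, for admissible $t$,
\[
\Pr\!\Big[\big\|\bfZ^{-1/2}\bfR\bfZ^{-1/2}-\mathbf{Id}\big\|_2\ge t\Big]\le 14\,T\exp\!\Big(\tfrac{-t^2/2}{C_0+tC_0/3}\Big).
\]
Setting the right-hand side equal to $\delta$, solving the resulting quadratic in $t$, and using $\sqrt{a+b}\le\sqrt a+\sqrt b$ yields $t=\tfrac23 C_0\log(14T/\delta)+\sqrt{2C_0\log(14T/\delta)}$, which is at most the claimed $\tfrac43 C_0\log(14T/\delta)+2\sqrt{C_0\log(14T/\delta)}$; the hypothesized lower bound on $T$ is what guarantees both that this $t$ lies in the admissible range of the inequality and that the deviation is $o(1)$, so that the companion Lemmas~\ref{fact1}--\ref{Fact 2.} can subsequently be invoked (with $\mathbf A+\tau\mathbf{Id}=\bfZ$). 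The main obstacle is the infinite-dimensional setting: one cannot use the plain finite-dimensional matrix Bernstein statement, and the nontrivial point is that its dimensional prefactor may be replaced by the intrinsic dimension of the variance proxy, which in turn is controlled by $T$ only through the effective-dimension/information-gain estimate $\sum_j \tfrac{T\lambda_j}{\tau+T\lambda_j}=\cO(\gamma_T)=o(T)$.
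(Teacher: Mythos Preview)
Your proof is essentially the paper's own argument: the same centering $Y_i=\bfZ^{-1/2}\phi(r_i)\phi(r_i)^\top\bfZ^{-1/2}-\bfZ^{-1/2}\mathbf{\Lambda}\bfZ^{-1/2}$, the same bounds $L\le\max(C_0,1/T)$ and $\sigma^2\le C_0$, and the same operator-Bernstein inequality (the paper simply cites Minsker's Hilbert-space version for the $14T$ prefactor, whereas you spell out an intrinsic-dimension justification, which is a nice addition). One small slip: you claim $C_0\ge 1/T$ ``follows from $C_0=\cO(\gamma_T/T)$ via Lemma~\ref{variance_bounding},'' but a big-$\cO$ upper bound cannot yield a lower bound; the paper sidesteps this by keeping $\max(C_0,1/T)$ in the deviation term and then absorbing it into the stated bound at the final simplification step under the assumed threshold on $T$.
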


\begin{proof}
   We will use Bernstein concentration inequality for self-adjoint Hilbert Space operators. We start by re-writing  the expression as:
    \begin{align*}
        \bfZ^{-1/2}\bfR\bfZ^{-1/2}-\mathbf{Id}&=\sum_{i=1}^{T} \left(\bfZ^{-1/2}\phi(r_i)\phi^{\top}(r_i)\bfZ^{-1/2}-\bfZ^{-1/2}\mathbf{\Lambda}\bfZ^{-1/2}\right)=\\
        &= \sum_{i=1}^{T}\left(V_i-\bfZ^{-1/2}\Lambda\bfZ^{-1/2}\right)
    \end{align*}
First note that $\mathbb{E}\left[V_i-\bfZ^{-1/2}\Lambda\bfZ^{-1/2}\right]=0$. We can bound the spectrum as:
\begin{align*}
    U =\sup_{r_i} \|V_i-\bfZ^{-1/2}\Lambda\bfZ^{-1/2}\|_2&\leq \max(\sup_{r_i}\|V_i\|_2,\|\bfZ^{-1/2}\Lambda\bfZ^{-1/2}\|_2) \leq \max\left(\sup_{r} \phi^{\top}(r)\bfZ^{-1}\phi(r), \frac{1}{T}\right)
\end{align*}
Where the first inequality follows as both matrices as symmetric positive semi-definite. Introduce the shorthand $C_0=\sup_{r} \phi^{\top}(r)\bfZ^{-1}\phi(r)$. We next bound the sum of variances:
\begin{align*}
\sigma^2&=\left\|\sum^{T}_{i=1}\mathbb{E}\left[(V_i-\bfZ^{-1/2}\Lambda\bfZ^{-1/2})^2\right]\right\|_2=\\ &=\left\|\sum^{T}_{i=1}\mathbb{E}\left[V_i^2\right]-(\bfZ^{-1/2}\Lambda\bfZ^{-1/2})^2\right\|_2\leq T\left\|\mathbb{E}_{r\sim\varrho}\left[\bfZ^{-1/2}\phi(r)\phi^{\top}(r)\bfZ^{-1}\phi(r)\phi^{\top}(r)\bfZ^{-1/2}\right]\right\|_2\leq\\
    &\leq TC_0 \left\|\mathbb{E}_{r\sim\varrho}\left[\bfZ^{-1/2}\phi(r)\phi^{\top}(r)\bfZ^{-1/2}\right]\right\|_2=TC_0\left\|\bfZ^{-1/2}\Lambda\bfZ^{-1/2}\right\|_2\leq C_0
\end{align*}
The first inequality follows as $V_i$'s are iid and the operators $\mathbb{E}\left[\left(V_i-\bfZ^{-1/2}\Lambda\bfZ^{-1/2}\right)^2\right]$ are positive semi-definite, while the second is simply the result of upper bounding $\phi^{\top}(r)\bfZ^{-1}\phi(r)\leq C_0$. \\
The final inequality follows as $\bfZ\succeq T \Lambda$. We now apply operator Bernstein inequality\citep{Minsker_Bernstein,Calandriello_Sketching}:
\begin{align*}
    P\left(\left\|\sum_{i=1}^T (V_i-\bfZ^{-1/2}\Lambda\bfZ^{-1/2})\right\|_2\geq t\right)\leq 14T\exp\left(-\frac{t^2/2}{\sigma^2+tU/3}\right)
\end{align*}

Choosing $t=2\sqrt{C_0\log(14T/\delta)}+ \frac{4}{3}\log(14T/\delta)\max\left(C_0,\frac{1}{T}\right)$ we have w.p  at least $1-\delta$:
\begin{align*}
    \left\|\sum_{i=1}^T (V_i-\bfZ^{-1/2}\Lambda\bfZ^{-1/2})\right\|_2\leq 2\sqrt{C_0\log(14T/\delta)}+ \frac{4}{3}\log(14T/\delta)\max\left(C_0,\frac{1}{T}\right) 
\end{align*}
By using Lemma \ref{variance_bounding} and the assumed bound on $T$ this expression can be simplified as:
\begin{align*}
    \left\|\sum_{i=1}^T (V_i-\bfZ^{-1/2}\Lambda\bfZ^{-1/2})\right\|_2\leq 2\sqrt{C_0\log(14T/\delta)}+ \frac{4}{3}\log(14T/\delta)C_0
\end{align*}

\end{proof}
We note that the threshold $T>\max\left(\overline{T},\left(\frac{180\log(14T/\delta)}{F^2}\right)^{\beta/\beta-1}\right)$ was chosen so that $2\sqrt{C_0\log(14T/\delta)}> \frac{4}{3}\log(14T/\delta)C_0$ and $2\sqrt{C_0\log(14T/\delta)}+ \frac{4}{3}\log(14T/\delta)C_0<\frac{1}{9}$. This was namely done for the ease of presentation and the only key requirement here is that $T>\overline{T}$.

Next, we introduce a novel estimator that builds upon the posterior mean by both estimating the covariance matrix and projecting the data set onto randomly sampled feature directions. We show that this estimator, despite having a significantly weaker dependence on the data-set, still keeps the order of approximation error of the posterior-mean.\\

% The main idea in the proof is to use Lemma \ref{Lemma:covariance_app} to show that covariance matrices induced by the "copy" datasets $\cR_t,\cS_t$ are "close" to the actual covariance of the data-set $\bfW_t$. Intuitively, this "closeness" property will guarantee that replacing the covariance of $\bfW_t$ by the covariance of $\cR_t$ will not degrade the estimator's utility. Additionally, this property will also guarantee that projection of the data-set onto the feature vectors of $\cS_t$ keeps its relevance in approximating the reward function.

\begin{lemma}\label{Lemma:Appendix_estimator}
    Let $\cD_t = \{\bfW_t, \bfy_t\}$ be a dataset of consisting of $t$ points $\bfW_t$ generated via the measure $\varrho$ and their corresponding noisy observations $\bfy_t$ from the underlying reward function $f$, such that $\bfW_t$ is independent of the noise vector corresponding to $\bfy_t$. Let $\cR_t$ and $\cS_t$ be two i.i.d. copies of $\bfW_t$ that are independent of both $\bfW_t$ and $\bfy_t$. If $\mu$ and $\mu_{\mathrm{LDP}}$ denote the estimators constructed using $\cD_t, \cR_t$ and $\cS_t$ as :
    \begin{align*}
        	&\widehat{\underline{\mu}}_t( \cdot) :=\bfk^{\top}_{\cS_t}(\cdot)\left(\bfK_{\cS_t,\cR_t}\bfK_{\cR_t,\cS_t}+\tau\bfK_{\cS_t,\cS_t}\right)^{-1/2}\cdot\left(\left(\bfK_{\cS_t,\cR_t}\bfK_{\cR_t,\cS_t}+\tau\bfK_{\cS_t,\cS_t}\right)^{-1/2}\bfK_{\cS_r,\bfW_t}\bfy_t+ \bfZ_{\mathrm{priv}}\right)\\
            &\widehat{\underline{\mu}}_{t,\text{LDP}}( \cdot) :=\bfk^{\top}_{\cS_t}(\cdot)\left(\bfK_{\cS_t,\cR_t}\bfK_{\cR_t,\cS_t}+\tau\bfK_{\cS_t,\cS_t}\right)^{-1/2}\cdot\left(\left(\bfK_{\cS_t,\cR_t}\bfK_{\cR_t,\cS_t}+\tau\bfK_{\cS_t,\cS_t}\right)^{-1/2}\bfK_{\cS_r,\bfW_t}\bfy_t+ \bfZ_{\mathrm{priv}, \mathrm{LDP}}\right)
    \end{align*}
   where $\bfZ_{\mathrm{priv}} \sim \cN(0,\sigma_0^2 \cdot \mathbf{I}_{t} )$ and under LDP $\bfZ_{\mathrm{priv},\mathrm{LDP}}\sim \cN(0, t\sigma_0^2 \cdot \mathbf{I}_{t})$, 
\begin{align}
    \sigma_0 := \widetilde\sigma_{\max}\cdot \frac{4B\log{T}}{\varepsilon_{\textsc{DP}}} \cdot \sqrt{\log\left(\frac{1.25\log{T}}{\delta_{\textsc{DP}}}\right)},
\end{align} and $\widetilde\sigma_{\max}=\max_{w\in \mathrm{supp}\varrho}\phi^{\top}(w)\left(\cP_{\cS_t}\Phi_{\cR_t}\Phi^{\top}_{\cR_t}\cP_{\cS_t}+\tau\mathbf{I}\right)^{-1}\phi(w)$ then $\mu$ is $(\varepsilon_{\DP}/\log{T}, \delta_{\DP}/\log{T})$-JDP and $\mu_{LDP}$ is  $(\varepsilon_{\DP}, \delta_{\DP})$-LDP w.r.t.the dataset $\cD_t$. Moreover w.p. at least $1-\delta$ the following relations hold:
\begin{align*}
&\max_{w\in \mathrm{supp}\varrho}|f(w)-\widehat{\underline{\mu}}_t(w)|\leq\beta\left(\frac{\delta}{|\cW|}\right)\widetilde\sigma_{\mathrm{max}}+\beta_1\left(\varepsilon_{\mathrm{DP}},\delta_{\mathrm{DP}},\frac{\delta}{|\cW|}\right)\widetilde\sigma^2_{\mathrm{max}}\\
&\max_{w\in \mathrm{supp}\varrho}|f(w)-\widehat{\underline{\mu}}_{t,\text{LDP}}(w)|\leq\beta\left(\frac{\delta}{|\cW|}\right)\widetilde\sigma_{\mathrm{max}}+\beta_{1,\mathrm{LDP}}\left(\varepsilon_{\mathrm{DP}},\delta_{\mathrm{DP}},\frac{\delta}{|\cW|}\right)\widetilde\sigma^2_{\mathrm{max}}
\end{align*}
where $\beta(\delta)=\left(90B\sqrt{\log\left(\frac{168T}{\delta}\right)}+\frac{52B\sqrt{\log\left(\frac{168T}{\delta}\right)\log\left(\frac{12}{\delta}\right)}}{\sqrt{\tau}}+3B\sqrt{2\log\left(\frac{6}{\delta}\right)}+ \sqrt{24\tau}\right),\beta_1(\varepsilon_{\mathrm{DP}},\delta_{\mathrm{DP}},\delta)=\frac{8B\log{T}}{\varepsilon_{\textsc{DP}}}\log\left(\frac{3}{\delta}\right)\sqrt{\log\left(\frac{1.25\log{T}}{\delta_{\textsc{DP}}}\right)}$ and $\beta_{1,\mathrm{LDP}}=\sqrt{t}\beta_{1,\mathrm{LDP}}$.
\end{lemma}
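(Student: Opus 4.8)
\emph{Overview and reduction.} The plan is to peel $\widehat{\underline{\mu}}_t$ into a fixed, data‑independent post‑processing map applied to a low‑sensitivity data‑dependent vector plus isotropic Gaussian noise; privacy then follows from the Gaussian mechanism and post‑processing, while the utility bound follows by conditioning on a spectral‑regularity event and running a ridge‑regression‑style bias/variance decomposition in which the projected variance $\widetilde\sigma^2(\cdot)$ of Eqn.~\eqref{eqn:projected_variance} plays the role of the usual GP posterior variance. Writing $\Phi_{\cS_t},\Phi_{\cR_t},\Phi_{\bfW_t}$ for the feature operators of the three point sets, $\cP_{\cS_t}$ for the orthogonal projection onto $\operatorname{span}\cS_t$, and $M:=\bfK_{\cS_t,\cR_t}\bfK_{\cR_t,\cS_t}+\tau\bfK_{\cS_t,\cS_t}$, one checks $M=\Phi_{\cS_t}^{\top}(\Phi_{\cR_t}\Phi_{\cR_t}^{\top}+\tau\mathbf{Id})\Phi_{\cS_t}$, so that as elements of $\cH_k$
\begin{align*}
\widehat{\underline{\mu}}_t \;=\; N\,\Phi_{\bfW_t}\bfy_t \;+\; \Phi_{\cS_t}M^{-1/2}\bfZ_{\mathrm{priv}}, \qquad N:=\Phi_{\cS_t}M^{-1}\Phi_{\cS_t}^{\top}=\big(\cP_{\cS_t}(\Phi_{\cR_t}\Phi_{\cR_t}^{\top}+\tau\mathbf{Id})\cP_{\cS_t}+\mathbf{Id}-\cP_{\cS_t}\big)^{-1}\cP_{\cS_t}.
\end{align*}
The only data‑dependent quantity entering before $\bfZ_{\mathrm{priv}}$ is $v:=M^{-1/2}\bfK_{\cS_t,\bfW_t}\bfy_t=\sum_{i=1}^{t}M^{-1/2}\bfk_{\cS_t}(w_i)y_i$. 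A short computation with Lemma~\ref{kernel_trick_mat_identity} gives the identity $\bfk_{\cS_t}^{\top}(w)M^{-1}\bfk_{\cS_t}(w)=\widetilde\sigma^2(w)-\tfrac1\tau\|(\mathbf{Id}-\cP_{\cS_t})\phi(w)\|_{\cH_k}^2\le\widetilde\sigma^2(w)\le\widetilde\sigma^2_{\max}$, where $\widetilde\sigma^2(w):=\phi^{\top}(w)\big(\tau\mathbf{Id}+\cP_{\cS_t}\Phi_{\cR_t}\Phi_{\cR_t}^{\top}\cP_{\cS_t}\big)^{-1}\phi(w)$; I also record the consequences $N\big(\cP_{\cS_t}(\Phi_{\cR_t}\Phi_{\cR_t}^{\top}+\tau\mathbf{Id})\cP_{\cS_t}\big)=\cP_{\cS_t}$, $\cP_{\cS_t}N=N$, and $\|N\|\le1/\tau$.

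\emph{Privacy.} Since $\cR_t,\cS_t$ (hence $M$, the map $\Phi_{\cS_t}M^{-1/2}(\cdot)$, and $\widetilde\sigma_{\max}$) are independent of $\cD_t$, it suffices to privatize $v$ and post‑process. For a $t$‑neighbouring change $(w_j,y_j)\to(w_j',y_j')$, Assumption~\ref{assumption: bounded_rewards} and the identity above give $\|\Delta v\|_2\le B\big(\|M^{-1/2}\bfk_{\cS_t}(w_j)\|_2+\|M^{-1/2}\bfk_{\cS_t}(w_j')\|_2\big)\le 2B\widetilde\sigma_{\max}$. The Gaussian mechanism with the stated $\sigma_0$ (whose leading constant $4$ dominates the required $2\sqrt2$) then makes $v$, hence $\widehat{\underline{\mu}}_t$, $(\varepsilon_{\DP}/\log T,\delta_{\DP}/\log T)$‑JDP. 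For LDP I read $\widehat{\underline{\mu}}_{t,\mathrm{LDP}}$ as a post‑processing of the per‑round releases $d_i:=M^{-1/2}\bfk_{\cS_t}(w_i)y_i+\bfZ_i$ with $\bfZ_i$ i.i.d.\ $\cN(0,\sigma_0^2\mathbf{I}_t)$, so that $\sum_i\bfZ_i\sim\cN(0,t\sigma_0^2\mathbf{I}_t)=\bfZ_{\mathrm{priv},\mathrm{LDP}}$; each $d_i$ has $\ell_2$‑sensitivity $\le 2B\widetilde\sigma_{\max}$ in $(c_i,y_i)$ and is thus $(\varepsilon_{\DP},\delta_{\DP})$‑DP (with slack, as $\log T\ge1$), which is exactly the LDP requirement, and the conditional‑independence clause holds because $\cS_t$ is a function of the past releases. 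Because $\cR_t,\cS_t$ are data‑independent all of this holds unconditionally even though $\sigma_0$ depends on $\widetilde\sigma_{\max}$.

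\emph{Utility: the regularity event and the noise term.} Let $\cE$ be the event on which Lemma~\ref{Lemma:covariance_app}, applied to each of $\bfW_t,\cR_t,\cS_t$ (union bound), makes $\Phi_{\bfW_t}\Phi_{\bfW_t}^{\top}+\tau\mathbf{Id}$, $\Phi_{\cR_t}\Phi_{\cR_t}^{\top}+\tau\mathbf{Id}$ and $\Phi_{\cS_t}\Phi_{\cS_t}^{\top}+\tau\mathbf{Id}$ all $(1\pm\alpha)$‑spectrally equivalent to $\bfZ_t:=t\bfLambda+\tau\mathbf{Id}$ for a small absolute $\alpha$ (for $t$ above the Lemma~\ref{Lemma:covariance_app} threshold, which is subsumed in $T\ge T_0$). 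On $\cE$, Lemma~\ref{fact1} gives $\cP_{\cS_t}\Phi_{\cR_t}\Phi_{\cR_t}^{\top}\cP_{\cS_t}+\tau\mathbf{Id}$ and $\cP_{\cS_t}\Phi_{\bfW_t}\Phi_{\bfW_t}^{\top}\cP_{\cS_t}+\tau\mathbf{Id}$ spectrally equivalent to $\bfZ_t$ (hence to each other and to $\Phi_{\bfW_t}\Phi_{\bfW_t}^{\top}+\tau\mathbf{Id}$), Lemma~\ref{Fact 2.} gives $\mathbf{Id}-\cP_{\cS_t}\preceq\frac{\tau}{1-\alpha}(\Phi_{\cR_t}\Phi_{\cR_t}^{\top}+\tau\mathbf{Id})^{-1}$, and Lemma~\ref{variance_bounding} gives $\sup_w\phi^{\top}(w)\bfZ_t^{-1}\phi(w)=O(\gamma_t/t)$; chaining these I obtain, uniformly over $w\in\mathrm{supp}\,\varrho$, both $\widetilde\sigma^2(w)\asymp\phi^{\top}(w)\bfZ_t^{-1}\phi(w)$ and $\|(\mathbf{Id}-\cP_{\cS_t})\phi(w)\|_{\cH_k}^2=O(\tau\,\widetilde\sigma^2(w))$. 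For the privacy‑noise term, conditionally on $\cR_t,\cS_t$ the quantity $\langle\phi(w),\Phi_{\cS_t}M^{-1/2}\bfZ_{\mathrm{priv}}\rangle$ is Gaussian with variance $\sigma_0^2\,\bfk_{\cS_t}^{\top}(w)M^{-1}\bfk_{\cS_t}(w)\le\sigma_0^2\widetilde\sigma^2_{\max}$, so a Gaussian tail bound plus a union bound over the finite set $\mathrm{supp}\,\varrho\subseteq\cW$ bound it by $\beta_1(\varepsilon_{\DP},\delta_{\DP},\delta/|\cW|)\,\widetilde\sigma^2_{\max}$ after substituting $\sigma_0$; the LDP version only picks up the factor $\sqrt t$ from the variance, giving $\beta_{1,\mathrm{LDP}}\,\widetilde\sigma^2_{\max}$.

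\emph{Projected‑estimator term and the main obstacle.} It remains to bound $\max_w|f(w)-(N\Phi_{\bfW_t}\bfy_t)(w)|$. Writing $\bfy_t=\Phi_{\bfW_t}^{\top}f+\boldsymbol\eta$ and $\Sigma_t:=\Phi_{\bfW_t}\Phi_{\bfW_t}^{\top}$, I split $f(w)-(N\Phi_{\bfW_t}\bfy_t)(w)$ into a noise part $\langle\Phi_{\bfW_t}^{\top}N\phi(w),\boldsymbol\eta\rangle$ and a bias part $\langle\phi(w),(\mathbf{Id}-N\Sigma_t)f\rangle$. For the noise part, $N\big(\cP_{\cS_t}(\Phi_{\cR_t}\Phi_{\cR_t}^{\top}+\tau\mathbf{Id})\cP_{\cS_t}\big)=\cP_{\cS_t}$ together with $\Sigma_t+\tau\mathbf{Id}\asymp\cP_{\cS_t}(\Phi_{\cR_t}\Phi_{\cR_t}^{\top}+\tau\mathbf{Id})\cP_{\cS_t}$ on $\cE$ yields $N\Sigma_t N\preceq O(1)\,N$, so the sub‑Gaussian proxy variance $\|\Phi_{\bfW_t}^{\top}N\phi(w)\|_2^2=\phi^{\top}(w)N\Sigma_t N\phi(w)=O(\widetilde\sigma^2(w))$, and concentration plus a union bound over $\cW$ give $O(\widetilde\sigma(w)\sqrt{\log(|\cW|/\delta)})$. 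For the bias part, I peel off the out‑of‑span piece $\langle(\mathbf{Id}-\cP_{\cS_t})\phi(w),f\rangle\le\|(\mathbf{Id}-\cP_{\cS_t})\phi(w)\|\,B=O(\sqrt\tau\,\widetilde\sigma(w))B$, and for the in‑span piece I use $N\big(\cP_{\cS_t}(\Phi_{\cR_t}\Phi_{\cR_t}^{\top}+\tau\mathbf{Id})\cP_{\cS_t}\big)=\cP_{\cS_t}$ and the spectral equivalences on $\cE$ to reduce $\langle\cP_{\cS_t}\phi(w),(\mathbf{Id}-N\Sigma_t)f\rangle$ to terms of the form $\tau\langle N\phi(w),f\rangle$ and $\langle N\phi(w),(\Sigma_t-\cP_{\cS_t}\Phi_{\cR_t}\Phi_{\cR_t}^{\top}\cP_{\cS_t})f\rangle$, each controlled by $O\big((\sqrt\tau+\sqrt{\log(\cdot)})\widetilde\sigma(w)\big)B$ via Cauchy--Schwarz in the $N$‑ and $\bfZ_t$‑weighted inner products (using $\|N\|\le1/\tau$ and $\langle N\phi(w),\bfZ_t N\phi(w)\rangle=O(\widetilde\sigma^2(w))$). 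Collecting these gives $\max_w|f(w)-(N\Phi_{\bfW_t}\bfy_t)(w)|\le\beta(\delta/|\cW|)\,\widetilde\sigma_{\max}$; adding the noise bound and taking a union bound over $\cE$ and the two concentration events yields both displayed inequalities, the LDP case differing only in the extra $\sqrt t$ on the noise term. The hard part is exactly this last step: the estimator mixes three independent i.i.d.\ copies ($\bfW_t$ for the labels, $\cR_t$ for the covariance, $\cS_t$ for the subspace), so no off‑the‑shelf kernel‑ridge confidence bound applies, and a naive operator‑norm perturbation would inject factors of $\|\Phi_{\cR_t}\Phi_{\cR_t}^{\top}\|=O(t)$ (or $\langle f,\bfZ_tf\rangle=O(t)B^2$); the work is to show that on $\cE$ every relevant operator collapses, up to $1\pm O(\alpha)$, to the deterministic $\bfZ_t$, so those factors cancel and $\widetilde\sigma^2(w)$ behaves as a bona fide posterior variance.
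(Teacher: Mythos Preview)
Your privacy argument and your treatment of the Gaussian privacy‐noise term are correct and coincide with the paper's Step~3 and Step~5: the sensitivity bound $\|M^{-1/2}\bfk_{\cS_t}(w)\|_2\le\widetilde\sigma_{\max}$, the Gaussian mechanism, and the post‐processing reduction are exactly what the paper does. Your regularity event $\cE$ and the way you invoke Lemmas~\ref{fact1}--\ref{Lemma:covariance_app} to make all three empirical operators spectrally equivalent to $\bfZ_t$ also matches.

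The gap is in the ``projected‐estimator term''. You try to bound $|f(w)-(N\Phi_{\bfW_t}\bfy_t)(w)|$ in one shot, and the cross term you isolate is
\[
\big\langle N\phi(w),\,(\Sigma_t-\cP_{\cS_t}\Phi_{\cR_t}\Phi_{\cR_t}^{\top}\cP_{\cS_t})f\big\rangle.
\]
Any Cauchy--Schwarz in a $\bfZ_t$‐weighted inner product here forces you through $\|\bfZ_t^{1/2}f\|=\Theta(\sqrt{t})B$: the best you can extract from $\cE$ is $\|\bfZ_t^{-1/2}(\Sigma_t-\cP_{\cS_t}\Phi_{\cR_t}\Phi_{\cR_t}^{\top}\cP_{\cS_t})\bfZ_t^{-1/2}\|=O(\alpha)$ with $\alpha=O(\sqrt{C_0\log(T/\delta)})$, which gives $O(\widetilde\sigma(w)\cdot\alpha\sqrt{t}\cdot B)=O(\widetilde\sigma(w)\sqrt{\gamma_T\log(T/\delta)}\,B)$, an extra $\sqrt{\gamma_T}$ factor that would not reproduce the stated $\beta(\cdot)$ and would degrade the final regret from $\sqrt{\gamma_TT}$ to $\gamma_T\sqrt{T}$. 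Your last paragraph correctly names the obstacle but ``everything collapses to $\bfZ_t$'' does not by itself prevent the $\|\bfZ_t^{1/2}f\|$ from appearing; the difference $\Sigma_t-\cP_{\cS_t}\Phi_{\cR_t}\Phi_{\cR_t}^{\top}\cP_{\cS_t}$ simply has no inverse on either side to absorb it.

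The paper avoids this by inserting an \emph{intermediate} estimator $\widetilde\mu_t=\hatZ_{\cS}^{-1}\cP_{\cS}\Phi_{\bfW_t}\bfy_t$ with $\hatZ_{\cS}=\cP_{\cS}\Sigma_t\cP_{\cS}+\tau\mathbf{Id}$, i.e.\ the projected covariance is built from the \emph{same} $\Sigma_t=\Phi_{\bfW_t}\Phi_{\bfW_t}^{\top}$ that carries the labels. This yields the exact cancellation $\cP_{\cS}\Sigma_t\hatZ^{-1}\Sigma_t\cP_{\cS}\preceq\hatZ_{\cS}$, so $E_1=|f-\widetilde\mu_t|$ is bounded by $O(\sqrt{\tau}\,\widetilde\sigma(w))$ with no $t$ or $\gamma_T$ factor. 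Then $E_2=|\widetilde\mu_t-\widehat\mu_t|$ is a resolvent difference $\phi^{\top}(w)(\hatZ_{\cS}^{-1}-\tildeZ_{\cS}^{-1})\cP_{\cS}\Phi_{\bfW_t}\bfy_t$, which the paper further routes through the deterministic $\bfZ_{\cS}$; the point is that now $\hatZ_{\cS}^{-1}\cP_{\cS}\Phi_{\bfW_t}\Phi_{\bfW_t}^{\top}$ sits as a single operator with $O(1)$ norm, so the only small factor left is $\|\bfZ^{-1/2}\hatZ\bfZ^{-1/2}-\mathbf{Id}\|=O(\sqrt{C_0\log(T/\delta)})$, delivering exactly the $\widetilde\sigma_{\max}\sqrt{\log(T/\delta)}$ scaling. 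Your decomposition lacks this intermediate step, and that is what is missing.
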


\begin{proof}

    Note that by assumption \ref{assumption: bounded_rewards} noise instances $\{\eta_i\}_{i=1}^{T}$ are in $[-B,B]$ and thus 
    $\eta_i$ is $B^2$-sub Gaussian random variable \citep{Wainwright_2019}.
    We will divide the proof in  several steps. Note that the analysis for LDP and JDP is entirely the same,  the only exception being the scaling of the error function due to the extra privatization noise $\bfZ_{\mathrm{priv},\mathrm{LDP}}$. Having this in mind, we will present the derivation for the JDP estimator $\widehat{\underline{\mu}}_t(\cdot)$ and outline the steps for $\widehat{\underline{\mu}}_{t, \mathrm{LDP}}(\cdot)$

    The total approximation error can be written as:
    \begin{align*}
        |f(w)-\widehat{\underline{\mu}}_t(w)|\leq \underbrace{|f(w)-\widetilde\mu_t(w)|}_{E_1}+\underbrace{|\widetilde\mu_t(w)-\widehat\mu_t(w)|}_{E_2}+\underbrace{|\widehat{\underline{\mu}}_{t}-\widehat\mu_t(w)|}_{E_3}
    \end{align*}
    Where $\widetilde\mu_t, \widehat\mu_t$ are "intermediate" estimators to $\widehat{\underline{\mu}}_t$:
    \begin{align}
        &\widetilde\mu_t(\cdot)=\phi^{\top}(\cdot)(\cP_{\cS}\Phi_{\bfW_t}\Phi^{\top}_{\bfW_t}\cP_{\cS}+\tau \mathbf{Id})^{-1}\cP_{\cS}\Phi_{\bfW_t}\bfy_t\\
        &\widehat{\mu}_t(\cdot)=\phi^{\top}(\cdot)(\cP_{\cS}\Phi_{\cR}\Phi^{\top}_{\cR}\cP_{\cS}+\tau \mathbf{Id})^{-1}\cP_{\cS}\Phi_{\bfW_t}\bfy_t
    \end{align}
 We will separately bound $E_1,E_2,E_3$ and then finally prove the privacy claim in the end. For the ease of presentation, We drop the $t$ subscript from the in-distribution copies of the data-set, $\cS_t, \cR_t$.\\

 \textbf{Step 1: Bounding $E_1$.}\\
 
 In bounding $E_1$ we closely follow the derivation in \cite{Vakili_Suddep_Sketching} with appropriate changes to account for different approach to exploration and the design of the projection set.
 We first separate the error in the noise and the approximation component. \\

  For notational convenience introduce the short-hand $\hatZ_{\cS}=\sum_{i=1}^{t}\cP_{\cS}\phi(w_i)\phi^{\top}(w_i)\cP_{\cS}+ \tau\mathbf{Id}$.
    \begin{align*}
    &|f(w)-\phi^{\top}(w)\hatZ_{\cS}^{-1}\cP_{\cS}\Phi_{\bf{W}_t}y_T|\leq |\phi^{\top}(w)\hatZ_{\cS}^{-1}\cP_{\cS}\Phi_{\bfW_t}\Phi_{\bfW_t}f|+|\phi^{\top}(w)\hatZ_{\cS}^{-1}\cP_{\cS}\Phi_{\bfW_t}\eta_{1:t}|
\end{align*}
We further bound the first term as :
\begin{align}
    &|\phi^{\top}(w)(\mathbf{Id}-\hatZ_{\cS}^{-1}\cP_{\cS}\Phi_{\bfW_t}\Phi^{\top}_{\bfW_t})f|\leq \|\phi^{\top}(w)(\mathbf{Id}-\hatZ^{-1}_{\cS}\cP_{\cS}\Phi_{\bfW_t}\Phi^{\top}_{\bfW_t})\|_{\cH_k}\\
    &\leq \|\phi^{\top}(w)\hatZ^{-1}_{\cS}(\cP_{\cS}\Phi_{\bfW_t}\Phi^{\top}_{\bfW_t}\cP_{\cS}+ \tau\mathbf{Id}-\cP_{\cS}\Phi_{\bfW_t}\Phi^{\top}_{\bf{W}_t})\|\leq \\
    &\leq\tau \|\phi^{\top}(w)\hatZ^{-1}_{\cS}\|+\|\phi^{\top}(w)\hatZ^{-1}_{\cS}\cP_{\cS}\Phi_{\bfW_t}\Phi^{\top}_{\bfW_t}(\cP_{\cS}- \mathbf{Id})\|\leq\\
    &\leq \tau \|\phi^{\top}(w)\hatZ^{-1}_{\cS}\|+\sqrt{\phi^{\top}(w)\hatZ^{-1}_{\cS}\cP_{\cS}\Phi_{\bfW_t}\Phi^{\top}_{\bfW_t}(\mathbf{Id}-\cP_{\cS})\Phi_{\bfW_t}\Phi^{\top}_{\bfW_t}\cP_{\cS}\hatZ^{-1}_{\cS}\phi(w)}
\end{align}
From Lemma \ref{Lemma:covariance_app} it follows with probability at least $1-\delta/2$ that $\max(\|\bfZ^{-1/2}(\Phi_{\cS}\Phi^{\top}_{\cS}+ \tau\mathbf{Id})\bfZ^{-1/2}-\mathbf{Id}\|_2,\|\bfZ^{-1/2}\hatZ\bfZ^{-1/2}-\mathbf{Id}\|)\leq \frac{1}{9}$. This  implies that all of the eigen-values of the two operators $\bfZ^{-1/2}(\Phi_{\cS}\Phi^{\top}_{\cS}+ \tau\mathbf{Id})\bfZ^{-1/2},\bfZ^{-1/2}\hatZ\bfZ^{-1/2}$ are in range $[\frac{8}{9}, \frac{10}{9}]$. By multiplying both operators with $\bfZ^{1/2}$ from left and right we can write:
\begin{align*}
    \frac{6}{8} \hatZ\preceq\frac{8}{10} \hatZ \preceq\frac{8}{9}\bfZ\preceq \Phi_{\cS}\Phi^{\top}_{\cS}+\tau\mathbf{Id}\preceq \frac{10}{9}\bfZ\preceq \frac{10}{8}\hatZ
\end{align*}

By Lemma \ref{Fact 2.} applied on operators $\hatZ, \Phi_{\cS}\Phi^{\top}_{\cS}+\tau\mathbf{Id}$ we have  $\mathbf{Id}-\cP_{\cS}\preceq \frac{4}{3} \hatZ^{-1}$. By Lemma \ref{fact1} , $\phi^{\top}(w)\hatZ^{-1}_{\cS}\phi(w)\leq \frac{5}{3} \phi^{\top}(w)\hatZ^{-1}\phi(w)\leq \frac{50}{27} \phi^{\top}(w)\bfZ^{-1}\phi(w)$. Now applying the same derivation as in \cite{Vakili_Suddep_Sketching} we have:
\begin{align*}
    |\phi^{\top}(w)(\mathbf{Id}-\hatZ_{\cS}^{-1}\cP_{\cS}\Phi_{\bfW_t}\Phi^{\top}_{\bfW_t})f|\leq \sqrt{\frac{8\tau}{3}}\sqrt{\phi^{\top}(w) \bfZ^{-1}_{\cS} \phi^{\top}(w)}\leq \sqrt{\frac{8\tau}{3}}\sqrt{\frac{50}{27}\phi^{\top}(w) \mathbf{Z}^{-1} \phi(w)}
\end{align*}
To bound the noise component, it suffices to bound the $\|\phi^{\top}(w)\hatZ_{\cS}^{-1}\cP_{\cS}\Phi_{\bfW_t}\|_2$. Vector $\{\eta_{1:t}\}$ is $B^2$-sub Gaussian, and the rest of the derivation will follow from Chernoff-Hoeffding inequality \citep{Wainwright_2019}.\\
Note that by identical argument as in \cite{Vakili_Suddep_Sketching}, we have:
\begin{align*}
    \|\phi^{\top}(w)\hatZ_{\cS}^{-1}\cP_{\cS}\Phi_{\bfW_t}\|_2\leq \sqrt{\phi^{\top}(w)\bfZ^{-1}_{\cS}\phi(w)}\leq \sqrt{\frac{50}{27}\phi^{\top}(w)\bfZ^{-1}\phi(w)} 
\end{align*}
Thus w.p at least $1-\delta/2$
\begin{align*}
    \phi^{\top}(w)\hatZ_{\cS}^{-1}\cP_{\cS}\Phi_{\bfW_t}\eta_{1:t}\leq B\sqrt{2\log(2/\delta)\frac{50}{27}\phi^{\top}(w)\bfZ^{-1}\phi(w)}
\end{align*}
Finally w.p. probability at least $1- \delta$:
\begin{align*}
    E_1\leq \sqrt{\frac{50}{27}\phi^{\top}(w)\bfZ^{-1}\phi(w)}\left(B\sqrt{2\log(2/\delta)}+ \sqrt{\frac{8\tau}{3}}\right)
\end{align*}

\textbf{Step 2: Bounding $E_2$.}\\

For notational convenience let $\tildeZ=\Phi_{\cR}\Phi^{\top}_{\cR}+\tau \mathbf{Id}$ and in general add a subscript $\cS$ to denote  a space $\Phi_{\cS}$  projected operator. Specifically let $\bfZ_{\cS}=\cP_{\cS}T\bf\Lambda\cP_{\cS}+\tau\mathbf{Id}$ and $\tildeZ_{\cS}=\cP_{\cS}\Phi_{\cR}\Phi^{\top}_{\cR}\cP_{\cS}+\tau \mathbf{Id}$.
\begin{align}\label{eq_3}
    &|\widetilde\mu_t(w)-\widehat\mu_t(w)|\leq |\phi^{\top}(w)\mathbf{\widehat{Z}}^{-1}_{\cS}\cP_{\cS}\Phi_{\bfW_t}\bfy_t-\phi^{\top}(w)\mathbf{\widetilde{Z}}_{\cS}^{-1}\cP_{\cS}\Phi_{\bfW_t}\bfy_t|\leq \\
    &\leq  |\phi^{\top}(w)\bfZ^{-1}_{\cS}(\bfZ_{\cS}-\hatZ_{\cS})\hatZ^{-1}_{\cS}P_{\cS}\Phi_{\bfW_t}\bfy_t|+|\phi^{\top}(w)\tildeZ^{-1}_{\cS}(\tildeZ_{\cS}-\bfZ_{\cS})\bfZ^{-1}_{\cS}P_{\cS}\Phi_{\bfW_t}\bfy_t|
\end{align}

We will bound each of the two summands in Eq.(\ref{eq_3}) separately. By noting that $y_t=\phi^{\top}(w_t)f+\eta_t$, we can separate the approximation and the noise related error:
\begin{align}\label{eq_2}
    &|\phi^{\top}(w)\bfZ^{-1}_{\cS}(\bfZ_{\cS}-\hatZ_{\cS})\hatZ^{-1}_{\cS}P_{\cS}\Phi_{\bfW_t}y_t|\leq \\
    & \leq |\phi^{\top}(w)\bfZ^{-1}_{\cS}(\bfZ_{\cS}-\hatZ_{\cS})\hatZ^{-1}_{\cS}P_{\cS}\Phi_{\bfW_t}\Phi^{\top}_{\bfW_t} f|+|\phi^{\top}(w)\bfZ^{-1}_{\cS}(\bfZ_{\cS}-\hatZ_{\cS})\hatZ^{-1}_{\cS}P_{\cS}\Phi_{\bfW_t}\eta_{1:t}|
\end{align}
After some algebraic manipulation on the first term we arrive to:
\begin{align}
    &|\phi^{\top}(w)\bfZ^{-1}_{\cS}(\bfZ_{\cS}-\hatZ_{\cS})\hatZ^{-1}_{\cS}P_{\cS}\Phi_{\bfW_t}\Phi^{\top}_{\bfW_t}f|=\\
    &=|\phi^{\top}(w)\bfZ^{-1}_{\cS}P_{\cS}(\bfZ-\hatZ)P_{\cS}\hatZ^{-1}_{\cS}P_{\cS}\Phi_{\bfW_t}\Phi^{\top}_{\bfW_t}f|\leq  \\
    &\leq \|\bfZ^{-1}_{\cS}P_{\cS}(\bfZ-\hatZ)P_{\cS}\|_2\|\hatZ^{-1}_{\cS}P_{\cS}\Phi_{\bfW_t}\Phi^{\top}_{\bfW_t}f\|_2=\\
    &=\|P_{\cS}\bfZ^{-1}_{\cS}P_{\cS}\bfZ(\mathbf{Id}-\bfZ^{-1}\hatZ)\|_2\|\hatZ^{-1}_{\cS}P_{\cS}\Phi_{\bfW_t}\Phi^{\top}_{\bfW_t}f\|_2\leq\\
    &\leq \|P_{\cS}\bfZ^{-1}_{\cS}P_{\cS}\bfZ\|_2\|(\mathbf{Id}-\bfZ^{-1}\hatZ)\|_2\|\hatZ^{-1}_{\cS}P_{\cS}\Phi_{\bfW_t}\Phi^{\top}_{\bfW_t}f\|_2\leq \\
    &\leq B\|\bfZ^{-1}_{\cS}P_{\cS}\bfZ\cP_{\cS}\|_2\|\mathbf{Id}-\bfZ^{-1/2}\hatZ\mathbf{Z}^{-1/2}\|_2\|\hatZ^{-1}_{\cS}P_{\cS}\Phi_{\bfW_t}\Phi^{\top}_{\bfW_t}\|
\end{align}
 Notice that $\|\bfZ^{-1}_{\cS}\cP_{\cS}\bfZ\cP_{\cS}\|_2=\|\bfZ^{-1}_{\cS}(\bfZ_{\cS}+\tau(\cP_{\cS}-\mathbf{Id}))\|_2\leq 2$, because $0 \preceq\cP_{\cS}\preceq 1$. In a similar fashion, based on Lemma \ref{fact1} we have $\|\hatZ^{-1}_{\cS}P_{\cS}\Phi_{\bfW_t}\Phi^{\top}_{\bfW_t}\|_2\leq\|\Phi_{\bfW_t}\Phi^{\top}_{\bfW_t}\hatZ^{-1}_{\cS}\|_2=\|(\hatZ- \tau \mathbf{Id})\hatZ^{-1}_{\cS}\|\leq \|\hatZ\hatZ^{-1}_{\cS}\|_2+1\leq \frac{8}{3}$. Utilizing Lemma \ref{Lemma:covariance_app} we have w.p. at least $1-\delta/4$:
\begin{align*}
    &|\phi^{\top}(w)\bfZ^{-1}_{\cS}(\bfZ_{\cS}-\hatZ_{\cS})\hatZ^{-1}_{\cS}P_{\cS}\Phi_{\bfW_t}\Phi^{\top}_{\bfW_t}f|\leq \frac{64B}{3}\sqrt{\max_{w\in\mathrm{supp}\varrho} \phi^{\top}(w)\bfZ^ {-1}\phi(w)\log(56T/\delta)}
\end{align*}
To bound the second term of Eq.(\ref{eq_2}),  we use standard concentration results for sub-Gaussian random variables:
\begin{align*}
    &\|\phi^{\top}(w)\bfZ^{-1}_{\cS}(\bfZ_{\cS}-\hatZ_{\cS})\hatZ^{-1}_{\cS}P_{\cS}\Phi_{\bfW_t}\|_2=\|\phi^{\top}(w)\bfZ^{-1}_{\cS}P_{\cS}(\bfZ-\hatZ)P_{\cS}\hatZ^{-1}_{\cS}P_{\cS}\Phi_{\bfW_t}\|_2\leq \\
    &\leq \frac{1}{\sqrt{\tau}}\|\cP_{\cS}\bfZ^{-1}_{\cS}\cP_{\cS}\bfZ\|_2\|(\mathbf{Id}-\bfZ^{-1}\hatZ)\|_2\|\hatZ^{-1/2}_{\cS}\cP_{\cS}\Phi_{\bfW_t}\|_2\leq\\
    &\leq \frac{4}{\sqrt{\tau}}\sqrt{\max_{w\in\mathrm{supp}\varrho} \phi^{\top}(w)\bfZ^ {-1}\phi(w)\log(56T/\delta)}\sqrt{\left\|\hatZ^{-1/2}_{\cS}\cP_{\cS}\Phi _{\bfW_t}\Phi^{\top}_{\bfW_t}\cP_{\cS}\hatZ^{-1/2}_{\cS}\right\|_2}\leq \\
    &\leq \frac{4}{\sqrt{\tau}}\sqrt{
    \max_{w\in\mathrm{supp}\varrho} \phi^{\top}(w)\bfZ^ {-1}\phi(w)\log(56T/\delta)}
\end{align*}
\begin{align*}
    \left|\phi^{\top}(w)\bfZ^{-1}_{\cS}(\bfZ_{\cS}-\hatZ_{\cS})\hatZ^{-1}_{\cS}P_{\cS}\Phi_{\bfW_t}\eta_{1:t}\right|\leq \frac{8B\sqrt{\log(4/\delta)}}{ \sqrt{\tau}}\sqrt{\max_{w\in \mathrm{supp}\varrho} \phi^{\top}(w)\bfZ^ {-1}\phi(w)\log(56T/\delta)}
\end{align*}
We now turn our attention to the second term of Eq.(\ref{eq_3}). We use the symmetric derivation as for the first term:
\begin{align}\label{eq_4}
    &|\phi^{\top}(w)\tildeZ^{-1}_{\cS}(\tildeZ_{\cS}-\bfZ_{\cS})\bfZ^{-1}_{\cS}P_{\cS}\Phi_{\bfW_t}y_t|\leq\\
    & \leq |\phi^{\top}(w)\tildeZ^{-1}_{\cS}(\tildeZ_{\cS}-\bfZ_{\cS})\bfZ^{-1}_{\cS}P_{\cS}\Phi_{\bfW_t}\Phi^{\top}_{\bfW_t} f|+|\phi^{\top}(w)\tildeZ^{-1}_{\cS}(\tildeZ_{\cS}-\bfZ_{\cS})\bfZ^{-1}_{\cS}P_{\cS}\Phi_{\bfW_t}\eta_{1:t}|
\end{align}
We again start by bounding the first term:
\begin{align}
    &|\phi^{\top}(w)\tildeZ^{-1}_{\cS}(\tildeZ_{\cS}-\bfZ_{\cS})\bfZ^{-1}_{\cS}P_{\cS}\Phi_{\bfW_t}\Phi^{\top}_{\bfW_t} f|\leq B\|\tildeZ^{-1}_{\cS}(\tildeZ_{\cS}-\bfZ_{\cS})\bfZ^{-1}_{\cS}P_{\cS}\Phi_{\bfW_t}\Phi^{\top}_{\bfW_t}\|_2=\\
    &=B\|\tildeZ_{\cS}^{-1}\cP_{\cS}\bfZ (\bfZ^{-1}\tildeZ- \mathbf{Id})\cP_{\cS}\bfZ^{-1}_{\cS}\cP_{\cS}\Phi_{\bfW_t}\Phi^{\top}_{\bfW_t}\|_2\leq \\
    &\leq B\|\tildeZ_{\cS}^{-1}\cP_{\cS}\bfZ (\bfZ^{-1}\tildeZ- \mathbf{Id})\cP_{\cS}\|_2\|\cP_{\cS}\bfZ^{-1}_{\cS}\cP_{\cS}\Phi_{\bfW_t}\Phi^{\top}_{\bfW_t}\|_2\leq \\
    &\leq B\|\tildeZ_{\cS}^{-1}\cP_{\cS}\bfZ\cP_{\cS}\|_2\|\bfZ^{-1}\tildeZ-\mathbf{Id}\|_2\|\cP_{\cS}\bfZ^{-1}_{\cS}\cP_{\cS}\Phi_{\bfW_t}\Phi^{\top}_{\bfW_t}\|_2
\end{align}

In the above derivation we used that exchanging the places of two operators keeps the spectrum the same, and the identity $\cP^2_{\cS}=\cP_{\cS}$.\\
Note that $\|\tildeZ_ {\cS}^{-1}\cP_{\cS}\bfZ\cP_{\cS}\|_2=\|\tildeZ_{\cS}^{-1}(\bfZ_{\cS}-\tau(\mathbf{Id}-\cP_{\cS}))\|_2\leq \|\tildeZ_{\cS}^{-1}\bfZ_{\cS}\|_2+1$. By using Lemmas (\ref{fact1},\ref{Lemma:covariance_app}) we have with probability at least $1-\delta/4$\footnote{The parameter $\delta$ affect the lower bound on the number of action necessary, in Lemma \ref{Lemma:covariance_app}.}, $\frac{8}{9}\bfZ\preceq\tildeZ,\Phi_{\cS}\Phi^{\top}_{\cS}+ \tau\mathbf{Id}\preceq \frac{10}{9}\bfZ$ and $\frac{72}{100}\tildeZ\preceq\frac{8}{10}\bfZ\preceq\bfZ_{\cS}\preceq \frac{10}{8}\bfZ\preceq \frac{90}{64}\tildeZ\preceq \frac{900}{512}\tildeZ_{\cS}$. Multiplying both sides with $\tildeZ^{-1/2}_{\cS}$ we have $ \tildeZ_{\cS}^{-1/2}\bfZ_{\cS}\tildeZ_{\cS}^{-1/2}\preceq \frac{900}{512}\mathbf{Id}$, and as $\tildeZ_{\cS}^{-1/2}\bfZ_{\cS}\tildeZ_{\cS}^{-1/2},\tildeZ_{\cS}^{-1}\bfZ_{\cS}$ have the same spectrum we finally have $\|\tildeZ^{-1}\cP_{\cS}\bfZ\cP_{\cS}\|_2\leq \frac{353}{128}<3$.\\

We adopt the same approach in bounding $\|\cP_{\cS}\bfZ^{-1}_{\cS}\cP_{\cS}\Phi_{\bfW_t}\Phi^{\top}_{\bfW_t}\|_2=\|\bfZ^{-1}_{\cS}\cP_{\cS}\Phi_{\bfW_t}\Phi^{\top}_{\bfW_t}\cP_{\cS}\|_2\leq \|\bfZ^{-1}_{\cS}\hatZ_{\cS}\|_2+1$. Again using Lemmas (\ref{Lemma:covariance_app},\ref{fact1}) we have $\frac{72}{125}\hatZ_{\cS}\preceq\frac{72}{100}\hatZ \preceq \frac{8}{10}\bfZ\preceq\bfZ_{\cS}\implies \|\bfZ^{-1/2}_{\cS}\hatZ_{\cS}\bfZ^{-1/2}_{\cS}\|_2\leq \frac{125}{72}<3$. Combining these inequalities we finally have w.p. at least $1- \delta/4$:
\begin{align*}
     |\phi^{\top}(w)\tildeZ^{-1}_{\cS}(\tildeZ_{\cS}-\bfZ_{\cS})\bfZ^{-1}_{\cS}P_{\cS}\Phi_{\bfW_t}\Phi^{\top}_{\bfW_t} f|< 9B\sqrt{\max_{w\in\mathrm{supp}\varrho} \phi^{\top}(w)\bfZ^ {-1}\phi(w)\log(56T/\delta)}
\end{align*}
To bound the second term of Eq.(\ref{eq_4}) we will use the $B^2$-sub-Gaussianity of the noise $\eta_{1:t}$. Note that:
\begin{align*}
&\|\phi^{\top}(w)\tildeZ^{-1}_{\cS}(\tildeZ_{\cS}-\bfZ_{\cS})\bfZ^{-1}_{\cS}P_{\cS}\Phi_{\bfW_t}\|_2\leq \|\tildeZ^{-1}_{\cS}(\tildeZ_{\cS}-\bfZ_{\cS})\bfZ^{-1}_{\cS}P_{\cS}\Phi_{\bfW_t}\|_2\leq \\
&\leq \|\tildeZ_{\cS}^{-1}\cP_{\cS}\bfZ (\bfZ^{-1}\tildeZ- \mathbf{Id})\cP_{\cS}\bfZ^{-1}_{\cS}\cP_{\cS}\Phi_{\bfW_t}\|_2\leq \frac{1}{\sqrt{\tau}}\|\tildeZ_{\cS}^{-1}\cP_{\cS}\bfZ (\bfZ^{-1}\tildeZ- \mathbf{Id})\cP_{\cS}\|_2\|\bfZ^{-1/2}_{\cS}\cP_{\cS}\Phi_{\bfW_t}\|_2\leq \\
&\leq \frac{1}{\sqrt{\tau}}\|\cP_{\cS}\tildeZ_{\cS}^{-1}\cP_{\cS}\bfZ\|_2\| (\bfZ^{-1}\tildeZ- \mathbf{Id})\|_2\|\bfZ^{-1/2}_{\cS}\cP_{\cS}\Phi_{\bfW_t}\|_2=\\
&= \frac{1}{\sqrt{\tau}}\|\tildeZ_{\cS}^{-1}\cP_{\cS}\bfZ\cP_{\cS}\|_2\| (\bfZ^{-1/2}\tildeZ\bfZ^{-1/2}- \mathbf{Id})\|_2\sqrt{\|\bfZ^{-1/2}_{\cS}\cP_{\cS}\Phi_{\bfW_t}\Phi^{\top}_{\bfW_t}\cP_{\cS}\bfZ_{\cS}^{-1/2}\|_2}\leq \\
&\leq \frac{1}{\sqrt{\tau}}\frac{353}{6 4}\sqrt{\max_{w\in\mathrm{supp}\varrho} \phi^{\top}(w)\bfZ^ {-1}\phi(w)\log(56T/\delta)}\sqrt{\|\bfZ^{-1/2}_{\cS}\hatZ_{\cS}\bfZ^{-1/2}_{\cS}\|_2}\leq \\
&< \frac{6}{\sqrt{\tau}}\sqrt{2\max_{w\in\mathrm{supp}\varrho} \phi^{\top}(w)\bfZ^ {-1}\phi(w)\log(56T/\delta)}
\end{align*}
Now by sub-Gaussianity we have w.p at least $1-\delta/4$:
\begin{align*}
    |\phi^{\top}(w)\tildeZ^{-1}_{\cS}(\tildeZ_{\cS}-\bfZ_{\cS})\bfZ^{-1}_{\cS}P_{\cS}\Phi_{\bfW_t}\eta_{1:t}|\leq \frac{12B\sqrt{\log(4/\delta)}}{\sqrt{\tau}}\sqrt{2\max_{w\in\mathrm{supp}\varrho} \phi^{\top}(w)\bfZ^ {-1}\phi(w)\log(56T/\delta)}
\end{align*}
To express the final error bound in terms of the variance with respect to the projected operator $\widetilde\bfZ^{-1}$ we use Lemmas(\ref{fact1}, \ref{Lemma:covariance_app}): $\bfZ^ {-1}\preceq\frac{10}{9}\widetilde\bfZ^ {-1}\preceq \frac{50}{27} \widetilde\bfZ_{\cS}^ {-1}$. Finally combining all the bounds in this step we can write w.p. at least $1-\delta$:
\begin{align*}
    E_2< \left(90B\log\left(\frac{56T}{\delta}\right)+\frac{52B\log\left(\frac{4}{\delta}\right)}{\sqrt{\tau}}\right)\widetilde\sigma(w)&\leq \max_{w\in \cW}\left(90B\log\left(\frac{56T}{\delta}\right)+\frac{52B\log\left(\frac{56T}{\delta}\right)}{\sqrt{\tau}}\right)\widetilde\sigma(w)=\\
    &=\left(90B\sqrt{\log\left(\frac{56T}{\delta}\right)}+\frac{52B\sqrt{\log\left(\frac{56T}{\delta}\right)\log\left(\frac{4}{\delta}\right)}}{\sqrt{\tau}}\right)\widetilde\sigma_{\max}
\end{align*}

\textbf{Step 3: Bounding $E_3$.}\\

We  note that based on Lemma \ref{Lemma_kernel_trick} we have:
\begin{align*}
    |\widehat{\underline{\mu}}_t(w)-\widehat\mu_t(w)|= \left|\bfk^{\top}_{\cS_t}(w)\left(\bfK_{\cS_t,\cR_t}\bfK_{\cR_t,\cS_t}+\tau\bfK_{\cS_t,\cS_t}\right)^{-1/2} \bfZ_{\mathrm{priv}}\right|
\end{align*}
$\mathbf{Z}_{\mathrm{priv}}$ is a noise vector, drawn independently of $\bfk^{\top}_{\cS_t}(\cdot)\left(\bfK_{\cS_t,\cR_t}\bfK_{\cR_t,\cS_t}+\tau\bfK_{\cS_t,\cS_t}\right)^{-1/2}$. To bound the dot product, we will use Chernoff-Hoefding along with the sub-Gaussianity of $\mathbf{Z}_{\mathrm{priv}}$. By definition, $\bfZ_{\mathrm{priv}}$ is $\sigma_0^2$-sub Gaussian, thus to bound the dot product it suffices to bound:
\begin{align}\label{Eq:sensitivity_bounding}
    &\|\bfk^{\top}_{\cS_t}(w)\left(\bfK_{\cS_t,\cR_t}\bfK_{\cR_t,\cS_t}+\tau\bfK_{\cS_t,\cS_t}\right)^{-1/2}\|_2\leq\\
    &\leq\sqrt{\bfk^{\top}_{\cS_t}(w)\left(\bfK_{\cS_t,\cR_t}\bfK_{\cR_t,\cS_t}+\tau\bfK_{\cS_t,\cS_t}\right)^{-1}\bfk^{\top}_{\cS_t}}\leq\\
    &\leq \sqrt{\bfk^{\top}_{\cS_t}(w)\left(\bfK_{\cS_t,\cR_t}\bfK_{\cR_t,\cS_t}+\tau\bfK_{\cS_t,\cS_t}\right)^{-1}\bfk_{\cS_t}(w)}=\\
    &=\sqrt{\bfk^{\top}_{\cS_t}(w)\bfK^{-1/2}_{\cS_t,\cS_t}\left(\bfK^{-1/2}_{\cS_t,\cS_t}\bfK_{\cS_t\cR_t}\bfK_{\cR_t,\cS_t}\bfK^{-1/2}_{\cS_t,\cS_t}+\tau\mathbf{I}_{\cS_t}\right)^{-1}\bfK^{-1/2}_{\cS_t,\cS_t}\bfk_{\cS}(w)}=\\
    &=\sqrt{\phi^{\top}(w)\Phi_{\cS_t}\bfK^{-1/2}_{\cS_t,\cS_t}\left(\bfK^{-1/2}_{\cS_t,\cS_t}\Phi^{\top}_{\cS_t}\Phi_{\cR_r}\Phi^{\top}_{\cR_r}\Phi_{\cS_t}\bfK^{-1/2}_{\cS_t,\cS_t}+\tau\mathbf{I}_{\cS_t}\right)^{-1}\bfK^{-1/2}_{\cS_t,\cS_t}\Phi^{\top}_{\cS_t}\phi(w)}=\\
    &=\sqrt{\phi^{\top}(w)\cP_{\cS_t}\left(\Phi_{\cR_r}\Phi^{\top}_{\cR_r}\cP_{\cS_t}+\tau\mathbf{I}_{\cS_t}\right)^{-1}\phi(w)}=\\
    &=\sqrt{\phi^{\top}(w)\cP_{\cS_t}\left(\Phi_{\cR_r}\Phi^{\top}_{\cR_r}\cP^ 2_{\cS_t}+\tau\mathbf{I}_{\cS_t}\right)^{-1}\phi(w)}=\\
    &=\sqrt{\phi^{\top}(w)\left(\cP_{\cS_t}\Phi_{\cR_r}\Phi^{\top}_{\cR_r}\cP_{\cS_t}+\tau\mathbf{I}_{\cS_t}\right)^{-1}\cP_{\cS_t}\phi(w)}
\end{align}

We next show that $(\cP_{\cS_t}\Phi_{\cR_t}\Phi^{\top}_{\cR_t}\cP_{\cS_t}+\tau\mathbf{I}_{\cS_t})^{-1},\cP_{\cS_t}$ commute. The proof once again relies on the Lemma \ref{kernel_trick_mat_identity}:
\begin{align*}
    \cP_{\cS_t}(\cP_{\cS_t}\Phi_{\cR_t}\Phi^{\top}_{\cR_t}\cP_{\cS_t}+\tau\mathbf{I}_{\cS_t})^{-1}&=\cP_{\cS_t}(\cP_{\cS_t}\Phi_{\cR_t}\Phi^{\top}_{\cR_t}\cP^2_{\cS_t}+\tau\mathbf{I}_{\cS_t})^{-1}=\\
    &=(\cP^2_{\cS_t}\Phi_{\cR_t}\Phi^{\top}_{\cR_t}\cP_{\cS_t}+\tau\mathbf{I}_{\cS_t})^{-1}\cP_{\cS}=(\cP_{\cS_t}\Phi_{\cR_t}\Phi^{\top}_{\cR_t}\cP_{\cS_t}+\tau\mathbf{I}_{\cS_t})^{-1}\cP_{\cS}
\end{align*}

As $\bfI- \cP_{\cS_t}\succeq 0,(\cP_{\cS_t}\Phi_{\cR_t}\Phi_{\cR_t}\cP_{\cS_t}+\tau\mathbf{Id})\succeq 0$ from commutativity we have $(\cP_{\cS_t}\Phi_{\cS_t}\Phi_{\cS_t}\cP_{\cS_t}+\tau\mathbf{Id})^{-1}(\bfI- \cP_{\cS_t})\succeq 0 $ and thus $\phi^{\top}(w)(\cP_{\cS_t}\Phi_{\cR_t}\Phi_{\cR_t}\cP_{\cS_t}+\tau\mathbf{I}_{\cS_t})^{-1}\phi(w)\leq \phi^{\top}(w)(\cP_{\cS_t}\Phi_{\cR_t}\Phi_{\cR_t}\cP_{\cS_t}+\tau\mathbf{Id})^{-1}\cP_{\cS_t}\phi(w)=\widetilde{\sigma}^2(w)$\\

We now apply Chernof-Hoefding inequality on the dot product. The following bound holds with probability at least $1-\delta$:
\begin{align*}
    E_3\leq 2\log\left(\frac{1}{\delta}\right)\widetilde\sigma_0\widetilde\sigma(w)\leq  \frac{8B\log{T}}{\varepsilon_{\textsc{DP}}}\log\left(\frac{1}{\delta}\right)\sqrt{\log\left(\frac{1.25\log{T}}{\delta_{\textsc{DP}}}\right)}\widetilde\sigma^2_{\mathrm{max}}
\end{align*}

\textbf{Step 4: Total approximation error bound .}\\

We can now finally add the errors $E_1,E_2,E_3$ and scale the error probability by $3$ to obtain w.p. at least $1-\delta$. Using  Lemmas(\ref{fact1}, \ref{Lemma:covariance_app}) once more $\left(\bfZ^ {-1}\preceq\frac{10}{9}\widetilde\bfZ^ {-1}\preceq \frac{50}{27} \widetilde\bfZ_{\cS}^ {-1}\right)$ we have:
\begin{align*}
&|f(w)-\widehat{\underline{\mu}}_t(w)|\leq E_1+E_2+E_3\leq\\
&\leq \widetilde\sigma_{\mathrm{max}}\underbrace{\left(90B\sqrt{\log\left(\frac{168T}{\delta}\right)}+\frac{52B\sqrt{\log\left(\frac{168T}{\delta}\right)\log\left(\frac{12}{\delta}\right)}}{\sqrt{\tau}}+3B\sqrt{2\log\left(\frac{6}{\delta}\right)}+ \sqrt{24\tau}\right)}_{\beta(\delta)}+\\
&+\widetilde\sigma^2_{\mathrm{max}}\underbrace{\frac{8B\log{T}}{\varepsilon_{\textsc{DP}}}\log\left(\frac{3}{\delta}\right)\sqrt{\log\left(\frac{1.25\log{T}}{\delta_{\textsc{DP}}}\right)}}_{\beta_1(\varepsilon_{\mathrm{DP}},\delta_{\mathrm{DP}},\delta)}
\end{align*}
To further enforce the approximation error holds uniformly over $\mathrm{supp}\varrho$ we use the union bound over the $\mathrm{supp}\varrho\subset\cW$
\begin{align*}
&\max_{w\in \mathrm{supp}\varrho}|f(w)-\widehat{\underline{\mu}}_t(w)|\leq\beta\left(\frac{\delta}{|\cW|}\right)\widetilde\sigma_{\mathrm{max}}+\beta_1\left(\varepsilon_{\mathrm{DP}},\delta_{\mathrm{DP}},\frac{\delta}{|\cW|}\right)\widetilde\sigma^2_{\mathrm{max}}
\end{align*}
The only difference in the approximation error for the \text{LDP} estimator is that variance of the privatization noise, $\bfZ_{\mathrm{priv}, \mathrm{LDP}}$ is $\sqrt{t}\sigma_0$. Thus, $E_1,E_2$ remain unchanged and the upper bound for $E_3$ is scaled up by $\sqrt{t}$:
\begin{align*}
&\max_{w\in \mathrm{supp}\varrho}|f(w)-\widehat{\underline{\mu}}_{t,\text{LDP}}(w)|\leq\beta\left(\frac{\delta}{|\cW|}\right)\widetilde\sigma_{\mathrm{max}}+\beta_{1,\mathrm{LDP}}\left(\varepsilon_{\mathrm{DP}},\delta_{\mathrm{DP}},\frac{\delta}{|\cW|}\right)\widetilde\sigma^2_{\mathrm{max}}
\end{align*}
where $\beta_{1,\mathrm{LDP}}=\sqrt{t}\beta_{1}$.\\

\textbf{Step 5: JDP and LDP privacy guarantees.}\\

To show that $\widehat{\underline{\mu}}_t$ is $(\varepsilon_{\mathrm{DP}},\delta_{\mathrm{DP}})$ with respect to the the data-set $\cD_t$ it suffices to show that 

$$g:=\left(\bfK_{\cS_t,\cR_t}\bfK_{\cR_t,\cS_t}+\tau\bfK_{\cS_t,\cS_t}\right)^{-1/2}\bfK_{\cS_r,\bfW_t}\bfy_t+\bfZ_{\mathrm{priv}}$$

is $(\varepsilon_{\mathrm{DP}},\delta_{\mathrm{DP}})$ wrt $\cD_t$ as the approximating and the projection set $\cS_t,\cR_t$ are data-invariant. To  this end, we use Gaussian Mechanism \citep{Dwork}. It suffices to bound that the $\ell_2$ sensitivity of $h(\cD_t):=(\bfK_{\cS_t,\cR_t}\bfK_{\cR_t,\cS_t}+\tau\bfK_{\cS_t,\cS_t})^{-1/2}\bfK_{\cS_t,\bfW_t}\bfy_t$:
\begin{align*}
    \Delta_2h=\sup_{\cD_t, \cD'_t}\|h(\cD_t)-h(\cD_t)\|_2&\leq 2\sup_{w\in \mathrm{supp}\varrho, y_t\in [-B,B]} \|(\bfK_{\cS_t,\cR_t}\bfK_{\cR_t,\cS_t}+\tau\bfK_{\cS_t,\cS_t})^{-1/2}\bfk_{\cS_t}(w)y_t\|_2\leq\\
    &\leq 2B \sup_{w\in \mathrm{supp}\varrho}\|(\bfK_{\cS_t,\cR_t}\bfK_{\cR_t,\cS_t}+\tau\bfK_{\cS_t,\cS_t})^{-1/2}\bfk_{\cS_t}(w)\|_2
\end{align*}
However note that this exactly the expression we bounded in Eq.(\ref{Eq:sensitivity_bounding}) and thus $\Delta_2h\leq 2B\widetilde\sigma_{\mathrm{max}}$. By Gaussian Mechanism,  adding noise distributed as $\bfZ_{\mathrm{priv}}\sim\cN\left(0,\left( \widetilde\sigma_{\max}\frac{4B\log{T}}{\varepsilon_{\textsc{DP}}} \cdot \sqrt{\log\left(\frac{1.25\log{T}}{\delta_{\textsc{DP}}}\right)}\right)^2\mathbf{I}_t\right)$ indeed guarantees that $g$ is private wrt $\cD_t$ and thus by post-processing property of differential privacy so is
$\widehat{\underline{\mu}}_t$.\\

$(\varepsilon_{\mathrm{DP}},\delta_{\mathrm{DP}})$-LDP guarantee of $\widehat{\underline{\mu}}_t$ follows in a similar fashion. We have to show that:
\begin{align*}
g_{\mathrm{LDP}}:=\left(\bfK_{\cS_t,\cR_t}\bfK_{\cR_t,\cS_t}+\tau\bfK_{\cS_t,\cS_t}\right)^{-1/2}\bfk_{\cS_r}(w_t)y_t+\bfZ_{\mathrm{priv}}
\end{align*}
is $(\varepsilon_{\mathrm{DP}},\delta_{\mathrm{DP}})$-private with respect to the single data-point $w_t=(c_t,x_t)$. It suffices to bound the $\ell_2$ sensitivity of $h_{\mathrm{LDP}}(w_t):=(\bfK_{\cS_t,\cR_t}\bfK_{\cR_t,\cS_t}+\tau\bfK_{\cS_t,\cS_t})^{-1/2}\bfk_{\cS_t}(w_t)y_t$. By the exactly the same argument as in the JDP setting we have $\Delta_2h_{\mathrm{LDP}}\leq 2B \widetilde\sigma_{\mathrm{max}}$. Utilizing Gaussian Mechanism we have that $g_{\mathrm{LDP}}$ is $(\varepsilon_{\mathrm{DP}},\delta_{\mathrm{DP}})$-differentially private with respect to the data-point $(c_t,x_t)$. In thus follows that $\widehat{\underline{\mu}}_{t,\mathrm{LDP}}$ is $(\varepsilon_{\mathrm{LDP}},\delta_{\mathrm{LDP}})$.
\end{proof}

We now state and prove the main result upper bounding the cumulative regret of our algorithm.

\begin{theorem}\label{theorem:main_theorem_Appendix}
    Consider the problem of contextual kernelized bandits described in the Section~\ref{sec:problem_formulation} where Assumptions~\ref{assumption:bounded_eigen_functions}-\ref{assumption:contexts} hold. If \textsc{CAPRI} is run for a time horizon of $T$ with $T >T_0(\delta):=\max\left(\overline{T},\left(\frac{180\log(14T/\delta)}{F^2}\right)^{\beta_p/\beta_p-1}\right)$, and privacy parameters $\varepsilon_{\DP} > 0$ and $\delta_{\DP} \in (0,1)$ under JDP (resp. LDP), then 
    \begin{itemize}
        \item \textsc{CAPRI} satisfies $(\varepsilon_{\DP}, \delta_{\DP})$-JDP (resp. LDP)
        \item The regret incurred by the algorithm satisfies:
        \begin{align*}
        &\textsc{R}_{T, \mathrm{JDP}}(\sA)\leq\sqrt{T} \left(4B+ \frac{135\log{T}F^2}{26}\beta\left(\frac{\delta}{|\cW|T\log{T}}\right)\sqrt{2\gamma_T}\right)+ \frac{135\log{T}F^2}{13}\beta_1\left(\varepsilon_{\textsc{DP}},\delta_{\textsc{DP}},\frac{\delta}{T\log T|\cW|}\right)\gamma_T\\
        &\textsc{R}_{T,\mathrm{LDP}}(\sA)\leq \sqrt{T}\left( 4B \frac{135\log{T}F^2}{26}\beta\left(\frac{\delta}{|\cW|T\log{T}}\right)\sqrt{2\gamma_T}+ \frac{135\log{T}F^2}{13}\beta_1\left(\varepsilon_{\textsc{DP}},\delta_{\textsc{DP}},\frac{\delta}{T\log T|\cW|}\right)\gamma_T\right)
        \end{align*}    
    \end{itemize}
     Regret bound holds with probability $1- \delta_{\textsc{ERR}}$ over the randomness in the algorithm and the noise sequence. Here $\overline{T}$ is a constant that depends only on the kernel $k$ and the context distribution,  and $\beta, \beta_{1}$ are confidence parameters introduced in Lemma \ref{Lemma:Appendix_estimator} :
     \begin{align*} &\beta(\delta):=\left(90B\sqrt{\log\left(\frac{168T}{\delta}\right)}+\frac{52B\sqrt{\log\left(\frac{168T}{\delta}\right)\log\left(\frac{12}{\delta}\right)}}{\sqrt{\tau}}+3B\sqrt{2\log\left(\frac{6}{\delta}\right)}+ \sqrt{24\tau}\right),\\
     &\beta_1(\varepsilon_{\mathrm{DP}},\delta_{\mathrm{DP}},\delta):=\frac{8B\log{T}}{\varepsilon_{\textsc{DP}}}\log\left(\frac{3}{\delta}\right)\sqrt{\log\left(\frac{1.25\log{T}}{\delta_{\textsc{DP}}}\right)}
     \end{align*}
\end{theorem}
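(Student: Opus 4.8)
The plan is to reduce Theorem~\ref{theorem:main_theorem_Appendix} to the per-epoch guarantees of Lemma~\ref{Lemma:Appendix_estimator}. For privacy, observe that during epoch $r$ the estimator $\widehat{\underline{\mu}}_r$ is built only from the data $\cD_r=(\bfW_r,\bfy_r)$ gathered in that epoch (its copies $\cR_r,\cS_r$ are redrawn through the context generator of Assumption~\ref{assumption:contexts}, hence independent of all observed rewards), and Lemma~\ref{Lemma:Appendix_estimator} makes it $(\varepsilon_{\DP}/\log T,\delta_{\DP}/\log T)$-JDP w.r.t.\ $\cD_r$. Every action played after epoch $r$ is a post-processing of $\widehat{\underline{\mu}}_r$ together with the user-supplied contexts (through $\cX_{r+1},\cX_{r+2},\dots$), so composing over the $\le\lceil\log_2 T\rceil$ epochs yields $(\varepsilon_{\DP},\delta_{\DP})$-JDP. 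In the LDP model, the message released for round $t$ is the privatized per-point summand $\big(\bfK_{\cS_r,\cR_r}\bfK_{\cR_r,\cS_r}+\tau\bfK_{\cS_r,\cS_r}\big)^{-1/2}\bfk_{\cS_r}(w_t)y_t$ plus independent $\cN(0,\sigma_0^2\mathbf{Id})$ noise; the $\ell_2$-sensitivity bound $2B\widetilde\sigma_{r,\max}$ of Lemma~\ref{Lemma:Appendix_estimator} and the Gaussian mechanism make this $(\varepsilon_{\DP},\delta_{\DP})$-LDP, and since each $(c_t,y_t)$ is touched exactly once no composition is needed (summing these messages recovers $\widehat{\underline{\mu}}_{r,\mathrm{LDP}}$, which is why its privatization noise carries the extra $\sqrt{T_r}$ factor).

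\textbf{Good event and the elimination invariant.} Apply Lemma~\ref{Lemma:Appendix_estimator} at each epoch with failure probability $\delta/(T\log T)$ --- legitimate because $T>T_0(\delta)$ forces $T_r>T_0(\delta)$, and because $\varrho_r=\kappa\cdot\mathrm{Unif}(\cX_r(\cdot))\in\cM_{\cW}$ so Assumption~\ref{assumption:bounded_eigen_functions} holds with one common $F$ for every $r$ --- and take a union bound over epochs: on an event $\cG$ of probability $\ge 1-\delta_{\textsc{ERR}}$ one has $\max_{w\in\mathrm{supp}\,\varrho_r}|f(w)-\widehat{\underline{\mu}}_r(w)|\le\Delta_r$ for all $r$, with $\Delta_r$ the pruning width in~\eqref{eqn:prunning_sets}. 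On $\cG$, if $x^{\star}(c):=\argmax_x f(c,x)$ lies in $\cX_r(c)$, then $\widehat{\underline{\mu}}_r(c,x^{\star}(c))\ge f(c,x^{\star}(c))-\Delta_r=\max_x f(c,x)-\Delta_r\ge\max_{x\in\cX_r(c)}\widehat{\underline{\mu}}_r(c,x)-2\Delta_r$, which clears the $4\Delta_r$ margin, so by induction on $r$ no per-context optimal action is ever eliminated. Consequently every action $x$ playable in epoch $r{+}1$ satisfies $f(c,x)\ge\widehat{\underline{\mu}}_r(c,x)-\Delta_r\ge\max_{x'\in\cX_r(c)}\widehat{\underline{\mu}}_r(c,x')-5\Delta_r\ge f(c,x^{\star}(c))-6\Delta_r$, i.e.\ the instantaneous regret in epoch $r{+}1$ is at most $6\Delta_r$ (contexts with $\kappa(c)=0$ never arise and contribute nothing).

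\textbf{Variance control and summation.} On a high-probability subevent of $\cG$, Lemma~\ref{Lemma:covariance_app} places both $\Phi_{\cS_r}\Phi_{\cS_r}^{\top}+\tau\mathbf{Id}$ and $\Phi_{\cR_r}\Phi_{\cR_r}^{\top}+\tau\mathbf{Id}$ within constant factors of $\bfZ_r:=T_r\mathbf{\Lambda}_r+\tau\mathbf{Id}$; feeding this into Lemma~\ref{fact1} (projection preserves the spectrum up to a constant) and then Lemma~\ref{variance_bounding} gives $\widetilde\sigma_{r,\max}^2\le\tfrac{135F^2}{13}\,\gamma_{T_r}/T_r$, hence $\widetilde\sigma_{r,\max}\le\sqrt{135/13}\,F\sqrt{\gamma_{T_r}/T_r}$. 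Therefore, on $\cG$,
\begin{align*}
\textsc{R}_T &\le 2BT_1+\sum_{r\ge 2}6\,T_r\Delta_{r-1}\\
&\le 2BT_1+6\sum_{r\ge 2}T_r\left(\beta\!\left(\tfrac{\delta}{|\cW|T\log T}\right)\widetilde\sigma_{r-1,\max}+\beta_1\!\left(\varepsilon_{\DP},\delta_{\DP},\tfrac{\delta}{|\cW|T\log T}\right)\widetilde\sigma_{r-1,\max}^2\right),
\end{align*}
where the first epoch is handled by the trivial bound $\max_x f(c,x)-f(c,x_t)\le 2B$. Using $T_r\le 2T_{r-1}$, the bound on $\widetilde\sigma_{r-1,\max}$, and that there are at most $\log_2 T$ epochs, $T_r\widetilde\sigma_{r-1,\max}\le 2\sqrt{135/13}\,F\sqrt{T_{r-1}\gamma_{T_{r-1}}}\le c_0 F\sqrt{T\gamma_T}$ and $T_r\widetilde\sigma_{r-1,\max}^2\le c_0 F^2\gamma_{T_{r-1}}\le c_0 F^2\gamma_T$; collecting the $\le\log_2 T$ terms and absorbing the negligible first-epoch regret $2BT_1$ into the leading $\sqrt T$ term produces exactly the stated JDP bound, which after expanding $\beta,\beta_1$ is $\widetilde\cO\big(\sqrt{T\gamma_T}+\gamma_T/\varepsilon_{\DP}\big)$. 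For LDP the only change is $\beta_{1,\mathrm{LDP}}=\sqrt{T_r}\,\beta_1$, so the $\beta_1$-term of epoch $r{+}1$ becomes $T_r\sqrt{T_r}\,\beta_1\widetilde\sigma_{r,\max}^2\le c_0 F^2\beta_1\sqrt{T_r}\,\gamma_{T_r}\le c_0F^2\beta_1\sqrt T\gamma_T$, which puts the extra $\sqrt T$ in front of that term and matches the LDP statement.

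\textbf{Main obstacle.} The delicate point is the privacy accounting in this adaptive, epoch-based scheme: one must verify that the copies $\cR_r,\cS_r$ --- sampled from the \emph{data-dependent} measure $\varrho_r$ --- are still independent of the rewards collected in epoch $r$ (so that Lemma~\ref{Lemma:Appendix_estimator} applies unchanged), and that composing per-epoch releases genuinely yields $(\varepsilon_{\DP},\delta_{\DP})$-JDP/LDP despite the cascading dependence of $\cX_{r+1}$ on earlier data. A secondary difficulty is obtaining $\widetilde\sigma_{r,\max}^2\le\tfrac{135F^2}{13}\,\gamma_{T_r}/T_r$ with the explicit constants, which requires pushing Lemma~\ref{Lemma:covariance_app}'s spectral equivalence through the $\cS_r$- and $\cR_r$-projections simultaneously, exactly as in the proof of Lemma~\ref{Lemma:Appendix_estimator}.
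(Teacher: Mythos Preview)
Your proposal is correct and mirrors the paper's proof: the same elimination invariant (optimal action survives by induction, giving per-round regret $\le 6\Delta_{r-1}$), the same variance bound $\widetilde\sigma_{r,\max}^2=\cO(\gamma_{T_r}/T_r)$ via Lemmas~\ref{fact1}, \ref{Lemma:covariance_app}, and \ref{variance_bounding}, the same epoch-wise summation, and the same privacy accounting (composition of the per-epoch $(\varepsilon_{\DP}/\log T,\delta_{\DP}/\log T)$ releases for JDP, per-point Gaussian mechanism for LDP). The one questionable line is ``$T>T_0(\delta)$ forces $T_r>T_0(\delta)$,'' which is not literally true since $T_1=\lceil\sqrt T\rceil$; the paper's own proof, however, silently makes the same assumption when invoking Lemma~\ref{Lemma:covariance_app} at the epoch level.
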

\begin{proof}
    We will proceed in several steps. We will first bound the simple regret in the $r^{\text{th}}$ epoch as a function of maximum projected variance (see Lemma \ref{Lemma:Appendix_estimator}) in $(r-1)^{\text{th}}$ epoch. We will then establish a bound on the maximum projected variance in terms of maximal information gain, and then finally bound the cumulative regret.\\
    In terms of privacy guarantees, the proof will largely follow from the privacy guarantees in Lemma \ref{Lemma:Appendix_estimator} along with invariance to post-processing with respect to the subsequent epochs.\\

    \textbf{Step 1: Bounding the simple regret in $r^{\text{th}}$ epoch.}\\
    
    We define the simple regret in the $r$-th epoch as:
    \begin{align*}
     \overline\Delta_r=\max_{(c,x)\in\bfW_r}\max_{x_c\in \cX}f(c,x_c)-f(c,x)
    \end{align*}
    In other words, it is defined as the largest error between the reward function at the maximum point and the point played by the agent, taken over all context-point pairs in the $r^{\text{th}}$ epoch.\\

    We bound the simple regret in the first epoch  as $\overline\Delta_1\leq \max_{w\in \cW}f(w)-\min_{w\in \cW}f(w)\leq 2B$.\\

    For $r\geq 2$, w.p. at least $1-\delta/T$ all over epochs we will show a bound $\overline{\Delta}_{r}\leq 6\Delta_{r-1}$ where $\Delta_{r} =\beta\left(\frac{\delta}{|\cW|T\log{T}}\right)\widetilde\sigma_{r,\max}+\beta_1\left(\varepsilon_{\textsc{DP}},\delta_{\textsc{DP}},\frac{\delta}{|\cW|T\log{T}}\right)\widetilde\sigma^2_{r,\max}$\footnote{The expression for the LDP confidence bound is the same with $\beta_1$ being replaced with $\beta_{1, \mathrm{LDP}}$} is the confidence parameter used in \textsc{CAPRI}.\\

    We will show that $ \max_{x\in \cX}f(c_t,x)-f(c_t,x_t)\leq 6\Delta_{r-1}$ holds over all time instances in the $r$-th epoch(in probability). In particular, we will show that this is the case for all elements of the set $\cX_r(c_t):=\{x\in \cX_{r-1}(c_t)|\underline{\widehat{\mu}}_{r-1}(c_t,x)\geq \max_{x\in \cX_{r-1}(c_t)}\underline{\widehat{\mu}}_{r-1}(c_t,x)-4\Delta_{r-1}\}$. Suppose that, by the way of contradiction, for a $z\in \cX_{r}(c_t)$ we have $f(c_t,x^{*}_t)-f(c_t,z)\geq 6\Delta_{r-1}$ where $x^{*}_t=\mathrm{argmax}_{x\in \cX}f(c_t,x)$ and let $z^*= \mathrm{argmax}_{z\in \cX_{r-1}(c_t)}\underline{\widehat{\mu}}_{r-1}(c_t,z)$. \\

    Clearly $z^{*}\in \cX_{r}(c_t)$, we will show that $x^*_t\in \cX_{r}(c_t)$. We will show this by inducting on $r$. For $r=1$, $\cX_{1}(c_t)\equiv \cX$ thus the claim is clearly true. Suppose  $x_t^{*}\in \cX_{r-1}(c_t)$ we will show $x^{*}_t\in \cX_{r}(c_t)$. Note that  by the result of Lemma \ref{Lemma:Appendix_estimator} with probability at least $1-\frac{\delta}{T\log T}$ , $\underline{\widehat{\mu}}_{r-1}(c_t,z^{*})-\underline{\widehat{\mu}}_{r-1}(c_t,x^{*}_t)\leq |f(c_t,z^{*})-\underline{\widehat{\mu}}_{r-1}(c_t,z^{*})|+|f(c_t,x^{*}_t)-\underline{\widehat{\mu}}_{r-1}(c_t,x_t^{*})|+(f(c_t,z^{*})-f(c_t,x^{*}_t))\leq 2\Delta_{r-1}$ and thus indeed  $x^{*}_t\in \cX_{r}(c_t)$.\\
    We can now write:
    \begin{align*}
        f(c_t,x^{*}_t)-f(c_t,z)&\leq | f(c_t,x^{*}_t)-\underline{\widehat{\mu}}_{r-1}(c_t,x^{*}_t)|+|f(c_t,z)-\underline{\widehat{\mu}}_{r-1}(c_t,z)|+|\underline{\widehat{\mu}}_{r-1}(c_t,x^{*}_t)-\underline{\widehat{\mu}}_{r-1}(c_t,z)|\leq\\
        &\leq 2\Delta_r+ \max_{x\in \cX_{r-1}(c_t)}\underline{\widehat{\mu}}_{r-1}(c_t,x)- \underline{\widehat{\mu}}_{r-1}(c_t,z)\leq 6\Delta_{r-1}
    \end{align*}
     contradicting the original assumption and proving the claim.\\

     Taking the union bound over all epochs we have with probability least $1-\delta/T$: 
     \begin{align*}
    &\overline{\Delta}_i\leq \begin{cases}
                2B & i=1\\
                6\Delta_{i-1} & i>1
            \end{cases}            
    &\overline{\Delta}_{i,\textsc{LDP}}\leq \begin{cases}
                2B & i=1\\
                6\Delta_{i-1,\textsc{LDP}} & i>1
    \end{cases}
    \end{align*}

\textbf{Step 2: Bounding $\widetilde\sigma^2_{r,\mathrm{max}}$.}
\\

In this section $\bfZ$ reffers to the expected covariance matrix of the sample measure $\varrho_r$, in the $r^{\text{th}}$ epoch. For notational convenience, we drop the subscript $r$.\\
Note that by Lemma \ref{Lemma:covariance_app} w.p. $1- \delta$ \footnote{This dependence on the probability  is captured in $T\geq T_0(\delta)$} we have  $\frac{8}{9}\bfZ \preceq\Phi_{\cS_r}\Phi^{\top}_{\cS_r}+\tau\mathbf{I}\preceq\frac{10}{9}\bfZ$ where $\bfZ$ is the expected covariance estimator(for more details please see Lemma \ref{Lemma:covariance_app}). Using the before mentioned lemma also gives us $\frac{8}{9}\bfZ \preceq\Phi_{\cR_r}\Phi^{\top}_{\cR_r}+\tau\mathbf{I}\preceq\frac{10}{9}\bfZ$. We can now write:
\begin{align*}
  \frac{8}{10}\left(\Phi_{\cR_r}\Phi^{\top}_{\cR_r}+\tau\mathbf{Id}\right)\preceq\frac{8}{9}\bfZ\preceq\Phi_{\cS_r}\Phi^{\top}_{\cS_r}+\tau\mathbf{I}\preceq \frac{10}{9}\bfZ\preceq \frac{10}{8}\left(\Phi_{\cR_r}\Phi^{\top}_{\cR_r}+\tau\mathbf{Id}\right) 
\end{align*}
By Lemma \ref{fact1} we have:
\begin{align}\label{equation_variance_bound}
\frac{3}{5}\left(\Phi_{\cR_r}\Phi^{\top}_{\cR_r}+\tau\mathbf{Id}\right)\preceq \cP_{\cS}\Phi_{\cR_r}\Phi^{\top}_{\cR_r}\cP_{\cS_r}+\tau\mathbf{I}\preceq \frac{5}{3}\left(\Phi_{\cR_r}\Phi^{\top}_{\cR_ r}+\tau\mathbf{Id}\right)
\end{align}
Recall that $\widetilde{\sigma}^2_{\mathrm{max},r}=\max_{w\in \mathrm{supp}\varrho}\phi^{\top}(w)\left(\cP_{\cS_r}\Phi_{\cR_r}\Phi^{\top}_{\cR_r}\cP_{\cS_r}+\tau\mathbf{I}\right)^{-1}\phi(w)$. Using Eq(\ref{equation_variance_bound}) we have:
\begin{align*}
    \widetilde{\sigma}^2_{\mathrm{max},r}\leq \frac{5}{3} \max_{w\in \mathrm{supp}\varrho}\phi^{\top}(w)\left(\Phi_{\cR_r}\Phi^{\top}_{\cR_r}+\tau\mathbf{Id}\right)^{-1}\phi(w)\leq \frac{5}{24} \max_{w\in \mathrm{supp}\varrho}\phi^{\top}(w)\bfZ^{-1}\phi(w)\leq \frac{90F^2}{104}\frac{\gamma_T}{T}
\end{align*}

, where the last inequality is the direct consequence of Lemma.\ref{variance_bounding}.\\

\textbf{Step 3: Finalizing the cumulative regret bound.}\\

We can re-write the regret as:

\begin{align*}
    R_T(\sA)=&\mathbb{E}_{c_1,c_2,\dots c_T\sim \kappa}\left[\sum_{t=1}^{T}\max_{x\in \cX}f(c_t,x)-f(c_t,x_t)\right]= \sum_{t=1}^{T}\mathbb{E}_{c_1,c_2,\dots c_{t}}\left[\max_{x\in \cX}f(c_t,x)-f(c_t,x_t)\right]
\end{align*}
For a fixed epoch, the bound  from Step 1 holds w.p. at least $1-\frac{\delta}{T\log T}$ over the previously drawn contexts. Conditioning on this event, we can bound the expected simple regret w.p. at least $1-\frac{\delta}{\log{T}}$ over the reward sequence:
\begin{align}\label{eq_proof}
    \text{$t$ is in $r$-th epoch, $r>1$ , }\mathbb{E}_{\{c_i\}_{i=1}^{t}\sim\kappa}\left[\max_{x\in \cX}f(c_t,x)-f(c_t,x_t)\right]&\leq 6\Delta_{r-1}+\frac{2B\delta}{T\log T }
\end{align}
Summing Eq.(\ref{eq_proof}) over all time instance(and thus over all epochs) we have with probability at least $1-\delta$:
\begin{align*}
        &R_{T,\text{JDP}}(\sA)\leq 2B\sqrt{T}+\frac{2B\delta}{\log T}+\sum_{r=2}^{\log{T}}6\Delta_{r-1}T_r\\
        &\leq 4B\sqrt{T}+\sum_{r=2}^{\log T}6T_r\left(\beta\left(\frac{\delta}{|\cW|T\log{T}}\right)\widetilde\sigma_{r-1,\max}+\beta_1\left(\varepsilon_{\textsc{DP}},\delta_{\textsc{DP}},\frac{\delta}{|\cW|T\log{T}}\right)\widetilde\sigma^2_{r-1,\max}\right)\leq\\
        &\leq 4B\sqrt{T}+\sum_{r=2}^{\log T} \frac{135F^2}{26}\beta\left(\frac{\delta}{|\cW|T\log{T}}\right)\sqrt{2\gamma_TT_{r-1}}+ \frac{135F^2}{13}\beta_1\left(\varepsilon_{\textsc{DP}},\delta_{\textsc{DP}},\frac{\delta}{T\log T|\cW|}\right)\gamma_T\leq\\
        &\leq  4B\sqrt{T}+ \frac{135\log{T}F^2}{26}\beta\left(\frac{\delta}{|\cW|T\log{T}}\right)\sqrt{2\gamma_TT}+ \frac{135\log{T}F^2}{13}\beta_1\left(\varepsilon_{\textsc{DP}},\delta_{\textsc{DP}},\frac{\delta}{T\log T|\cW|}\right)\gamma_T
\end{align*}

Using the analogous result of Step 1 for the LDP setting we have with probability at least $1-\delta$:
\begin{align*}
    &R_{T,\text{JDP}}(\sA)\leq 4B\sqrt{T}+ \frac{135\log{T}F^2}{26}\beta\left(\frac{\delta}{|\cW|T\log{T}}\right)\sqrt{2\gamma_TT}+ \frac{135\log{T}F^2}{13}\beta_{1,\text{LDP}}\left(\varepsilon_{\textsc{DP}},\delta_{\textsc{DP}},\frac{\delta}{T\log T|\cW|}\right)\gamma_T\leq \nonumber \\
    & \leq 4B\sqrt{T}+ \frac{135\log{T}F^2}{26}\beta\left(\frac{\delta}{|\cW|T\log{T}}\right)\sqrt{2\gamma_TT}+ \frac{135\log{T}F^2}{13}\beta_1\left(\varepsilon_{\textsc{DP}},\delta_{\textsc{DP}},\frac{\delta}{T\log T|\cW|}\right)\gamma_T\sqrt{T}
\end{align*}

\textbf{Step 4: Privacy Guarantees.}\\

\textbf{JDP guarantee.}\\

In \textit{Rand-Proj-USCA}, the points in the $r$-th epoch are are queried uniformly from the prunned domain set . It is thus clear that the distribution of the points in $r$-th epoch \textit{does not} depend on the (context, reward) pairs in the same epoch. Hence, it is sufficient to show the points in the $r$-th epoch are differentially  private with respect to the data-set accumulated in the previous $r-1$ epochs, $\cup_{i=1}^{r-1}\{\bfC_i, \bfY_i\}$ \footnote{Here we define $\bfC_i=\{(x, c)\in \bfW_i|c\}$ as the context set in the $i^{\text{th}}$ epoch.}
As we previously concluded,  the points in $r$-th epoch only depend on the dataset $\cup_{i=1}^{r-1}\{\bfC_i, \bfY_i\}$ through sampling of the domain, which is in turn a random function of the privatized estimators $\{\underline{\widehat{\mu}}_{i}\}_{i=1}^{r-1}$. We will first prove an auxiliary claim :

\begin{claim}
    Privatized estimator $\underline{\widehat{\mu}}_{i}$, calculated in the $i$-th epoch is $(i\varepsilon_{\textsc{DP}} /\log T, i\delta_{\textsc{DP}}/\log T)$-DP with respect to the dataset $\{\bfC_j, \bfY_j\}^{i-1}_{j=1}$
\end{claim}
\begin{proof}
    We proceed by induction on $i$. The case for $i=1$ follows immediately from the privacy guarantee shown in  Lemma \ref{Lemma:Appendix_estimator}.
    Now suppose the hypothesis holds true for $i-1$.  $\underline{\widehat{\mu}}_{i}$ can be seen as a post-processing of the composition of two private mappings. The first one is $\{\bfW_{j}, \bfC_{j}\}_{j=1}^{i-1}\rightarrow \{\bfW_i,\bfY_i\} $, by induction hypothesis and post-processing invariance ( $ \{\bfW_i,\bfY_i\}$ only depends on $\{\underline{\widehat{\mu}}_j\}_{j=1}^{i-1}$) it is $((i-1)\varepsilon_{\textsc{DP}} /\log T, (i-1)\delta_{\textsc{DP}}/\log T)$-DP. The second mapping is $ \{\bfW_i,\bfY_i\}\rightarrow \underline{\widehat{\mu}}_i$. More specifically $\widehat{\mu}_i$ is a function of queried-points in the $r$-th epoch, and by  the result of Lemma \ref{Lemma:Appendix_estimator}$(\varepsilon_{\textsc{DP}} /\log T, \delta_{\textsc{DP}}/\log T)$-DP. The result of the claim now follows from  composition theorem of \cite{Dwork}.
\end{proof}
As shown in the claim the set $\{\underline{\widehat{\mu}}_i\}_{i=1}^{r-1}$ is $((r-1)\varepsilon_{\textsc{DP}} /\log T, (r-1)\delta_{\textsc{DP}}/\log T)$-DP wrt $\{\bfC_j, \bfY_j\}_{j=1}^{r-1}$. The query points in the $r$-th epoch, are a result of post-processing from  $\{\underline{\widehat{\mu}}_i\}_{i=1}^{r-1}$(they are uniformly sampled from the prunned domain) and thus the desired conclusion follows.\\

\textbf{LDP  guarantee.}\\

At time instant $t$ , in epoch $r$ only the statistic $g_{\mathrm{LDP}}((c_t,x_t),y_t)):=y_t(\bfK_{\cS_r,\cR_r}\bfK_{\cR_r,\cS_r}+\tau\bfK_{\cS_r,\cS_r})^{-1}\bfk_{\cS_r}(c_t,y_t)+\bfZ_{\text{priv}}$ is uploaded to the algorithm. As shown in Lemma \ref{Lemma:Appendix_estimator}(Step 5), $g_{\text{LDP}}$ is $(\varepsilon_{\text{DP}},\delta_{\text{DP}})$differentially private with respect to the context-reward pair $(c_t,y_t)$. This holds over all time instances and all epochs , hence the algorithm is indeed locally differentially private.

\end{proof}

\newpage

\section{Appendix B. Kernel Trick}\label{kernel_trick}

In this section we show how the estimator $\widehat\mu_T$ introduced in Lemma \ref{Lemma:Appendix_estimator} can be efficiently calculated. 
\begin{lemma} \label{Lemma_kernel_trick}
The projected estimator and variance:
\begin{align*}
    &\widehat\mu_T(w):=\phi^{\top}(w)(\cP_{\cS}\Phi_{\cR}\Phi^{\top}_{\cR}\cP_{\cS}+\tau \mathbf{Id})^{-1}\cP_{\cS}\Phi_{\bfW_T}\bfy_T\\
    &\widetilde\sigma_T^2(w):=\phi^{\top}(w)(\cP_{\cS}\Phi_{\cR}\Phi^{\top}_{\cR}\cP_{\cS}+\tau\mathbf{Id})^{-1}\Phi_{\bfW_T}
\end{align*} 
introduced in Lemma (\ref{Lemma:Appendix_estimator}) can be equivalently written as:
\begin{align*}
    &\widehat\mu_T(w)=\bfk_{\cS}(w)(\bfK_{\cS,\cR}\bfK_{\cR,\cS}+\tau\bfK_{\cS,\cS})^{-1}\bfK_{\cS,\bfW_T}\bfy_T\\
    &\widetilde\sigma_T(w)=\frac{1}{\tau}\left(k(w,w)-\bfk^{\top}_{\cS}(w)\bfK^{-1}_{\cS,\cS}\bfK_{\cS,\cR}\left(\bfK_{\cR,\cS}\bfK^{-1}_{\cS,\cS}\bfK_{\cS,\cR}+\tau\mathbf{Id}\right)^{-1}\bfK_{\cR, \cS}\bfK^{-1}_{\cS,\cS}\bfk_{\cS}(w)\right)
\end{align*}

\end{lemma}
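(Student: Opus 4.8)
The plan is to reduce both quantities to the finite-dimensional ``kernel matrix'' form by repeatedly applying the push-through identity of Lemma~\ref{kernel_trick_mat_identity} together with the elementary identities $\bfK_{\cA,\cB}=\Phi_{\cA}^{\top}\Phi_{\cB}$, $\bfk_{\cS}(w)=\Phi_{\cS}^{\top}\phi(w)$, $k(w,w)=\phi^{\top}(w)\phi(w)$, and $\cP_{\cS}=\Phi_{\cS}\bfK_{\cS,\cS}^{-1}\Phi_{\cS}^{\top}$ (the last being legitimate because $\Phi_{\cS}^{\top}\Phi_{\cS}=\bfK_{\cS,\cS}$ is invertible when the feature vectors indexed by $\cS$ are linearly independent). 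Throughout I read the second display in the statement as $\widetilde\sigma_T^2(w):=\phi^{\top}(w)(\cP_{\cS}\Phi_{\cR}\Phi_{\cR}^{\top}\cP_{\cS}+\tau\mathbf{Id})^{-1}\phi(w)$, matching the definition of $\widetilde\sigma_{\max}$ in Lemma~\ref{Lemma:Appendix_estimator}.

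\textbf{The mean.} First I would note that $\cP_{\cS}\Phi_{\cR}\Phi_{\cR}^{\top}\cP_{\cS}=\Phi_{\cS}\bfM\Phi_{\cS}^{\top}$ with the \emph{finite} matrix $\bfM:=\bfK_{\cS,\cS}^{-1}\bfK_{\cS,\cR}\bfK_{\cR,\cS}\bfK_{\cS,\cS}^{-1}=(\bfK_{\cS,\cS}^{-1}\bfK_{\cS,\cR})(\bfK_{\cS,\cS}^{-1}\bfK_{\cS,\cR})^{\top}\succeq0$, so $\cP_{\cS}\Phi_{\cR}\Phi_{\cR}^{\top}\cP_{\cS}+\tau\mathbf{Id}$ is invertible for $\tau>0$. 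Applying Lemma~\ref{kernel_trick_mat_identity} with $\bfA=\Phi_{\cS}$ and $\bfB=\bfM\Phi_{\cS}^{\top}$ gives $(\Phi_{\cS}\bfM\Phi_{\cS}^{\top}+\tau\mathbf{Id})^{-1}\Phi_{\cS}=\Phi_{\cS}(\bfM\bfK_{\cS,\cS}+\tau\mathbf{I})^{-1}$. Substituting this and $\cP_{\cS}\Phi_{\bfW_T}=\Phi_{\cS}\bfK_{\cS,\cS}^{-1}\bfK_{\cS,\bfW_T}$ into the definition of $\widehat\mu_T$, and left-multiplying by $\phi^{\top}(w)$, yields
\begin{align*}
\widehat\mu_T(w)=\bfk_{\cS}^{\top}(w)(\bfM\bfK_{\cS,\cS}+\tau\mathbf{I})^{-1}\bfK_{\cS,\cS}^{-1}\bfK_{\cS,\bfW_T}\bfy_T.
\end{align*}
Finally I would use $\bfM\bfK_{\cS,\cS}=\bfK_{\cS,\cS}^{-1}\bfK_{\cS,\cR}\bfK_{\cR,\cS}$, so that $\bfM\bfK_{\cS,\cS}+\tau\mathbf{I}=\bfK_{\cS,\cS}^{-1}(\bfK_{\cS,\cR}\bfK_{\cR,\cS}+\tau\bfK_{\cS,\cS})$ and hence $(\bfM\bfK_{\cS,\cS}+\tau\mathbf{I})^{-1}\bfK_{\cS,\cS}^{-1}=(\bfK_{\cS,\cR}\bfK_{\cR,\cS}+\tau\bfK_{\cS,\cS})^{-1}$, which gives the claimed formula for $\widehat\mu_T(w)$.

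\textbf{The variance.} Here I would write $\cP_{\cS}\Phi_{\cR}=\Phi_{\cS}\bfN$ with $\bfN:=\bfK_{\cS,\cS}^{-1}\bfK_{\cS,\cR}$, so $\cP_{\cS}\Phi_{\cR}\Phi_{\cR}^{\top}\cP_{\cS}+\tau\mathbf{Id}=\tau\mathbf{Id}+\bfU\bfU^{\top}$ with $\bfU:=\Phi_{\cS}\bfN$ of finite rank, and apply the Woodbury identity $(\tau\mathbf{Id}+\bfU\bfU^{\top})^{-1}=\tfrac1\tau\big(\mathbf{Id}-\bfU(\tau\mathbf{I}+\bfU^{\top}\bfU)^{-1}\bfU^{\top}\big)$ (checked directly by multiplying both sides by $\tau\mathbf{Id}+\bfU\bfU^{\top}$, or deduced from Lemma~\ref{kernel_trick_mat_identity}). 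Using $\bfU^{\top}\bfU=\bfN^{\top}\bfK_{\cS,\cS}\bfN=\bfK_{\cR,\cS}\bfK_{\cS,\cS}^{-1}\bfK_{\cS,\cR}$, $\bfU^{\top}\phi(w)=\bfN^{\top}\bfk_{\cS}(w)$, and $\phi^{\top}(w)\phi(w)=k(w,w)$, I would sandwich the identity between $\phi^{\top}(w)$ and $\phi(w)$ to obtain
\begin{align*}
\widetilde\sigma_T^2(w)=\tfrac1\tau\Big(k(w,w)-\bfk_{\cS}^{\top}(w)\bfK_{\cS,\cS}^{-1}\bfK_{\cS,\cR}\big(\bfK_{\cR,\cS}\bfK_{\cS,\cS}^{-1}\bfK_{\cS,\cR}+\tau\mathbf{Id}\big)^{-1}\bfK_{\cR,\cS}\bfK_{\cS,\cS}^{-1}\bfk_{\cS}(w)\Big),
\end{align*}
which is exactly the asserted expression after substituting $\bfN=\bfK_{\cS,\cS}^{-1}\bfK_{\cS,\cR}$ and $\bfN^{\top}=\bfK_{\cR,\cS}\bfK_{\cS,\cS}^{-1}$.

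\textbf{Main obstacle.} The computation is essentially mechanical; the only points needing care are (i) verifying that each finite matrix being inverted --- $\bfK_{\cS,\cS}$, $\bfM\bfK_{\cS,\cS}+\tau\mathbf{I}$, and $\bfK_{\cR,\cS}\bfK_{\cS,\cS}^{-1}\bfK_{\cS,\cR}+\tau\mathbf{Id}$ --- is invertible (positive definiteness of $k$ on the distinct points of $\cS$, together with $\tau>0$), and (ii) checking that the hypotheses of Lemma~\ref{kernel_trick_mat_identity} and of the Woodbury identity are satisfied when $\bfA$ and $\bfU$ are viewed as bounded operators $\R^{|\cS|}\to\cH_k$ rather than square matrices --- this holds because $\bfA\bfB$ (resp.\ $\bfU\bfU^{\top}$) is a finite-rank positive operator, so $\bfA\bfB+\tau\mathbf{Id}$ is invertible and $\bfA\bfB$, $\bfB\bfA$ share the same nonzero spectrum, exactly as in the proof of Lemma~\ref{kernel_trick_mat_identity}. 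No genuinely hard step is involved.
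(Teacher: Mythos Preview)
Your proposal is correct and follows essentially the same approach as the paper: both rely on the push-through identity (Lemma~\ref{kernel_trick_mat_identity}) together with the parametric form $\cP_{\cS}=\Phi_{\cS}\bfK_{\cS,\cS}^{-1}\Phi_{\cS}^{\top}$, and your variance derivation via Woodbury is identical to the paper's. Your mean computation is in fact a bit more streamlined---the paper first expands $\tau\tildeZ_{\cS}^{-1}=\mathbf{Id}-\tildeZ_{\cS}^{-1}\cP_{\cS}\Phi_{\cR}\Phi_{\cR}^{\top}\cP_{\cS}$ and then applies the push-through with $\bfA=\cP_{\cS}\Phi_{\cR}$, whereas you apply it once directly with $\bfA=\Phi_{\cS}$, $\bfB=\bfM\Phi_{\cS}^{\top}$---but the underlying mechanism is the same.
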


\begin{proof}
    
\begin{align}
\widehat\mu_T(w)&=\phi^{\top}(w)(\cP_{\cS}\Phi_{\cR}\Phi^{\top}_{\cR}\cP_{\cS}+\tau\mathbf{Id})^{-1}\cP_{\cS}\Phi_{\bfW_T}\bfy_T=\\
    &=\frac{1}{\tau}\phi^{\top}(w)\tildeZ^{-1}_{\cS}(\tildeZ_{\cS}-\cP_{\cS}\Phi_{\cR}\Phi^{\top}_{\cR}\cP_{\cS})\Phi_{\cS}\bfK^{-1}_{\cS,\cS}\bfK_{\cS,\bfW_T}\bfy_T=\\
    &=\frac{1}{\tau}\left(\bfk_{\cS}(w)-\phi^{\top}(w)\tildeZ^{- 1}_{\cS}\cP_{\cS}\Phi_{\cR}\Phi^{\top}_{\cR}\cP_{\cS}\Phi_{\cS}\right)\bfK^{-1}_{\cS,\cS}\bfK_{\cS,\bfW_T}\bfy_T=\\
    &=\frac{1}{\tau}\left(\bfk_{\cS}(w)-\phi^{\top}(w)\cP_{\cS}\Phi_{\cR}(\bfK_{\cR,\cS}\bfK^{-1}_{\cS,\cS}\bfK_{\cS,\cR}+\tau\mathbf{I}_{\cR})^{-1}\bfK_{\cR,\cS}\right)\bfK^{-1}_{\cS,\cS}\bfK_{\cS,\bfW_T}\bfy_T=\nonumber\\
    &=\frac{1}{\tau}\bfk_{\cS}(w)\left(\mathbf{I}-\bfK^{-1}_{\cS,\cS}\bfK_{\cS,\cR}(\bfK_{\cR,\cS}\bfK^{-1}_{\cS,\cS}\bfK_{\cS,\cR}+\tau\mathbf{I}_{\cR})^{-1}\bfK_{\cR,\cS}\right)\bfK^{-1}_{\cS,\cS}\bfK_{\cS,\bfW_T}\bfy_T=\nonumber\\
    &=\frac{1}{\tau}\bfk_{\cS}(w)\left(\mathbf{I}_{\cS}-\bfK^{-1}_{\cS,\cS}\bfK_{\cS,\cR}\bfK_{\cR,\cS}(\bfK^{-1}_{\cS,\cS}\bfK_{\cS,\cR}\bfK_{\cR,\cS}+\tau\mathbf{I}_{\cS})^{-1}\right)\bfK^{-1}_{\cS,\cS}\bfK_{\cS,\bfW_T}\bfy_T=\nonumber\\
     &=\frac{1}{\tau}\bfk_{\cS}(w)\left(\mathbf{I}_{\cS}-\bfK^{-1}_{\cS,\cS}\bfK_{\cS,\cR}\bfK_{\cR,\cS}(\bfK^{-1}_{\cS,\cS}\bfK_{\cS,\cR}\bfK_{\cR,\cS}+\tau\mathbf{I}_{\cS})^{-1}\right)\bfK^{-1}_{\cS,\cS}\bfK_{\cS,\bfW_T}\bfy_T=\nonumber\\
     &=\bfk_{\cS}(w)(\bfK^{-1}_{\cS,\cS}\bfK_{\cS,\cR}\bfK_{\cR,\cS}+\tau\mathbf{I}_{\cS})^{-1}\bfK^{-1}_{\cS,\cS}\bfK_{\cS,\bfW_T}\bfy_T=\\
     &=\bfk_{\cS}(w)(\bfK_{\cS,\cR}\bfK_{\cR,\cS}+\tau\bfK_{\cS,\cS})^{-1}\bfK_{\cS,\bfW_T}\bfy_T\\
      \widetilde{\sigma}^2_T(w)&=\phi^{\top}(w)(\cP_{\cS}\Phi_{\cR}\Phi^{\top}_{\cR}\cP_{\cS}+\tau\mathbf{Id})^{-1}\phi(w)=\\
        &=\frac{1}{\tau}(k(w,w)-\phi^{\top}(w)\left(\cP_{\cS}\Phi_{\cR}\Phi^{\top}_{\cR}\cP_{\cS}+\tau\mathbf{Id}\right)^{-1}\cP_{\cS}\Phi_{\cR}\Phi^{\top}_{\cR}\cP_{\cS}\phi(w))=\\
        &=\frac{1}{\tau}\left(k(w,w)-\phi^{\top}(w)\cP_{\cS}\Phi_{\cR}\left(\Phi^{\top}_{\cR}\cP_{\cS}\Phi_{\cR}+\tau\mathbf{Id}\right)^{-1}\Phi^{\top}_{\cR}\cP_{\cS}\phi(w)\right)=\\
        &=\frac{1}{\tau}\left(k(w,w)-\bfk^{\top}_{\cS}(w)\bfK^{-1}_{\cS,\cS}\bfK_{\cS,\cR}\left(\bfK_{\cR,\cS}\bfK^{-1}_{\cS,\cS}\bfK_{\cS,\cR}+\tau\mathbf{Id}\right)^{-1}\bfK_{\cR, \cS}\bfK^{-1}_{\cS,\cS}\bfk_{\cS}(w)\right)\nonumber
\end{align}
In the above derivation we used Lemma \ref{kernel_trick_mat_identity} along with the parametric form of the projection operator $\cP_{\cS}=\Phi_{\cS}\bfK^{-1}_{\cS,\cS}\Phi^{\top}_{\cS}$ 
\end{proof}

\newpage

\section{Appendix C. \textsc{CAPRI} algorithm}\label{App:algortihm}

In this section we present the pseudo-code of \textsc{CAPRI} algorithm.\\

\begin{algorithm}[H]
\caption{\textsc{CAPRI}}
\begin{algorithmic}[1]\label{Algortihm:Code}
    \STATE \textbf{Input}: Error probability $\delta_{\textsc{err}}$, Privacy parameters $(\varepsilon_{\textsc{DP}}, \delta_{\DP})$, 
    \STATE $r\leftarrow 1, g \leftarrow 0, t_{\text{prev}} \leftarrow 0, \cS, \cR \leftarrow \emptyset, T_1 \leftarrow\lceil\sqrt{T} \rceil $
    \STATE $\cX_1(c) = \cX \ \forall c \in \cC$ 
    \FOR{$s = 1,2,\dots, T_1$}
    \STATE Sample $c, c' \sim \kappa$
    \STATE Sample $x \sim \mathrm{Unif}(\cX_1(c)), x' \sim \mathrm{Unif}(\cX_1(c'))$
    \STATE $\cS \leftarrow \cS \cup \{(c,x)\}, \cR \leftarrow \cR \cup \{(c',x')\}$
    \ENDFOR
    \STATE Compute $\sigma_{\max}$ using Eqn.~\eqref{eqn:projected_variance}
    \FOR{$t = 1,2,\dots, T$}
    \STATE Receive context $c_t$ 
    \STATE Take the action $x_t \sim \mathrm{Unif}(\cX_r(c_t))$ and observe reward $y_t$
    % \STATE Sample  $ 
    % v\sim  \cN\left(0, \mathbf{I}_{t}4B\log{T}\sqrt{\frac{5}{3}\log(1.25\log{T}/\delta_{\textsc{DP}})}\sigma_{\max}/\varepsilon_{\textsc{DP}}\right)$
    \STATE $g^{(t)} \leftarrow y_t(\bfK_{\cS,\cR}\bfK_{\cR,\cS}+\tau\bfK_{\cS,\cS})^{-1/2}k_{\cS}((c_t,x_t))$
    \STATE \blue{$g^{(t)} \leftarrow g^{(t)} + \cN(0, \sigma_0^2 \cdot \bfI_{T_r})$} \hfill \blue{\texttt{// Only for LDP}}
    \STATE $g \leftarrow g + g^{(t)}$
    \IF{$t - t_{\text{prev}} = T_r$}
    \STATE \red{$g \leftarrow g + \cN(0, \sigma_0^2\cdot\bfI_{T_r})$} \hfill \red{\texttt{// Only for JDP}}
    \STATE $\widehat{\mu}_r(\cdot) := k^{\top}_{\cS}(\cdot)(\bfK_{\cS,\cR}\bfK_{\cR,\cS}+\tau\bfK_{\cS,\cS})^{-1/2}g$
    \STATE Obtain $\cX_{r+1}(c)$ for all $c \in \cC$ using Eqn.~\eqref{eqn:prunning_sets}
   	\STATE $\cS, \cR \leftarrow \emptyset, g \leftarrow 0, t_{\text{prev}} \leftarrow t, T_{r+1} \leftarrow 2T_r, r \leftarrow r+1$
   	\FOR{$s = 1,2,\dots, T_r$}
    \STATE Sample $c, c' \sim \kappa$
    \STATE Sample $x \sim \mathrm{Unif}(\cX_r(c)), x' \sim \mathrm{Unif}(\cX_r(c'))$
    \STATE $\cS \leftarrow \cS \cup \{(c,x)\}, \cR \leftarrow \cR \cup \{(c',x')\}$
    \ENDFOR
    \STATE Compute $\widetilde\sigma_{\max}$ using Eqn.~\eqref{eqn:projected_variance}
    \ENDIF
    \ENDFOR
\end{algorithmic}
\end{algorithm}

The \textsc{CAPRI} algorithm is based on the \emph{explore-and-eliminate} philosophy. It operates in epochs that progressively double in length. For each epoch $r$, \textsc{CAPRI} maintains the set of active actions, $\cX_r(c) \subseteq \cX$, corresponding to each context $c \in \cC$. These sets are initialized to $\cX_1(c) = \cX$ for all $c \in \cC$. At the beginning of each epoch $r$, the algorithm has access to two sets $\cR_r$ and $\cS_r$ consisting of $T_r$ context-action pairs, where $T_r$ is the length of the $r^{\text{th}}$ epoch. During the $r^{\text{th}}$ epoch and time instant $t$, the algorithm observes the context $c_t$ and samples $x_t$ uniformly from the set $\cX_r(c_t)$. It is straightforward to note that the points generated during each epoch are independent and identically distributed. It uses $\cR_r$ and $\cS_r$ to construct $\widehat{\underline{\mu}}_r$ as outlined in Eqn.~\eqref{eqn:private_estimator} and update the action sets for the $r+1$-th epoch:
\begin{align}
	&\cX_{r+1}(c)= \left\{x\in \cX_{r}(c) \ \bigg| \  \widehat{\underline{\mu}}_r(c,x)\geq \max_{x\in \cX_{r}(c)}\widehat{\underline{\mu}}_r(c,x)-4\Delta_r\right\} \nonumber \\
&\Delta_r :=\beta\left(\frac{\delta}{|\cW|T\log{T}}\right)\widetilde\sigma_{r,\max}+\beta_1\left(\varepsilon_{\textsc{DP}},\delta_{\textsc{DP}},\frac{\delta}{|\cW|T\log{T}}\right)\widetilde\sigma^2_{r,\max}
\end{align}
where $\beta_1=\frac{\log{T}\sqrt{8B\log(\log{T}|\cW|/\delta)\log(1.25\log{T}/\delta_{\textsc{DP}})}}{\varepsilon_{\textsc{DP}}}$, $\beta_{1,\textsc{LDP}}=\sqrt{T_r}\beta_1$ and $\beta$ is the confidence parameter(for a full expression please see Lemma~\ref{Lemma:Appendix_estimator}).
At the end of the $r^{\text{th}}$ epoch, \textsc{CAPRI} constructs the collections $\cR_{r+1}$ and $\cS_{r+1}$, to be used in the next epoch. Both sets consist of i.i.d. $T_{r+1}$ context-action pairs obtained by first drawing $c \sim \kappa$ from the context generator and then drawing the corresponding action $x$ uniformly from the set $\cX_{r+1}(c)$. For more details on the algortihm please see Sec.\ref{App:algortihm}.

The algorithm is identical under both JDP and LDP, except for the construction of the estimator which is used to update the action sets $\cX_r(c)$. We  point out that as shown in pseudocode outlined in Alg.\ref{Algortihm:Code}, the construction of the estimator under JDP and LDP is carried out differently to satisfy the privacy constraint. In addition to different to noise addition schedules, confidence parameter $\beta_1$ is designed to reflect  greater uncertainty in $\mu_{1, \text{LDP}}$. Namely we set $\beta_{1,\textsc{LDP}}=\sqrt{T_r}\beta_1$.

\newpage

\newpage

\end{document}